\newcommand{\rb}{\mathbf{r}}
\newcommand{\ub}{\mathbf{u}}
\newcommand{\vb}{\mathbf{v}}
\newcommand{\Ab}{\mathbf{A}}
\newcommand{\Db}{\mathbf{D}}
\newcommand{\Hb}{\mathbf{H}}
\newcommand{\Ib}{\mathbf{I}}
\newcommand{\Lb}{\mathbf{L}}
\newcommand{\Pb}{\mathbf{P}}
\newcommand{\Qb}{\mathbf{Q}}
\newcommand{\Rb}{\mathbf{R}}
\newcommand{\Tb}{\mathbf{T}}
\newcommand{\Ub}{\mathbf{U}}
\newcommand{\Vb}{\mathbf{V}}
\newcommand{\Xb}{\mathbf{X}}
\newcommand{\Yb}{\mathbf{Y}}
\newcommand{\cH}{\mathcal{H}}
\newcommand{\cN}{\mathcal{N}}
\newcommand{\PP}{\mathbb{P}}
\newcommand{\RR}{\mathbb{R}}
\newcommand{\bzeta}{\boldsymbol{\zeta}}
\newcommand{\biota}{\boldsymbol{\iota}}
\newcommand{\bxi}{\boldsymbol{\xi}}
\newcommand{\bphi}{\boldsymbol{\phi}}
\newcommand{\bpsi}{\boldsymbol{\psi}}
\newcommand{\bSigma}{\boldsymbol{\Sigma}}
\newcommand{\bPhi}{\boldsymbol{\Phi}}
\newcommand{\vectorize}{\mathop{\mathrm{vec}}}
\newcommand{\rank}{\mathop{\mathrm{rank}}}
\newcommand{\abs}[1]{\left|#1\right|}
\newcommand{\norm}[1]{\left\lVert#1\right\rVert}
\newcommand{\inner}[2]{\left\langle {#1},{#2} \right\rangle}
\newcommand{\frob}[1]{\norm{#1}_\mathrm{F}}
\DeclareMathOperator{\colspan}{{\mathrm{col}}}
\DeclareMathOperator{\rowspan}{{\mathrm{row}}}
\DeclareMathOperator{\cond}{cond}
\newcommand{\dout}{d_\mathrm{out}}
\newcommand{\trb}{\tilde{\rb}}
\newcommand{\tRb}{\tilde{\Rb}}
\newcommand{\tL}{\tilde{L}}
\newcommand{\tmu}{\tilde{\mu}}
\newtheorem{theorem}{Theorem}
\newtheorem{lemma}{Lemma}
\newtheorem{proposition}{Proposition}
\newtheorem{remark}{Remark}
\newtheorem{corollary}{Corollary}
\newtheorem{assumption}{Assumption}
\title{Provable Acceleration of Nesterov's Accelerated Gradient for Rectangular Matrix Factorization and Linear Neural Networks}
\author{%
    Zhenghao Xu \\
    Georgia Institute of Technology\\
    \texttt{zhenghaoxu@gatech.edu}
    \And
    Yuqing Wang\\
    University of California, Berkeley\\
    \texttt{yq.wang@berkeley.edu}
    \AND
    Tuo Zhao \\
    Georgia Institute of Technology\\
    \texttt{tourzhao@gatech.edu}
    \And
    Rachel Ward\\
    University of Texas at Austin\\
    \texttt{rward@math.utexas.edu}
    \And
    Molei Tao \\
    Georgia Institute of Technology\\
    \texttt{mtao@gatech.edu}
    }
\begin{document}

\maketitle

\begin{abstract}
We study the convergence rate of first-order methods for rectangular matrix factorization, which is a canonical nonconvex optimization problem. Specifically, given a rank-$r$ matrix $\mathbf{A}\in\mathbb{R}^{m\times n}$, we prove that gradient descent (GD) can find a pair of $\epsilon$-optimal solutions $\mathbf{X}_T\in\mathbb{R}^{m\times d}$ and $\mathbf{Y}_T\in\mathbb{R}^{n\times d}$, where $d\geq r$, satisfying $\lVert\mathbf{X}_T\mathbf{Y}_T^\top-\mathbf{A}\rVert_\mathrm{F}\leq\epsilon\lVert\mathbf{A}\rVert_\mathrm{F}$ in $T=O(\kappa^2\log\frac{1}{\epsilon})$ iterations with high probability, where $\kappa$ denotes the condition number of $\mathbf{A}$. Furthermore, we prove that Nesterov's accelerated gradient (NAG) attains an iteration complexity of $O(\kappa\log\frac{1}{\epsilon})$, which is the best-known bound of first-order methods for rectangular matrix factorization. Different from small balanced random initialization in the existing literature, we adopt an unbalanced initialization, where $\mathbf{X}_0$ is large and $\mathbf{Y}_0$ is $0$. Moreover, our initialization and analysis can be further extended to linear neural networks, where we prove that NAG can also attain an accelerated linear convergence rate. In particular, we only require the width of the network to be greater than or equal to the rank of the output label matrix. In contrast, previous results achieving the same rate require excessive widths that additionally depend on the condition number and the rank of the input data matrix.
\end{abstract}

\section{Introduction}\label{sec:intro}
Nonconvex optimization is pervasive in the training of modern machine learning models. 
Despite the success of first-order methods in practice, theoretical understanding of their convergence properties is limited even for simple nonconvex problems. 
Take the rectangular low-rank matrix factorization problem as an example, which is a canonical nonconvex problem:   
\begin{align}\label{eq:MF}
    \min_{\substack{\Xb\in\RR^{m\times d}, \Yb\in\RR^{n\times d}}}~ f(\Xb, \Yb)=\frac{1}{2}\frob{\Ab-\Xb\Yb^\top}^2, 
\end{align}
where we solve for two small matrices $\Xb\in\RR^{m\times d}$ and $\Yb\in\RR^{n\times d}$ to approximate a big rank-$r$ target matrix $\Ab\in\RR^{m\times n}$ with $r\ll\min(m,n)$ and $m,n$ not necessarily equal.
Specifically, we consider the over-parameterized regime where $d\geq r$, so that the global minimum of \eqref{eq:MF} is zero. 
While various direct methods exist for solving \eqref{eq:MF}, we focus on understanding the global convergence behaviors of first-order methods applied to such a nonconvex problem, with the motivation of gathering insight into the training dynamics of neural networks. 

Most existing results study the simplest first-order method, gradient descent (GD), under different initialization schemes. Note that the initialization scheme matters to convergence analysis\footnote{There are some works \citep{wang2021large,wang2023good} proving convergence of GD for general initialization under large learning rate and similar objective functions, 
but nonasymptotic convergence analysis is very challenging and highly dependent on initialization.}, due to the fact that \eqref{eq:MF} is a nonconvex and nonsmooth\footnote{Here, the nonsmoothness refers to the lack of uniform Lipschitz constant for the gradient in the full domain.} optimization problem. 
Thus, proper initialization is important for the fast convergence rates of first-order methods. 
\cite{ye2021global} show that with small Gaussian random initialization, GD can find $\Xb_T$ and $\Yb_T$ such that $f(\Xb_T,\Yb_T)\leq\epsilon$ in $T=O(d^4(m+n)^2\kappa^4\log\frac{1}{\epsilon})$ iterations with high probability, where $\kappa$ denotes the condition number. 
\cite{jiang2023algorithmic} improve this result to $O(\kappa^3\log\frac{1}{\epsilon})$ which has no explicit dimensional dependence on $m$ and $n$. 
These analyses rely on balanced initialization where entries of $\Xb_0$ and $\Yb_0$ have the same variance so that the iterates are guaranteed to stay in a smooth region. 

Moreover, we remark that to the best of our knowledge, we are not aware of any existing theoretical results on rectangular matrix factorization analyzing the global convergence rate of more advanced first-order methods such as Nesterov's accelerated gradient (NAG), which has been proved to achieve faster rates for smooth convex optimization problems \citep{nesterov2013introductory}. 

Recently, \cite{ward2023convergence} showed that by using an unbalanced random initialization where $\Xb_0$ is larger than $\Yb_0$, alternating gradient descent (AltGD) that alternatingly optimizes $\Xb_t$ and $\Yb_t$ via gradient steps can achieve $O(d^2(d-r+1)^{-2}\kappa^2\log\frac{1}{\epsilon})$ iteration complexity. 
However, their analysis is specifically designed for AltGD and not applicable to GD, let alone more advanced methods such as NAG which are nevertheless widely used in machine learning practice.
Two questions naturally arise here: 

    Q1: {\it Can GD achieve the same convergence rate as AltGD for \eqref{eq:MF}?}
    
    Q2: {\it Can more advanced first-order methods (e.g., NAG) achieve faster convergence rate for \eqref{eq:MF}?}

$\bullet$ \textbf{Main Results}. We answer the two questions above affirmatively by developing a new theory on first-order methods for \eqref{eq:MF}. Specifically, we consider
an unbalanced initialization scheme $\Xb_0=c\Ab\bPhi$ and $\Yb_0=0$, where $c>0$ is a large constant and $\bPhi$ is a Gaussian random matrix. Note that our initialization of $\Xb_0$ is the same as that in \cite{ward2023convergence}, but they initialized $\Yb_0$ using a small Gaussian random matrix. 
This modification is mainly for simpler analysis and makes little difference in practice. 
Under our new initialization scheme, we first prove an $O(d^2(d-r+1)^{-2}\kappa^2\log\frac{1}{\epsilon})$ iteration complexity for GD (\Cref{thm:MF-GD}), matching that of AltGD in \cite{ward2023convergence}.  
Our analysis is based on a new theoretical framework different from \cite{ward2023convergence} and can be further extended to analyzing NAG. 
We then show that NAG can attain a provable acceleration with an $O(d(d-r+1)^{-1}\kappa\log\frac{1}{\epsilon})$ iteration complexity (\Cref{thm:MF-NAG}). 
We discuss the tightness of our results (\Cref{rmk:tightness}) and conduct numerical experiments for validation (\Cref{sec:experiment}). 
Empirically, we observe that NAG exhibits a much faster rate than GD and our bounds are quite tight.

Our analysis technique can also be applied to linear neural networks. 
We consider unbalanced initialization similar to the one for \eqref{eq:MF}. 
We show that NAG can achieve an accelerated convergence rate for each overparameterization level (\Cref{cor:LNN-NAG-1,cor:LNN-NAG-2,cor:LNN-NAG-3}), under the commonly adopted interpolation assumption (\Cref{asm:interpolation}, see e.g. \citealt{du2019width}). 
In particular, we only require the network width to be greater than the rank of the output matrix. 

$\bullet$ \textbf{Additional Related Work}. For matrix factorization, there is a large body of works focusing on the \emph{symmetric} case, where $\Ab$ is positive semidefinite and $\Ab=\Xb\Xb^\top$ \citep{bhojanapalli2016dropping,li2018algorithmic}. 
However, these analyses are difficult to generalize to the rectangular case \eqref{eq:MF} due to the additional unbalanced scaling issue\footnote{In the symmetric case, the solution's uniqueness is up to rotation, whereas in \eqref{eq:MF} it is also up to scaling.}. 
To overcome this, additional \emph{balancing regularization} is often required \citep{tu2016low,park2017non,zhang2021general,bi2022local}, which changes the objective function in \eqref{eq:MF}. 
\cite{du2018algorithmic} show that GD can automatically balance the two factors hence explicit regularization is not necessary, but they only establish linear convergence rate for rank-$1$ matrix and cannot generalize to rank-$r$ case. 
Some other works remove this regularization for the general matrix sensing problem and show linear convergence rate for general ranks \citep{ma2021beyond,tong2021accelerating,tong2021low}. 
These results do not directly apply to our setting as they require {singular value decomposition (SVD) at initialization}, which consumes roughly the same amount of computation as solving \eqref{eq:MF}. 
Moreover, these works only consider \emph{exact parameterization} ($d=r$), leaving out the overparameterization regime ($d>r$). 
Overparameterization may heavily slow down convergence due to the possible singularity of iterates \citep{stoger2021small}, thus some works consider preconditioning for acceleration \citep{zhang2023preconditioned,xu2023power}. 
These preconditioned methods are specifically tailored to symmetric factorization and are not directly comparable with the first-order methods we consider, as their algorithms not only use the gradient. 
For algorithmic acceleration, \cite{zhou2020accelerated} first propose a computationally tractable modified Nesterov's method for general loss function $f$ that is $L$-smooth $\mu$-strongly convex to the product $\Xb\Yb^\top$. 
However, their method still requires balancing regularization when applied to rectangular matrices, and SVD-based initialization that dominates the computation. Moreover, their acceleration pertains to the condition number $L/\mu$ of the loss function rather than $\kappa$ of the target matrix, on which their dependence is $O(\kappa^2)$.

For linear neural networks, \cite{du2019width} and \cite{hu2020provable} show linear convergence of GD with Gaussian and orthogonal initialization respectively, and \cite{min2021explicit} studies convergence rate of gradient flow (GF) for unbalanced initialization. 
\cite{wang2021modular} show that Polyak's heavy ball (HB) method \citep{polyak1964some} attains accelerated convergence rate with orthogonal initialization. 
\cite{liu2022convergence} further investigate NAG and show a similar accelerated rate for Gaussian initialization. 
All these previous works consider sufficiently wide networks that depend on the output dimension, the rank, and the condition number of input. 
The results are summarized in \Cref{tab:LNN}.

\begin{table}[htb!]
    \caption{Results for linear neural networks. All results in table are based on the assumption $\Lb=\Ab\Db$ for some $\Ab$ with $\cond(\Ab)=O(1)$, where $\Db$ denotes the input data, $\Lb$ denotes the output data, $\dout$ denotes the output dimension, $\delta$ denote the failure probability, $r=\rank(\Db)$, $\overline{r}=\rank(\Lb)$, $\tilde{r}=\frob{\Db}^2/\norm{\Db}^2$, $\kappa=\cond^2(\Db)$, $\kappa_1=O(\kappa^2)$, $\kappa_2=O(\kappa)$. 
    }
    \centering
    \begin{tabular}{llll}
        \toprule
        Algorithm               & Initialization    & Width         & Rate \\
        \midrule
        GD \citep{du2019width}   & Gaussian          & $\Omega\left(r\kappa^3(\dout+\log\frac{r}{\delta})\right)$ & $(1-\frac{3}{4\kappa})^t$   \\
        GD \citep{hu2020provable}    & Orthogonal    & $\Omega\left(\tilde{r}\kappa^2(\dout+\log\frac{r}{\delta})\right)$ & $(1-\frac{1}{4\kappa})^t$ \\
        HB \citep{wang2021modular}   & Orthogonal    & $\Omega\left(\frac{\kappa^5}{\norm{\Db}^2}(\dout+\log\frac{r}{\delta})\right)$ & $(1-\frac{1}{4\sqrt{\kappa}})^t$ \\
        NAG \citep{liu2022convergence} & Gaussian   & $\Omega\left(r\kappa^5(\dout+\log\frac{r}{\delta})\right)$ & $(1-\frac{1}{2\sqrt{\kappa}})^t$ \\
        NAG (ours, \Cref{cor:LNN-NAG-1})       & Unbalanced \eqref{eq:init-LNN-1}         & $\geq\overline{r}+\Omega(\log\frac{1}{\delta})$  & $(1-\frac{1}{2\sqrt{\kappa_1}})^t$ \\
        NAG (ours, \Cref{cor:LNN-NAG-2})       & Unbalanced+Orth \eqref{eq:init-LNN-2}    & $\geq\overline{r}$ & $(1-\frac{1}{2\sqrt{\kappa}})^t$ \\
        NAG (ours, \Cref{cor:LNN-NAG-3})       & Unbalanced \eqref{eq:init-LNN-3}        & $\geq\dout+\Omega(\log\frac{1}{\delta})$  & $(1-\frac{1}{2\sqrt{\kappa_2}})^t$ \\
        \bottomrule
    \end{tabular}
    \label{tab:LNN}
\end{table}

$\bullet$ \textbf{Notations}. Throughout this paper, $\norm{\cdot}$ denotes the Euclidean norm of a vector or the spectral norm of a matrix, and $\frob{\cdot}$ denotes the Frobenius norm of a matrix. 
For any matrix, $\sigma_i(\cdot)$ denotes its $i$-th largest singular value. 
For a square matrix, $\lambda_i(\cdot)$ denotes its $i$-th largest eigenvalue. 
For a nonzero positive semidefinite matrix, $\lambda_{\max}(\cdot)$ and $\lambda_{\min}(\cdot)$ denote its largest and smallest nonzero eigenvalues respectively. 
For a matrix $\Xb$, we use $\colspan(\Xb)$ to denote its column space, $\ker(\Xb)$ to denote its kernel space and define $\cond(\Xb)\coloneqq\norm{\Xb}\norm{\Xb^\dagger}$ as its condition number, where $\Xb^\dagger$ denotes the pseudoinverse of $\Xb$. 
For any positive integer $n$, $\Ib_n$ denotes the identity matrix of size $n$. 
We use $\otimes$ to denote the Kronecker product between matrices, $\oplus$ to denote the direct sum of vector spaces, and $\vectorize(\cdot)$ to denote the column-first vectorization of a matrix. 
We use $\cN(\mu,\sigma^2)$ to denote Gaussian distribution with mean $\mu$ and variance $\sigma^2$.

\section{Results for Matrix Factorization}\label{sec:MF}

We start with formalizing our initialization scheme for matrix factorization problem \eqref{eq:MF}. 
Let $\bPhi\in\RR^{n\times d}$ be a Gaussian random matrix with i.i.d. entries $[\bPhi]_{i,j}\sim\cN(0,1/d)$. We initialize
\begin{align}\label{eq:init-MF}
    \Xb_0=c\Ab\bPhi, \quad \Yb_0=0,
\end{align}
where $c>0$ is a constant to be specified later. Typically, we require $c$ to be larger than a certain threshold, which depends on the dimensions, the extreme singular values of $\Ab$, and possibly the condition number of $\Xb_0$. 
We note that changing $c$ would not affect $\cond(\Xb_0)$, hence there is no recursive definition. 
As we mentioned, \eqref{eq:init-MF} is a modified version of the initialization in \cite{ward2023convergence}, where we replace the small random Gaussian matrix $\Yb_0$ by $0$ and choose $c$ independently of the step size. 
We set $\Yb_0=0$ mainly for simplicity, and our analysis can be extended to the case where $\Yb_0$ is a sufficiently small Gaussian random matrix. 
While the initialization of $\Xb_0$ differs from standard Gaussian initialization, it has the following interpretation:  
Suppose we start from $t=-1$ and let $\Xb_{-1}=c^\prime\bPhi^\prime$ and $\Yb_{-1}=c^{\prime\prime}\bPhi$ for some $0<c^\prime\ll c^{\prime\prime}\ll 1$ and Gaussian random matrix $\bPhi^\prime$, then by taking a gradient step with step size $c/c^{\prime\prime}$ we get $\Xb_0\approx c\Ab\bPhi$ and $\Yb_0\approx 0$. 
This initialization of $\Xb_0$ also coincides with the first step of randomized singular value decomposition, which is also referred to as sketching (see e.g. \citep{halko2011finding}). 

\subsection{Gradient Descent}
With initialization \eqref{eq:MF}, we can analyze the global convergence rates of various first-order methods. 
Consider gradient descent (GD) first. 
The gradient of the squared Frobenius error in \eqref{eq:MF} is given by 
\begin{align*}
    \nabla_{X}f(\Xb,\Yb)=(\Xb\Yb^\top-\Ab)\Yb,\quad
    &\nabla_{Y}f(\Xb,\Yb)=(\Xb\Yb^\top-\Ab)^\top\Xb.
\end{align*}
For $t\geq 0$, the GD update with constant step size $\eta>0$ is written as 
\begin{align}\label{eq:GD-MF}
    \begin{pmatrix}
        \Xb_{t+1}\\
        \Yb_{t+1}\\
    \end{pmatrix}
    &=\begin{pmatrix}
        \Xb_t-\eta(\Xb_t\Yb_t^\top-\Ab)\Yb_t\\
        \Yb_t-\eta(\Xb_t\Yb_t^\top-\Ab)^\top\Xb_t\\
    \end{pmatrix}. 
\end{align}
Let $\Rb_t\coloneqq\Xb_t\Yb_t^\top-\Ab$ denote the residual, then $f(\Xb_t,\Yb_t)=\frac{1}{2}\frob{\Rb_t}^2$. 
We have the following convergence rate for GD. 
\begin{theorem}[GD convergence rate]\label{thm:MF-GD}
    For $0<\tau<c_1$, denote $\delta=3e^{-(d-r+1)\cdot\min\{\log\frac{1}{c_1\tau}, c_2, \frac{1}{2}\}}$, where $c_1$ and $c_2$ are universal constants. Denote $L=\sigma_1^2(\Xb_0)$, $\mu=\sigma_r^2(\Xb_0)$. Let $\eta=\frac{2}{L+\mu}$, $c\geq \underline{c}\coloneqq\frac{\sqrt{d}\sigma_r(\Ab)}{12\tau(\sqrt{d}-\sqrt{r-1})}\sqrt{\frac{\cond^4(\Xb_0)\frob{\Ab}}{\cond^2(\Xb_0)-1}}$ be a sufficiently large constant. Then with $c$ plugged in initialization \eqref{eq:init-MF}, 
    GD returns $\Xb_t$ and $\Yb_t$ with probability at least $1-\delta$ such that 
    \begin{align*}
        \frob{\Rb_t}
        &\leq \frac{3c^2\sigma_1^2(\Ab)}{64 }\left(1-\frac{\mu}{L}\right)^t. 
    \end{align*}
    In particular, if $c=\underline{c}$, then GD finds $\frob{\Rb_T}\leq\epsilon\frob{\Ab}$ in 
    \begin{align*}
        T=O\left(\frac{d^2\kappa^2}{\tau^2(d-r+1)^2}\cdot\log\frac{C}{\epsilon}\right)
    \end{align*}
    iterations, where $C=\frac{27\tau^2(d-r+1)^2}{16d^2}\frac{\cond^4(\Xb_0)\kappa^2}{\cond^2(\Xb_0)-1}$.
\end{theorem}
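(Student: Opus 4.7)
The plan is to exploit the unbalanced initialization so that $\Xb_t$ barely moves from $\Xb_0$, reducing the nonconvex dynamics to near-gradient-descent on the strongly convex quadratic $\frac{1}{2}\frob{\Xb_0\Yb^\top - \Ab}^2$ in $\Yb$. Since $d\geq r$, the column spaces satisfy $\colspan(\Xb_0) = \colspan(\Ab)$ almost surely, so the residual $\Rb_t = \Xb_t\Yb_t^\top - \Ab$ stays inside $\colspan(\Xb_0)$, on which $\Xb_0\Xb_0^\top$ has eigenvalues in $[\mu, L]$, giving the target condition number $L/\mu = \cond^2(\Xb_0)$.

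First, I would analyze the initialization. Writing $\Ab = \Ub\bSigma\Vb^\top$ and $\tilde{\bPhi} := \Vb^\top\bPhi \in \RR^{r\times d}$, which is again Gaussian with entries $\cN(0,1/d)$, standard large- and small-ball concentration for short-wide Gaussian matrices yields on an event of probability at least $1-\delta$ that $\sigma_1(\tilde{\bPhi}) = O(1)$ and $\sigma_r(\tilde{\bPhi}) \gtrsim \tau(\sqrt{d}-\sqrt{r-1})/\sqrt{d}$; the small-ball estimate is what produces the $(c_1\tau)^{d-r+1}$ term in $\delta$. This translates into $\cond(\Xb_0)\leq\cond(\Ab)\cdot\cond(\tilde{\bPhi})$ and hence $L/\mu = O(\kappa^2 d^2/(\tau^2(d-r+1)^2))$ on the same event.

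Next, I would expand $\Xb_{t+1}\Yb_{t+1}^\top$ using the GD updates to obtain the residual recursion
\begin{align*}
    \Rb_{t+1} = (\Ib_m - \eta\Xb_t\Xb_t^\top)\Rb_t - \eta\Rb_t\Yb_t\Yb_t^\top + \eta^2\Rb_t\Yb_t\Xb_t^\top\Rb_t.
\end{align*}
With $\Xb_t$ replaced by $\Xb_0$, the leading operator $\Ib_m - \eta\Xb_0\Xb_0^\top$ contracts every vector in $\colspan(\Xb_0)$ by at most $(L-\mu)/(L+\mu)\leq 1-\mu/L$ under $\eta = 2/(L+\mu)$. The core argument is then a joint induction maintaining (I1) $\frob{\Rb_t}\leq \frac{3c^2\sigma_1^2(\Ab)}{64}(1-\mu/L)^t$, (I2) $\norm{\Xb_t - \Xb_0}$ small relative to $\sigma_r(\Xb_0)$, and (I3) $\norm{\Yb_t} = O(\sigma_1(\Ab)/\sigma_r(\Xb_0))$. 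Invariants (I2) and (I3) are derived by telescoping $\norm{\Xb_{t+1}-\Xb_t}\leq \eta\frob{\Rb_t}\norm{\Yb_t}$ and $\norm{\Yb_{t+1}-\Yb_t}\leq \eta\frob{\Rb_t}\norm{\Xb_t}$, both yielding geometric sums under (I1); the lower bound $c\geq \underline{c}$ is calibrated so that these drifts stay small enough to absorb the perturbations into the main contraction.

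The iteration complexity then follows by inverting (I1) with $c = \underline{c}$ and plugging in the initialization estimate for $L/\mu$. The main obstacle I anticipate is rigorously bounding the two perturbation terms $\eta\Rb_t\Yb_t\Yb_t^\top$, $\eta^2\Rb_t\Yb_t\Xb_t^\top\Rb_t$, together with the hidden perturbation $\eta(\Xb_0\Xb_0^\top - \Xb_t\Xb_t^\top)\Rb_t$ from replacing $\Xb_t$ by $\Xb_0$, so that their combined contribution is strictly dominated by the $\eta\mu\frob{\Rb_t}$ piece of the main contraction. This reduces to verifying an $O(1/c)$ scaling of both $\norm{\Yb_t}$ and $\norm{\Xb_t-\Xb_0}/\sigma_r(\Xb_0)$, for which the specific form of $\underline{c}$ in the theorem (with its explicit dependence on $\cond(\Xb_0)$ and $\frob{\Ab}$) provides exactly the necessary slack.
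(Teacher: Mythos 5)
Your proposal is essentially the paper's own proof: it uses the same unbalanced initialization analysis via $\Vb^\top\bPhi$ (the paper's Proposition~3), the same residual recursion (their Proposition~2, which you write in matrix form rather than vectorized), the same column-space-preservation argument to restrict to the contractive subspace (their Lemma~1), the same $(L-\mu)/(L+\mu)$ contraction under $\eta=2/(L+\mu)$ (their Lemma~2), and the same three-part induction controlling the residual decay and the drifts $\frob{\Xb_t-\Xb_0}$, $\frob{\Yb_t-\Yb_0}$ with $\underline{c}$ calibrated so the perturbations are absorbed (their Lemma~3 and the final bookkeeping). The only cosmetic difference is your invariant (I3) $\norm{\Yb_t}=O(\sigma_1(\Ab)/\sigma_r(\Xb_0))$, which is the natural tight scaling but not exactly what propagates through telescoping; the paper instead uses the slightly looser bound $\frob{\Yb_t-\Yb_0}\leq C_2$ with $C_2$ proportional to $\sigma_1(\Xb_0)$ times a small factor $p$, which is what the induction actually delivers.
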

\Cref{thm:MF-GD} shows that GD converges in $O(d^2(d-r+1)^{-2}\kappa^2\log\frac{1}{\epsilon})$ iterations with initialization \eqref{eq:init-MF}, and the constant prefactor does not have dependence on the ambient dimension $m$ and $n$. 
This matches the convergence rate for AltGD derived in \cite{ward2023convergence}. 
The step size $\frac{2}{L+\mu}$ is commonly used in optimization literature and leads to optimal convergence rate \citep{nesterov2013introductory}. 
While the bound on $\frob{\Rb_t}$ in \Cref{thm:MF-GD} does not explicitly depend on $\frob{\Ab}$, the norm still affects the convergence rate through the choice of $c$ defined in \eqref{eq:init-MF}. When $c=\underline{c}=O(\sqrt{\frob{\Ab}})$, the bound linearly depends on $\frob{\Ab}$. 
When $c>\underline{c}$, the bound linearly depends on $c^2$, which dominates $\frob{\Ab}$. 

\subsection{Nesterov's Accelerated Gradient}
We then consider Nesterov's accelerated gradient (NAG) method \citep{nesterov2013introductory} applied to \eqref{eq:MF}. 
We take the form of NAG that is originally designed for smooth strongly convex loss function $\ell$: 
\begin{align*}
    z_{t+1}=\Tilde{z}_t-\eta\nabla\ell(\Tilde{z}_t),\quad \Tilde{z}_{t+1}=z_{t+1}+\beta(z_{t+1}-z_t),
\end{align*}
where $\eta$ is the step size, $\beta$ is the momentum parameter, and $z$ or $\tilde{z}$ in our case consists of both $\Xb$ and $\Yb$. 
If we focus on the $\{\Tilde{z}_t\}$ sequence with $\tilde{z}_t=(\Xb_t,\Yb_t)$ 
and plug in the objective function in \eqref{eq:MF}, then with $\Xb_{-1}=\Xb_0$ and $\Yb_{-1}=\Yb_0$, the NAG update is given by 
\begin{align}\label{eq:NAG-MF}
    \begin{pmatrix}
        \Xb_{t+1}\\
        \Yb_{t+1}\\
    \end{pmatrix}
    &=\begin{pmatrix}
        (1+\beta)(\Xb_t-\eta\Rb_t\Yb_t)-\beta(\Xb_{t-1}-\eta\Rb_{t-1}\Yb_{t-1})\\
        (1+\beta)(\Yb_t-\eta\Rb_t^\top\Xb_t)-\beta(\Yb_{t-1}-\eta\Rb_{t-1}^\top\Xb_{t-1})\\
    \end{pmatrix}. 
\end{align}
We have the following convergence rate for NAG. 
\begin{theorem}[NAG convergence rate]\label{thm:MF-NAG}
    For $0<\tau<c_1$, define $\delta$ as in \Cref{thm:MF-GD}. Denote $L=\sigma_1^2(\Xb_0)$, $\mu=\sigma_r^2(\Xb_0)$. Let $\eta=\frac{1}{L}$, $\beta=\frac{\sqrt{L}-\sqrt{\mu}}{\sqrt{L}+\sqrt{\mu}}$, $c\geq \underline{c}\coloneqq 29\sqrt{\frac{d(2\sqrt{d}+\sqrt{r})\frob{\Ab}\cdot\kappa}{\tau^3(\sqrt{d}-\sqrt{r-1})^3\sigma_r^2(\Ab)}}$ be a constant. Then with $c$ plugged in initialization \eqref{eq:init-MF}, NAG returns $\Xb_t$ and $\Yb_t$ with probability at least $1-\delta$ such that 
    \begin{align*}
        \frob{\Rb_t}
        &\leq \frac{c^2\sigma_1^2(\Ab)}{64\cond(\Xb_0)}\left(1-\frac{\sqrt{\mu}}{2\sqrt{L}}\right)^t.
    \end{align*}
    In particular, if $c=\underline{c}$ then NAG finds $\frob{\Rb_T}\leq\epsilon\frob{\Ab}$ in 
    \begin{align*}
        T=O\left(\frac{d\kappa}{\tau(d-r+1)}\cdot\log\frac{C}{\epsilon}\right)
    \end{align*}
    iterations, where $C=\frac{841d(2\sqrt{d}+\sqrt{r})}{64\tau^3(\sqrt{d}-\sqrt{r-1})^3}\cdot\frac{\kappa^3}{\cond(\Xb_0)}$.
\end{theorem}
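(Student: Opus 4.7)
The central observation behind \Cref{thm:MF-NAG} is that the initialization \eqref{eq:init-MF} with $\Yb_0=0$ and $c$ large makes $\Xb_t$ a quasi-stationary factor: since $\Rb_0=-\Ab$ while $\norm{\Xb_0}=\Theta(c)$, the per-step $\Xb$-gradient $\eta\Rb_t\Yb_t$ is small, and the trajectory of \eqref{eq:NAG-MF} tracks NAG applied to the strongly convex quadratic $g(\Yb)=\tfrac12\frob{\Xb_0\Yb^\top-\Ab}^2$. The Hessian of $g$, namely $\Ib_d\otimes(\Xb_0^\top\Xb_0)$, has nonzero eigenvalues in $[\mu,L]=[\sigma_r^2(\Xb_0),\sigma_1^2(\Xb_0)]$, and the prescribed $\eta=1/L$, $\beta=(\sqrt{L}-\sqrt{\mu})/(\sqrt{L}+\sqrt{\mu})$ are precisely the textbook NAG parameters that achieve the $(1-\sqrt{\mu/L})^t$ rate on such a quadratic. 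The stated $(1-\sqrt{\mu}/(2\sqrt{L}))^t$ retains a factor-of-two slack to absorb the perturbation from $\Xb_t\neq\Xb_0$.

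The plan proceeds in three stages. First, I would use standard Gaussian matrix concentration on the projection $V^\top\bPhi\in\RR^{r\times d}$ (where $V$ carries the right singular vectors of $\Ab$) to condition on an event of probability $\geq 1-\delta$ on which the singular values of $\Ab\bPhi$ are comparable to those of $\Ab$ up to factors of $(\sqrt{d}\pm\sqrt{r-1})/\sqrt{d}$ and $\tau$; this controls $\cond(\Xb_0)\lesssim \kappa\cdot d/((d-r+1)\tau)$, so $\sqrt{L/\mu}$ matches the claimed iteration complexity. Second, I would run an induction on $t$ maintaining two invariants: (i) a neighborhood bound $\norm{\Xb_t-\Xb_0}\leq\alpha\ll\sigma_r(\Xb_0)$, and (ii) a NAG Lyapunov contraction
\[
\Phi_{t+1}\;\leq\;(1-\tfrac12\sqrt{\mu/L})\,\Phi_t,\qquad \Phi_t\;=\;g(\Yb_t)-g^\star+\tfrac{\mu}{2}\frob{\Vb_t-\Yb^\star}^2,
\]
where $\Vb_t$ is the auxiliary NAG estimate-sequence iterate and $\Yb^\star$ is the minimizer of $g$ in the relevant row subspace. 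Invariant (i) closes by telescoping: the cumulative Frobenius change in $\Xb$ is bounded by a geometric sum dominated by $\sqrt{\Phi_0}/\sigma_r(\Xb_0)\cdot(1-\beta)^{-1}$, which is $O(\kappa^{3/2}/(\tau^{3/2}c))$ and hence smaller than $\alpha$ once $c\geq\underline{c}$. Finally, extracting $\frob{\Rb_t}^2\leq 2\Phi_t\leq 2\Phi_0(1-\tfrac12\sqrt{\mu/L})^t$ and using $\Phi_0\lesssim c^2\sigma_1^2(\Ab)$ delivers the stated bound.

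The principal obstacle is the coupling between NAG's large momentum and the nonlinearity in $\Xb$. Because $\beta=1-O(\sqrt{\mu/L})$, the extrapolated iterate $\tilde z_t=z_t+\beta(z_t-z_{t-1})$ amplifies one-step deviations by $1/(1-\beta)=O(\sqrt{\cond(\Xb_0)})$, so any perturbation of the Hessian induced by $\Xb_t\neq\Xb_0$ is magnified by $\sqrt{L/\mu}$ in the recursion. A naive treatment would therefore lose acceleration and recover only the $(1-\mu/L)^t$ rate of \Cref{thm:MF-GD}. The remedy is to show that the perturbation enters the Lyapunov inequality as an error $E_t\leq(\tfrac12\sqrt{\mu/L})\Phi_t$ rather than $O(1)\cdot\Phi_t$; this forces the stronger threshold $\underline{c}=\Theta(\sqrt{\kappa/\tau^3})$---larger than its GD counterpart by a factor of order $\sqrt{\kappa/\tau}$---and is the source of the $\kappa^3/\cond(\Xb_0)$ factor in the prefactor $C$. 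Verifying this refined perturbation bound with the correct powers of $\kappa$, $\tau$, and $(d-r+1)$, and checking that the momentum-corrected $\Xb$-displacement still telescopes into invariant (i), is the arithmetic core of the argument.
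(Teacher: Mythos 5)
Your proposal takes a genuinely different route from the paper. The paper analyzes the \emph{residual dynamics} directly: it vectorizes $\Rb_t=\Xb_t\Yb_t^\top-\Ab$, writes the NAG update as a linear recursion $(\rb_{t+1},\rb_t)^\top = \Tb_\mathrm{NAG}(\rb_t,\rb_{t-1})^\top+(\bxi_t,0)^\top$ with a \emph{fixed} transition operator $\Tb_\mathrm{NAG}$ determined by $\Hb_0=\Ib_n\otimes(\Xb_0\Xb_0^\top)$, shows $\{\rb_t,\bxi_t\}$ stay in the subspace $\cH$ on which $\Tb_\mathrm{NAG}$ is a $(1-\sqrt{\mu/L})$-contraction, and closes an induction bounding the nonlinear error $\bxi_t=O(\theta^t)$ via Lemma 6's geometric-sum argument. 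You instead propose a Lyapunov/estimate-sequence argument for NAG applied to the frozen quadratic $g(\Yb)=\tfrac12\frob{\Xb_0\Yb^\top-\Ab}^2$, treating the drift of $\Xb_t$ as an inexactness term $E_t$ in the potential inequality. Both routes are plausible; the paper's buys you a clean separation between a time-independent linear operator and an additive perturbation, which means no machinery for inexact NAG is needed, and the $\kappa$-dependence comes out of a single spectral lemma (Lemma 4). Your route is closer to textbook NAG analysis and more transferable, but it requires carefully propagating an error term through the estimate sequence under a \emph{time-varying Hessian} (since $\Xb_t$ moves), which is the hard part you flag but do not carry out.

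Two concrete things to watch in your version. First, your Lyapunov function is written in terms of $g(\Yb_t)$, but the theorem's conclusion is about $\frob{\Rb_t}=\frob{\Xb_t\Yb_t^\top-\Ab}$; bridging these requires bounding $\frob{(\Xb_t-\Xb_0)\Yb_t^\top}$, which needs both the neighborhood invariant on $\Xb_t$ \emph{and} a bound on $\norm{\Yb_t}$, and the latter is itself an induction invariant you have not listed. Second, your extraction step $\frob{\Rb_t}^2\leq 2\Phi_t\leq 2\Phi_0(1-\tfrac12\sqrt{\mu/L})^t$ yields $\frob{\Rb_t}\lesssim(1-\tfrac12\sqrt{\mu/L})^{t/2}\approx(1-\tfrac14\sqrt{\mu/L})^t$, which is a factor-of-two weaker in the exponent than the theorem's stated $(1-\tfrac{\sqrt\mu}{2\sqrt L})^t$; this is absorbed in the $O(\cdot)$ iteration bound but does not literally reproduce the displayed inequality. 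Neither is fatal, but both mean the arithmetic you have deferred is where most of the work lies, and your sketch does not yet give a path to the specific constant $29$ or the prefactor $C$, whereas the paper's induction (Lemma 5 together with the constant choices $C_1,C_2,p$ in the proof of \Cref{thm:MF-NAG}) produces them mechanically.
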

\Cref{thm:MF-NAG} shows that NAG can achieve $O(d(d-r+1)^{-1}\kappa\log\frac{1}{\epsilon})$ iteration complexity with high probability. 
The dependence on the condition number $\kappa$ is improved from being quadratic to linear. 
Moreover, the dependence on the dimension is also improved. 
As shown in \Cref{thm:MF-GD}, the GD iteration number has an $O(d^2)$ dependence in the worst case ($d=r$). 
Here, NAG has at most $O(d)$ dependence.  
The level of overparameterization $d$ will affect both the convergence rate and the probability of success. 
To ensure a small fail probability $\delta$, it requires $d=r-1+\Omega(\log\frac{1}{\delta})$. 
Again, the step size $\frac{1}{L}$ and momentum $\frac{\sqrt{L}-\sqrt{\mu}}{\sqrt{L}+\sqrt{\mu}}$ are commonly used in the literature \citep{nesterov2013introductory}.

\section{Proof Sketch for Convergence Rates}\label{sec:proof-sketch}
We now provide the proof sketch for \Cref{thm:MF-GD,thm:MF-NAG}. 
Our proof is based on induction. 
We start with the assumptions that $\Xb_t$ and $\Yb_t$ are not too far from $\Xb_0$ and $\Yb_0$ respectively and the initial residual is bounded by some constant, which are guaranteed at time $t=0$. 
Given the induction assumptions, we then track the dynamics of residual $\Rb_t$ and decompose it into linear and higher-order parts. 
We can show that the linear part is contracted and the higher-order part shrinks exponentially, together implying that $\frob{\Rb_{t+1}}=O(\theta^t)$ for some $\theta\in(0,1)$ and $\Xb_{t+1}$ and $\Yb_{t+1}$ is still within a bounded region around initialization. 
This shows the induction assumptions for the next iterate, thus by invoking the induction we complete the proof. 

The key to our proof is to show the contraction and its rate. 
Firstly, the linear part of the dynamics is not a contraction over the whole space, thus we need to identify in which subspace it is a contraction. 
Secondly, we need to quantify the rate of contraction to get global convergence rates. 
These necessitate the following proposition about the properties of $\Xb_0$ with initialization \eqref{eq:init-MF}. 
\begin{proposition}\label{prop:s-init-MF}
    For any $\tau,c>0$, $\Ab\in\RR^{m\times n}$ being a rank-$r$ matrix with condition number $\kappa\coloneqq\cond(\Ab)$, $\bPhi\in\RR^{n\times d}$ being a random matrix with i.i.d. entries from $\cN(0,1/d)$, the following holds for $\Xb_0=c\Ab\bPhi$ with probability at least $1-\delta$: 
    \begin{align*}
        \frac{\tau(\sqrt{d}-\sqrt{r-1})}{\sqrt{d}}c\cdot \sigma_r(\Ab)\leq\sigma_r(\Xb_0)\leq\sigma_1(\Xb_0)\leq \frac{2\sqrt{d}+\sqrt{r}}{\sqrt{d}}c\cdot\sigma_1(\Ab), 
    \end{align*}
    where $\delta=3e^{-\min\{(d-r+1)\log\frac{1}{c_1\tau}, c_2d, \frac{d}{2}\}}$, $c_1$ and $c_2$ are universal constants. When it holds, the condition number of $\Xb_0$ is bounded: 
    \begin{align*}
        \cond(\Xb_0)
        \leq \frac{2\sqrt{d}+\sqrt{r}}{\tau(\sqrt{d}-\sqrt{r-1})}\cdot\kappa
        \leq \frac{6d}{\tau(d-r+1)}\cdot\kappa.
    \end{align*}
\end{proposition}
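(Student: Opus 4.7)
}

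The plan is to reduce the spectrum of $\Xb_0 = c\Ab\bPhi$ to that of a smaller $r\times d$ Gaussian matrix, and then invoke standard nonasymptotic concentration bounds for the extreme singular values of Gaussian ensembles. First, I would take a thin SVD $\Ab = \Ub\bSigma\Vb^\top$, where $\Ub\in\RR^{m\times r}$ and $\Vb\in\RR^{n\times r}$ have orthonormal columns and $\bSigma\in\RR^{r\times r}$ is diagonal with entries $\sigma_1(\Ab)\geq\cdots\geq\sigma_r(\Ab)>0$. Writing $\Xb_0 = c\,\Ub\,\bSigma\,(\Vb^\top\bPhi)$ and using that $\Ub$ has orthonormal columns, the nonzero singular values of $\Xb_0$ coincide with those of $c\bSigma\Gb$, where $\Gb \coloneqq \Vb^\top\bPhi \in \RR^{r\times d}$.

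Next I would observe that by rotational invariance of the Gaussian distribution, $\Gb$ has i.i.d.\ $\cN(0,1/d)$ entries: each column of $\bPhi$ is a Gaussian vector with covariance $\Ib_n/d$, so $\Vb^\top$ applied to each column yields a Gaussian vector with covariance $\Vb^\top\Vb/d = \Ib_r/d$. Then, since $\bSigma$ is a square invertible matrix, the submultiplicativity bounds
\begin{align*}
    \sigma_r(\bSigma)\,\sigma_r(\Gb)\;\leq\;\sigma_r(\bSigma\Gb)\;\leq\;\sigma_1(\bSigma\Gb)\;\leq\;\sigma_1(\bSigma)\,\sigma_1(\Gb)
\end{align*}
hold (the lower bound follows from $\|\bSigma\Gb v\|\geq\sigma_r(\bSigma)\|\Gb v\|$ for every unit $v$). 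Substituting $\sigma_1(\bSigma)=\sigma_1(\Ab)$, $\sigma_r(\bSigma)=\sigma_r(\Ab)$ reduces the claim to two-sided control of $\sigma_1(\Gb)$ and $\sigma_r(\Gb)$.

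For the upper tail, I would use the standard Davidson--Szarek bound: the largest singular value of an $r\times d$ matrix with i.i.d.\ $\cN(0,1)$ entries is at most $\sqrt{d}+\sqrt{r}+t$ with probability $\geq 1-e^{-t^2/2}$; rescaling by $1/\sqrt{d}$ and taking $t=\sqrt{d}$ yields $\sigma_1(\Gb)\leq (2\sqrt{d}+\sqrt{r})/\sqrt{d}$ except on an event of probability $e^{-d/2}$. For the lower tail, I would invoke the Rudelson--Vershynin smallest-singular-value bound for rectangular Gaussian matrices: for $\sqrt{d}\Gb$ one has $\PP(\sigma_{\min}(\sqrt{d}\Gb)\leq \tau(\sqrt{d}-\sqrt{r-1}))\leq (c_1\tau)^{d-r+1}+e^{-c_2 d}$ for absolute constants $c_1,c_2$. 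Dividing through by $\sqrt{d}$ gives $\sigma_r(\Gb)\geq \tau(\sqrt{d}-\sqrt{r-1})/\sqrt{d}$ with failure probability at most $e^{-(d-r+1)\log(1/(c_1\tau))}+e^{-c_2 d}$ whenever $\tau<1/c_1$.

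A union bound over the three exceptional events gives a total failure probability bounded by $3e^{-\min\{(d-r+1)\log(1/(c_1\tau)),\,c_2 d,\,d/2\}}$, matching the claimed $\delta$. Combining the two singular-value bounds with the factor $c$ yields the stated sandwich on $\sigma_r(\Xb_0)$ and $\sigma_1(\Xb_0)$. The condition-number bound follows by dividing, using $(2\sqrt{d}+\sqrt{r})/(\sqrt{d}-\sqrt{r-1}) \leq 6d/(d-r+1)$, which is an elementary inequality obtained by rationalizing $\sqrt{d}+\sqrt{r-1}$ in the denominator. I do not expect any real obstacle beyond quoting the sharp smallest-singular-value bound; the main care point is to check that the rotational-invariance reduction is valid (i.e., $\Vb^\top\bPhi$ really is i.i.d.\ Gaussian, independent of $\Ab$'s left singular vectors) and to keep the scaling $1/\sqrt{d}$ consistent throughout.
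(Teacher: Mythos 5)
Your argument follows essentially the same route as the paper's: take a thin SVD of $\Ab$, observe that $\Vb^\top\bPhi$ is again an $r\times d$ matrix with i.i.d.\ $\cN(0,1/d)$ entries by rotational invariance, apply the Rudelson--Vershynin lower-tail and Davidson--Szarek upper-tail bounds (Propositions~\ref{prop:s-lower} and~\ref{prop:s-upper}) with $t=\sqrt{d}$, take a union bound over the three exceptional events, and finish via $c\,\sigma_r(\Vb^\top\bPhi)\sigma_r(\Ab)\leq\sigma_r(\Xb_0)\leq\sigma_1(\Xb_0)\leq c\,\sigma_1(\Vb^\top\bPhi)\sigma_1(\Ab)$. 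The only cosmetic differences are that you spell out the submultiplicativity sandwich and the rationalization step for $\frac{2\sqrt d+\sqrt r}{\sqrt d-\sqrt{r-1}}\leq\frac{6d}{d-r+1}$, which the paper leaves implicit; both are correct.
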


By \Cref{prop:s-init-MF}, the top singular value of $\Xb_0$ is bounded from above by $\sigma_1(\Ab)$, and the $r$-th singular value of $\Xb_0$ is bounded from below by $\sigma_r(\Ab)$, hence we have $\cond(\Xb_0)=O(\kappa)$. 
Moreover, $\Xb_0$ has rank $r$ with probability $1$ and thus it preserves the column space of $\Ab$, i.e., $\colspan(\Xb_0)=\colspan(\Ab)$. 
This subspace preservation property will be passed to subsequent iterations of first-order methods and is critical to our analysis. 
In particular, we will show this space corresponds to the contraction subspace. 

\subsection{Proof Sketch for GD Convergence Rate (Theorem~\ref{thm:MF-GD})}\label{sec:GD}
As mentioned, we track the dynamics of $\Rb_t$ for GD to prove \Cref{thm:MF-GD}. 
Let $\rb_t=\vectorize(\Rb_t)$ denote the vectorized residual, then the GD update \eqref{eq:GD-MF} corresponds to the following dynamics: 
\begin{proposition}[GD dynamics]\label{lem:GD-MF-dynamics}
    Let $\Pb_t=\Xb_{t+1}-\Xb_t$ and $\Qb_t=\Yb_{t+1}-\Yb_t$ denote the update steps for $t\geq 0$. 
    Then GD \eqref{eq:GD-MF} admits the following dynamics: 
    \begin{align}\label{eq:GD-MF-res}
        \rb_{t+1}
        &=(\Ib_{mn}-\eta\Hb_0)\rb_t + \bxi_t,
    \end{align}
    where $\Hb_t=(\Yb_t\Yb_t^\top)\otimes\Ib_{m}+\Ib_n\otimes(\Xb_t\Xb_t^\top)$ and $\bxi_t=\eta(\Hb_0-\Hb_t)\rb_t + \vectorize(\Pb_t\Qb_t^\top).$
\end{proposition}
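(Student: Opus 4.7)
The plan is to directly compute $\Rb_{t+1}$ in terms of $\Rb_t$ using the GD update equations, then vectorize and rearrange into the claimed form by adding and subtracting $\eta \Hb_0 \rb_t$.

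First I would write $\Xb_{t+1}=\Xb_t+\Pb_t$ and $\Yb_{t+1}=\Yb_t+\Qb_t$, where \eqref{eq:GD-MF} gives the explicit expressions $\Pb_t=-\eta\Rb_t\Yb_t$ and $\Qb_t=-\eta\Rb_t^\top\Xb_t$. Expanding the definition of the residual,
\begin{align*}
\Rb_{t+1} &= (\Xb_t+\Pb_t)(\Yb_t+\Qb_t)^\top - \Ab \\
&= \Rb_t + \Xb_t\Qb_t^\top + \Pb_t\Yb_t^\top + \Pb_t\Qb_t^\top.
\end{align*}
Substituting the GD expressions for $\Pb_t$ and $\Qb_t$ converts the two cross terms into $\Xb_t\Qb_t^\top = -\eta\,\Xb_t\Xb_t^\top\Rb_t$ and $\Pb_t\Yb_t^\top = -\eta\,\Rb_t\Yb_t\Yb_t^\top$, while the quadratic term $\Pb_t\Qb_t^\top$ is carried along as a higher-order correction.

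Next I would vectorize the whole identity using $\vectorize(\Mb\Zb\Nb)=(\Nb^\top\otimes\Mb)\vectorize(\Zb)$, together with the symmetry of $\Xb_t\Xb_t^\top$ and $\Yb_t\Yb_t^\top$. This yields $\vectorize(\Xb_t\Xb_t^\top\Rb_t)=(\Ib_n\otimes\Xb_t\Xb_t^\top)\rb_t$ and $\vectorize(\Rb_t\Yb_t\Yb_t^\top)=((\Yb_t\Yb_t^\top)\otimes\Ib_m)\rb_t$, so that
\begin{align*}
\rb_{t+1} = (\Ib_{mn}-\eta\Hb_t)\rb_t + \vectorize(\Pb_t\Qb_t^\top),
\end{align*}
with $\Hb_t$ as defined in the proposition. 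Finally, I would split the time-varying operator $\Hb_t$ around the initial $\Hb_0$ by writing $\Ib_{mn}-\eta\Hb_t=(\Ib_{mn}-\eta\Hb_0)+\eta(\Hb_0-\Hb_t)$, absorbing the difference into $\bxi_t=\eta(\Hb_0-\Hb_t)\rb_t+\vectorize(\Pb_t\Qb_t^\top)$. This yields exactly \eqref{eq:GD-MF-res}.

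There is no real obstacle here, this is essentially a bookkeeping identity. The only point where care is needed is keeping the ordering of factors right when vectorizing $\Rb_t\Yb_t\Yb_t^\top$ versus $\Xb_t\Xb_t^\top\Rb_t$, since one contributes a left Kronecker factor and the other a right Kronecker factor. The decomposition $\Hb_t\mapsto\Hb_0+(\Hb_t-\Hb_0)$ is the step that encodes the proof strategy: the dominant linear operator is the constant (and hence analytically tractable) $\Ib_{mn}-\eta\Hb_0$, and everything else is lumped into $\bxi_t$ to be bounded as a perturbation in the subsequent induction.
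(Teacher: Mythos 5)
Your proof is correct and follows essentially the same route as the paper: expand $\Rb_{t+1}=(\Xb_t+\Pb_t)(\Yb_t+\Qb_t)^\top-\Ab$, substitute the GD steps to identify the cross terms as $-\eta\Rb_t\Yb_t\Yb_t^\top$ and $-\eta\Xb_t\Xb_t^\top\Rb_t$, vectorize via the Kronecker identity, and split $\Hb_t$ around $\Hb_0$. The paper's displayed derivation has a stray $\beta(\rb_t-\rb_{t-1})$ term (a leftover from the NAG computation) that disappears in its next line; your version avoids that slip.
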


The linear part at time $t$ is $(\Ib_{mn}-\eta\Hb_t)\rb_t$, which is approximately $(\Ib_{mn}-\eta\Hb_0)\rb_t$ when $\Xb_t$ and $\Yb_t$ are close to their initialization. 
The approximation error along with the higher-order term $\vectorize(\Pb_t\Qb_t^\top)$ is contained in $\bxi_t$. 
It follows immediately from \Cref{lem:GD-MF-dynamics} that 
\begin{align*}
    \rb_{t+1}&=(\Ib_{mn}-\eta\Hb_0)^{t+1}\rb_0 + \sum_{s=0}^{t}(\Ib_{mn}-\eta\Hb_0)^{t-s}\bxi_s.
\end{align*}

If $\Tb_\mathrm{GD}\coloneqq\Ib_{mn}-\eta\Hb_0$ is a contraction map, i.e., it has all eigenvalues bounded $\abs{\lambda_i(\Tb_\mathrm{GD})}\leq\rho$ for some $\rho\in[0,1)$, and the nonlinear error $\bxi_t$ shrinks exponentially at rate $\theta\in(\rho,1)$, then we have $\norm{\rb_t}=O(\theta^t)$.
However, for $d<\min(m,n)/2$, $\Tb_\mathrm{GD}$ cannot be a contraction map for any $\eta$, as the rank of $\Hb_0$ is at most $(m+n)d<mn$. In fact, if $\Xb_0$ is initialized as in \eqref{eq:init-MF}, then $\rank(\Hb_0)=nr<mn$ regardless of the choice of $d$. 
As $\Hb_0$ has no full rank, $\Tb_\mathrm{GD}$ must have a non-trivial eigensubspace corresponding to eigenvalue $1$. 
In the following lemma, we show that $\rb_t$ and $\bxi_t$ are not in this ``bad'' subspace but rather in a contracted subspace as desired. 

\begin{lemma}[Eigensubspace]\label{lem:eigensubspace}
    Let $\cH\subseteq\RR^{mn}$ denote the linear subspace containing all eigenvectors of $\Hb_0$ with positive eigenvalues. If $\Xb_0$ is initialized as in \eqref{eq:init-MF}, then we have 
    \begin{align*}
        \cH=(\colspan(\Ab))^n \quad\text{and}\quad  \{\rb_t,\bxi_t\}_{t\geq 0}\subset\cH,
    \end{align*}
    where $\Hb_0$, $\rb_t$ and $\bxi_t$ are defined as in \Cref{lem:GD-MF-dynamics}. 
\end{lemma}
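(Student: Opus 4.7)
The plan is to first identify $\cH$ explicitly using the special structure of the initialization, and then invoke an induction on $t$ to show the iterates never leave it. Because $\Yb_0=0$, the operator $\Hb_0=(\Yb_0\Yb_0^\top)\otimes\Ib_m+\Ib_n\otimes(\Xb_0\Xb_0^\top)$ collapses to $\Ib_n\otimes(\Xb_0\Xb_0^\top)$. Using the identity $(\Ib_n\otimes\Mb)\vectorize(\Rb)=\vectorize(\Mb\Rb)$, the positive eigenspace of $\Hb_0$ corresponds under $\vectorize$ to matrices whose every column lies in the positive eigenspace of $\Xb_0\Xb_0^\top$, i.e., in $\colspan(\Xb_0)$. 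I would next argue $\colspan(\Xb_0)=\colspan(\Ab)$ almost surely: since $\Xb_0=c\Ab\bPhi$, we have $\colspan(\Xb_0)\subseteq\colspan(\Ab)$, and equality holds because the restriction of $\bPhi^\top$ to the $r$-dimensional row space of $\Ab$ is a full-rank Gaussian linear map when $d\geq r$, so $\rank(\Ab\bPhi)=r$ with probability one. This identifies $\cH=(\colspan(\Ab))^n$.

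For the invariance claim, I would prove the auxiliary statement $\colspan(\Xb_t)\subseteq\colspan(\Ab)$ by induction on $t$. The base case is immediate from $\Xb_0=c\Ab\bPhi$. For the inductive step I would rewrite the GD update as
\begin{align*}
    \Xb_{t+1}=\Xb_t-\eta\Xb_t(\Yb_t^\top\Yb_t)+\eta\Ab\Yb_t,
\end{align*}
so that each column of $\Xb_{t+1}$ is a linear combination of columns of $\Xb_t$ (in $\colspan(\Ab)$ by the inductive hypothesis) and columns of $\Ab$. From this it follows that columns of $\Rb_t=\Xb_t\Yb_t^\top-\Ab$ are also in $\colspan(\Ab)$, which is exactly the statement $\rb_t\in\cH$.

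To handle $\bxi_t=\eta(\Hb_0-\Hb_t)\rb_t+\vectorize(\Pb_t\Qb_t^\top)$, I would show that $\Hb_t$ (and hence $\Hb_0$) maps $\cH$ into itself. Applied to $\vectorize(\Rb)$ with $\Rb\in(\colspan(\Ab))^n$, the operator $\Hb_t$ gives $\vectorize(\Xb_t\Xb_t^\top\Rb+\Rb\,\Yb_t\Yb_t^\top)$; the first summand has columns in $\colspan(\Xb_t)\subseteq\colspan(\Ab)$ by the induction, and the second has columns that are linear combinations of columns of $\Rb$, so also in $\colspan(\Ab)$. Combined with $\rb_t\in\cH$, this forces $(\Hb_0-\Hb_t)\rb_t\in\cH$. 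Finally, $\Pb_t=\Xb_{t+1}-\Xb_t$ inherits the column-space containment from the induction, so $\Pb_t\Qb_t^\top$ has all columns in $\colspan(\Ab)$ and $\vectorize(\Pb_t\Qb_t^\top)\in\cH$.

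The only real subtlety — and what I would flag as the main obstacle — is being careful with the vectorization/Kronecker conventions and verifying $\colspan(\Xb_0)=\colspan(\Ab)$ almost surely (rather than mere containment); once the right column-space invariant is isolated, every remaining step reduces to noting that the GD update map and the operators $\Hb_0,\Hb_t$ only produce columns that are linear combinations of columns of objects already known to lie in $\colspan(\Ab)$.
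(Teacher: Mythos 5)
Your proposal is correct and follows essentially the same route as the paper: identify $\cH$ as the $n$-fold product of $\colspan(\Xb_0)=\colspan(\Ab)$ via the Kronecker structure of $\Hb_0=\Ib_n\otimes(\Xb_0\Xb_0^\top)$, prove the column-space invariant $\colspan(\Xb_t)\subseteq\colspan(\Ab)$ by induction on the GD update, and then read off that the columns of $\Rb_t$, of $\Xb_t\Xb_t^\top\Rb_t+\Rb_t\Yb_t\Yb_t^\top$, and of $\Pb_t\Qb_t^\top$ all lie in $\colspan(\Ab)$. The paper handles the almost-sure equality $\colspan(\Xb_0)=\colspan(\Ab)$ by citing \Cref{prop:s-init-MF}; your direct full-rank-Gaussian-restriction argument is a perfectly equivalent justification of the same fact.
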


Given that $\rb_t$ and $\bxi_t$ are in the contracted subspace $\cH$ throughout all iterations, the convergence rate is determined by the contractivity of $\Tb_\mathrm{GD}$ over this subspace, which corresponds to the condition number of $\Xb_0$ with initialization \eqref{eq:init-MF}. 
\begin{lemma}[GD contractivity]\label{lem:GD-MF-contraction}
    Let $L=\sigma_1^2(\Xb_0)$, $\mu=\sigma_r^2(\Xb_0)$, and $\cH$ be defined as in \Cref{lem:eigensubspace}. Let $\eta\in(0,\frac{2}{L})$, then for any $\vb\in\cH$, 
    \begin{align*}
        \norm{\Tb_\mathrm{GD}\vb}\leq \max\{\abs{1-\eta L},\abs{1-\eta\mu}\}\norm{\vb}. 
    \end{align*}
    In particular, if $\eta=\frac{2}{L+\mu}$, then $\norm{\Tb_\mathrm{GD}\vb}\leq \frac{L-\mu}{L+\mu}\norm{\vb}$.
\end{lemma}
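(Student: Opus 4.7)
The plan is to diagonalize $\Hb_0$ and exploit the fact that $\cH$ is the span of eigenvectors of $\Hb_0$ corresponding to positive eigenvalues, so that $\Tb_{\mathrm{GD}}$ acts on $\cH$ as a symmetric operator whose spectrum we can control tightly.

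First, I would observe that since $\Yb_0=0$, the Hessian-like operator reduces to
\begin{align*}
\Hb_0 = \Ib_n \otimes (\Xb_0\Xb_0^\top),
\end{align*}
which is symmetric positive semidefinite. Using the standard spectral property of Kronecker products, the nonzero eigenvalues of $\Hb_0$ are exactly the nonzero eigenvalues of $\Xb_0\Xb_0^\top$, each with multiplicity $n$. By Proposition~\ref{prop:s-init-MF} (and the initialization $\Xb_0 = c\Ab\bPhi$, which has rank $r$ almost surely), $\Xb_0\Xb_0^\top$ has exactly $r$ positive eigenvalues, all lying in $[\mu, L]$ with $\mu = \sigma_r^2(\Xb_0)$ and $L = \sigma_1^2(\Xb_0)$. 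Hence every positive eigenvalue of $\Hb_0$ lies in $[\mu, L]$ as well.

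Next, since $\cH$ is by definition spanned by eigenvectors of the symmetric matrix $\Hb_0$ with positive eigenvalues, I can write any $\vb \in \cH$ in an orthonormal eigenbasis $\{\eb_i\}$ of $\Hb_0|_\cH$ as $\vb = \sum_i \alpha_i \eb_i$ with $\Hb_0 \eb_i = \lambda_i \eb_i$ and $\lambda_i \in [\mu, L]$. Then
\begin{align*}
\Tb_{\mathrm{GD}}\vb = (\Ib_{mn} - \eta \Hb_0)\vb = \sum_i (1-\eta\lambda_i)\alpha_i \eb_i,
\end{align*}
and by orthonormality
\begin{align*}
\norm{\Tb_{\mathrm{GD}}\vb}^2 = \sum_i (1-\eta\lambda_i)^2 \alpha_i^2 \leq \max_{\lambda \in [\mu, L]}(1-\eta\lambda)^2 \cdot \norm{\vb}^2.
\end{align*}
Since the map $\lambda \mapsto |1-\eta\lambda|$ is piecewise linear on $[\mu, L]$ and therefore attains its maximum at an endpoint, the supremum equals $\max\{|1-\eta L|, |1-\eta\mu|\}$, which gives the first bound.

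Finally, for the specific choice $\eta = \frac{2}{L+\mu}$, a direct computation gives $1-\eta L = -\frac{L-\mu}{L+\mu}$ and $1-\eta\mu = \frac{L-\mu}{L+\mu}$, both of absolute value $\frac{L-\mu}{L+\mu}$, yielding the stated contraction rate. I do not foresee any real obstacle here: the only substantive input is the eigenvalue localization of $\Hb_0$ restricted to $\cH$, which follows immediately from $\Yb_0 = 0$ together with the Kronecker structure; the rest is a one-line spectral calculation.
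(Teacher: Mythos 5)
Your proof is correct and follows essentially the same route as the paper's: diagonalize the symmetric matrix $\Hb_0$, observe that $\cH$ is exactly the span of the eigenvectors with positive eigenvalues (all lying in $[\mu, L]$), decompose $\vb$ in an orthonormal eigenbasis, and bound coordinate-wise. You make the Kronecker structure $\Hb_0=\Ib_n\otimes(\Xb_0\Xb_0^\top)$ slightly more explicit than the paper, which instead works with $\Hb_0$ abstractly after noting $\Ib_{mn}$ and $\Hb_0$ commute and that $\ker(\Hb_0)\perp\cH$, but the substance is the same.
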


By \Cref{lem:eigensubspace,lem:GD-MF-contraction}, the linear part of GD dynamics contracts $\rb_t$ and $\bxi_t$, and the rate of contraction is $\rho=\max\{\abs{1-\eta L},\abs{1-\eta\mu}\}$. To complete the proof, it remains to bound the magnitude of error $\bxi_t$ and show induction conditions for the next iteration. 
This is guaranteed by the following lemma. 

\begin{lemma}[Nonlinear error]\label{lem:GD-MF-induction}
    If there exist $\theta\in(0,1)$ and some constants $C_1$ and $C_2$ such that for any $s\leq t$, the GD dynamics \eqref{eq:GD-MF-res} yields $\norm{\rb_s}\leq C_1\theta^s\norm{\rb_0}$, $\frob{\Xb_s-\Xb_0}\leq C_2$, $\frob{\Yb_s-\Yb_0}\leq C_2$, then we have 
    \begin{align*}
        \norm{\vectorize(\Pb_s\Qb_s^\top)}\leq C_3\theta^{2s}\norm{\rb_0}^2 
    \quad\text{and}\quad \norm{\eta(\Hb_0-\Hb_s)\rb_s}\leq C_4\theta^s\norm{\rb_0} 
    \end{align*}
    for some constants $C_3$ and $C_4$ depending on $C_1$ and $C_2$. 
    Moreover, if $C_1$ and $C_2$ satisfy 
    \begin{align}\label{eq:GD-cond-1}
        \left(\max(\norm{\Xb_0},\norm{\Yb_0})+C_2\right){\eta C_1\norm{\rb_0}}
        \leq(1-\theta)C_2, 
    \end{align}
    then we have $\frob{\Xb_{t+1}-\Xb_0}\leq C_2$ and $\frob{\Yb_{t+1}-\Yb_0}\leq C_2$.
\end{lemma}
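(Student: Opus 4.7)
The plan is to extract everything from the explicit GD update identities $\Pb_s = -\eta\Rb_s\Yb_s$ and $\Qb_s = -\eta\Rb_s^\top\Xb_s$ (read off from \eqref{eq:GD-MF}) combined with the induction hypotheses $\frob{\Rb_s} = \norm{\rb_s} \leq C_1\theta^s\norm{\rb_0}$, $\frob{\Xb_s-\Xb_0}\leq C_2$, $\frob{\Yb_s-\Yb_0}\leq C_2$. First I would upgrade the Frobenius bounds on $\Xb_s-\Xb_0$ and $\Yb_s-\Yb_0$ into spectral bounds on the full matrices via $\norm{\cdot}\leq\frob{\cdot}$ and the triangle inequality: using $\Yb_0 = 0$, this yields $\norm{\Xb_s}\leq \norm{\Xb_0}+C_2$ and $\norm{\Yb_s}\leq C_2$. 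These two inequalities drive every subsequent estimate.

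For the higher-order bound, factor $\Pb_s\Qb_s^\top = \eta^2\Rb_s\Yb_s\Xb_s^\top\Rb_s$ and use $\norm{\vectorize(M)}=\frob{M}$ together with the submultiplicativity $\frob{AB}\leq\min(\norm{A}\frob{B},\frob{A}\norm{B})$, applied so that every Frobenius factor lands on $\Rb_s$. This gives $\frob{\Pb_s\Qb_s^\top}\leq \eta^2\frob{\Rb_s}^2\norm{\Xb_s}\norm{\Yb_s}$, and then the propagated norm bounds yield the first claim with, e.g., $C_3 = \eta^2 C_1^2(\norm{\Xb_0}+C_2)C_2$. For the linearization error, I would use $\Yb_0=0$ to write
\begin{align*}
    \Hb_0-\Hb_s = -(\Yb_s\Yb_s^\top)\otimes\Ib_m + \Ib_n\otimes(\Xb_0\Xb_0^\top-\Xb_s\Xb_s^\top),
\end{align*}
bound the two blocks' spectral norms separately by $\norm{\Yb_s}^2\leq C_2^2$ and, via the identity $\Xb_0\Xb_0^\top-\Xb_s\Xb_s^\top = \Xb_0(\Xb_0-\Xb_s)^\top + (\Xb_0-\Xb_s)\Xb_s^\top$, by $(2\norm{\Xb_0}+C_2)C_2$. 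Invoking $\norm{A\otimes B}=\norm{A}\norm{B}$ and the triangle inequality then gives $\norm{\Hb_0-\Hb_s}\leq C_2^2 + (2\norm{\Xb_0}+C_2)C_2$, and multiplying by $\eta\norm{\rb_s}\leq \eta C_1\theta^s\norm{\rb_0}$ produces the second claim.

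For the induction step, telescope $\Xb_{t+1}-\Xb_0 = -\eta\sum_{s=0}^t \Rb_s\Yb_s$, apply the triangle inequality, use $\frob{\Rb_s\Yb_s}\leq\frob{\Rb_s}\norm{\Yb_s}\leq C_1\theta^s\norm{\rb_0}\cdot C_2$, and sum the geometric series $\sum_{s\geq 0}\theta^s = 1/(1-\theta)$ to obtain $\frob{\Xb_{t+1}-\Xb_0}\leq \eta C_1\norm{\rb_0}C_2/(1-\theta)$. A symmetric argument with $\norm{\Xb_s}$ in place of $\norm{\Yb_s}$ gives $\frob{\Yb_{t+1}-\Yb_0}\leq \eta C_1\norm{\rb_0}(\norm{\Xb_0}+C_2)/(1-\theta)$. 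Both right-hand sides are dominated by the $\max(\norm{\Xb_0},\norm{\Yb_0})+C_2$ factor in \eqref{eq:GD-cond-1}, which therefore forces both bounds to be $\leq C_2$.

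There is no real obstacle here; the proof is essentially bookkeeping. The only subtlety is the systematic choice of where to split Frobenius versus spectral norms in every submultiplicativity step — the Frobenius factor should always land on $\Rb_s$, since that is where we have the exponentially decaying induction hypothesis, while spectral norms should land on $\Xb_s,\Yb_s$, which are only controlled in a constant ball around initialization. A second minor point is the asymmetry introduced by $\Yb_0=0$: it removes the $\norm{\Yb_0}$ term from $\norm{\Yb_s}$ but not the $\norm{\Xb_0}$ term from $\norm{\Xb_s}$, and the $\max(\norm{\Xb_0},\norm{\Yb_0})$ in \eqref{eq:GD-cond-1} is precisely what absorbs this asymmetry so that a single condition closes the induction for both factors.
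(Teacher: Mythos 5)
Your proof is correct and follows essentially the same route as the paper's: bound $\frob{\Pb_s}$ and $\frob{\Qb_s}$ (equivalently, decompose $\Pb_s\Qb_s^\top = \eta^2\Rb_s\Yb_s\Xb_s^\top\Rb_s$) with the Frobenius norm always landing on $\Rb_s$, split $\Hb_0-\Hb_s$ into the $\Xb$- and $\Yb$-blocks, and telescope the increments to close the induction. The only deviation is that you hard-wire $\Yb_0=0$ into the estimates (e.g.\ $\norm{\Yb_s}\leq C_2$ and $C_3=\eta^2C_1^2(\norm{\Xb_0}+C_2)C_2$), whereas the paper's proof of this lemma keeps $\norm{\Yb_0}$ symbolic and obtains $C_3=\eta^2C_1^2(\norm{\Xb_0}+C_2)(\norm{\Yb_0}+C_2)$, only specializing to $\Yb_0=0$ when the lemma is invoked in the proof of \Cref{thm:MF-GD}; since the lemma statement itself does not assume $\Yb_0=0$, the paper's version is marginally more general (it is also what makes your closing remark about the $\max(\norm{\Xb_0},\norm{\Yb_0})$ in \eqref{eq:GD-cond-1} meaningful — under $\Yb_0=0$ that $\max$ degenerates), but in context this is a cosmetic rather than substantive difference.
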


\Cref{lem:GD-MF-induction} shows that $\norm{\bxi_t}=O(\theta^t)$ if the residual shrinks exponentially and the iterates are not too far from initialization, which in turn implies that $\Xb_{t+1}$ and $\Yb_{t+1}$ are also within the $C_2$-balls around their initialization. 
It turns out that there is a set of valid coefficients for the induction to go through as long as the $c$ in \eqref{eq:init-MF} is sufficiently large. 
Therefore, by choosing $c$ properly and plugging in $\rho=\frac{L-\mu}{L+\mu}$ and $\theta=1-\frac{\mu}{L}$, we prove the convergence rate for GD, and the iteration complexity follows immediately from \Cref{prop:s-init-MF}. 
The complete proof is provided in \Cref{proof:MF-GD}. 

\begin{remark}\label{rmk:unbalance}
    In our proof, the unbalanced initialization guarantees the existence of induction constants in \Cref{lem:GD-MF-induction}. The amount of unbalance affects the constant factors but will not affect the convergence rate $(1-\frac{\mu}{L})^t$. To be explicit, suppose we initialize $\Xb_0=c_1\Ab\bPhi_1\in\mathbb{R}^{m\times d}$, $\Yb_0=c_2\bPhi_2\in\mathbb{R}^{n\times d}$, where $[\bPhi_1]_{i,j}\sim{N}(0,1/d)$ and $[\bPhi_2]_{i,j}\sim{N}(0,1/n)$, then by replacing $\Hb_0$ in Proposition 2 with $\Hb_0^\prime=\Ib_n\otimes (\Xb_0\Xb_0^\top)$, we can generalize the proof and obtain the same convergence rate when $c_1$ is sufficiently large and $c_1c_2=O(1)$. Meanwhile, we have $\frob{\Rb_0}\leq(1+O(c_1c_2))\frob{\Ab}$ with high probability \citep{ward2023convergence}. Therefore, when $c_1$ is fixed, a smaller $c_2$ yields a smaller initial loss, resulting in a smaller constant factor. Meanwhile, the convergence rate remains the same as the condition number of $\Hb_0^\prime$ is not affected, and the shift $\Hb_t-\Hb_0^\prime$ is controlled for small $c_2$. 
\end{remark}

\subsection{Proof Sketch for NAG Convergence Rate (Theorem~\ref{thm:MF-NAG})}\label{sec:NAG}
We now turn to prove \Cref{thm:MF-NAG}. 
Similar to GD, we track the residual dynamics of NAG. 
\begin{proposition}[NAG dynamics]\label{lem:NAG-MF-dynamics}
    Let $\Pb_t=\Xb_{t+1}-\Xb_t$ and $\Qb_t=\Yb_{t+1}-\Yb_t$ denote the update steps for $t\geq 0$. 
    Then NAG \eqref{eq:NAG-MF} admits the following dynamics: 
    \begin{align}\label{eq:NAG-MF-res}
        \begin{pmatrix}
            \rb_{t+1}\\
            \rb_t
        \end{pmatrix}
        &=\begin{pmatrix}
            (1+\beta)(\Ib_{mn}-\eta\Hb_0) & -\beta(\Ib_{mn}-\eta\Hb_0)\\
            \Ib_{mn} & 0
        \end{pmatrix}\begin{pmatrix}
            \rb_t\\
            \rb_{t-1}
        \end{pmatrix}
        +\begin{pmatrix}
            \bxi_t\\
            0
        \end{pmatrix},
    \end{align}
    where $\Hb_t=(\Yb_t\Yb_t^\top)\otimes\Ib_{m}+\Ib_n\otimes(\Xb_t\Xb_t^\top)$, $\bxi_t=\bzeta_t + \biota_t$, 
    \begin{align*}
        &\bzeta_t=\vectorize(\Pb_t\Qb_t^\top)+\beta\vectorize(\Pb_{t-1}\Qb_{t-1}^\top) +\beta\eta\vectorize(\Rb_{t-1}\Yb_{t-1}\Qb_{t-1}^\top + \Pb_{t-1}\Xb_{t-1}^\top\Rb_{t-1}),\\
        &\biota_t=(1+\beta)\eta(\Hb_0-\Hb_t)\rb_t-\beta\eta(\Hb_0-\Hb_{t-1})\rb_{t-1}.
    \end{align*}
\end{proposition}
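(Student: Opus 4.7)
The plan is to derive the NAG residual recursion directly from the update rule by expanding $\Rb_{t+1}=\Xb_{t+1}\Yb_{t+1}^\top-\Ab$ and carefully rearranging terms so that the principal part is linear in $\rb_t,\rb_{t-1}$ with operator $\Hb_0$, while all remaining discrepancies are collected into $\bxi_t$. The structure mirrors the proof of \Cref{lem:GD-MF-dynamics} but must also account for the momentum (two-step) structure and the cross-iterate couplings introduced by NAG.

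First, write $\Xb_{t+1}=\Xb_t+\Pb_t$ and $\Yb_{t+1}=\Yb_t+\Qb_t$ and expand
\[
\Rb_{t+1}=\Rb_t+\Pb_t\Yb_t^\top+\Xb_t\Qb_t^\top+\Pb_t\Qb_t^\top.
\]
Next, obtain recursions for the update steps by rearranging \eqref{eq:NAG-MF}:
\[
\Pb_t=\beta\Pb_{t-1}-(1+\beta)\eta\Rb_t\Yb_t+\beta\eta\Rb_{t-1}\Yb_{t-1},\quad
\Qb_t=\beta\Qb_{t-1}-(1+\beta)\eta\Rb_t^\top\Xb_t+\beta\eta\Rb_{t-1}^\top\Xb_{t-1}.
\]
Substituting these into $\Pb_t\Yb_t^\top+\Xb_t\Qb_t^\top$ produces three groups of terms: a $\beta$-multiple of $\Pb_{t-1}\Yb_t^\top+\Xb_t\Qb_{t-1}^\top$, a $-(1+\beta)\eta$-multiple of $\Xb_t\Xb_t^\top\Rb_t+\Rb_t\Yb_t\Yb_t^\top$, and a $\beta\eta$-multiple of $\Xb_t\Xb_{t-1}^\top\Rb_{t-1}+\Rb_{t-1}\Yb_{t-1}\Yb_t^\top$.

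The second group vectorizes to $-(1+\beta)\eta\Hb_t\rb_t$ via the standard identity $\vectorize(ABC)=(C^\top\otimes A)\vectorize(B)$. For the other two groups, I would apply $\Xb_t=\Xb_{t-1}+\Pb_{t-1}$ and $\Yb_t=\Yb_{t-1}+\Qb_{t-1}$ to split off the mixed pieces: this converts $\Pb_{t-1}\Yb_t^\top+\Xb_t\Qb_{t-1}^\top$ into $\Pb_{t-1}\Yb_{t-1}^\top+\Xb_{t-1}\Qb_{t-1}^\top+2\Pb_{t-1}\Qb_{t-1}^\top$, and the GD-style expansion of $\Rb_t-\Rb_{t-1}$ lets the first piece be rewritten as $\rb_t-\rb_{t-1}-\vectorize(\Pb_{t-1}\Qb_{t-1}^\top)$, so the whole $\beta$-group becomes $\beta(\rb_t-\rb_{t-1})+\beta\vectorize(\Pb_{t-1}\Qb_{t-1}^\top)$. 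An analogous splitting turns $\Xb_t\Xb_{t-1}^\top\Rb_{t-1}+\Rb_{t-1}\Yb_{t-1}\Yb_t^\top$ into $\Xb_{t-1}\Xb_{t-1}^\top\Rb_{t-1}+\Rb_{t-1}\Yb_{t-1}\Yb_{t-1}^\top$ plus $\Pb_{t-1}\Xb_{t-1}^\top\Rb_{t-1}+\Rb_{t-1}\Yb_{t-1}\Qb_{t-1}^\top$, the first piece vectorizing to $\Hb_{t-1}\rb_{t-1}$.

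Finally, I would use $\Hb_t=\Hb_0-(\Hb_0-\Hb_t)$ and the analogous identity for $\Hb_{t-1}$ to peel off $\biota_t$, collecting all genuinely higher-order remainders into $\bzeta_t$. This yields
\[
\rb_{t+1}=(1+\beta)(\Ib_{mn}-\eta\Hb_0)\rb_t-\beta(\Ib_{mn}-\eta\Hb_0)\rb_{t-1}+\bzeta_t+\biota_t,
\]
which, together with the trivial line $\rb_t=\rb_t$, is exactly the matrix form claimed in \eqref{eq:NAG-MF-res}. The main obstacle is purely bookkeeping: every cross term generated by the momentum must be traced through the expansions $\Xb_t=\Xb_{t-1}+\Pb_{t-1}$ and $\Yb_t=\Yb_{t-1}+\Qb_{t-1}$ and correctly binned either into the clean $\Hb_0$ linear part, into $\biota_t$ (shifts of $\Hb$), or into $\bzeta_t$ (quadratic-in-step-size residuals). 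Since the computation does not rely on any probabilistic input, no induction or concentration argument is needed for this proposition.
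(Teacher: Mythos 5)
Your derivation is correct and follows essentially the same route as the paper's proof: expand $\Rb_{t+1}=\Rb_t+\Pb_t\Yb_t^\top+\Xb_t\Qb_t^\top+\Pb_t\Qb_t^\top$, substitute the NAG recursion $\Pb_t=\beta\Pb_{t-1}-(1+\beta)\eta\Rb_t\Yb_t+\beta\eta\Rb_{t-1}\Yb_{t-1}$ (and the analogue for $\Qb_t$), identify the $-(1+\beta)\eta\Hb_t\rb_t$ and $\beta\eta\Hb_{t-1}\rb_{t-1}$ blocks, absorb the remaining cross terms into $\bzeta_t$, and finally split $\Hb_t=\Hb_0-(\Hb_0-\Hb_t)$ to peel off $\biota_t$. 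The paper performs the same algebraic expansion in one sweep rather than the staged ``recursion then regroup'' bookkeeping you propose, but the content is identical.
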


As \Cref{lem:NAG-MF-dynamics} shows, NAG dynamics \eqref{eq:NAG-MF-res} has additional momentum terms involving $\Pb_t$ and $\Qb_t$. 
When $\beta=0$, it reduces to the GD dynamics \eqref{eq:GD-MF-res}. 
The introduction of momentum terms allows the linear part in \eqref{eq:NAG-MF-res} to contract $\rb_t$ and $\bxi_t$ faster. 
To be more explicit, let 
\begin{align}\label{eq:T-NAG}
    \Tb_\mathrm{NAG}\coloneqq\begin{pmatrix}
        (1+\beta)(\Ib_{mn}-\eta\Hb_0) & -\beta(\Ib_{mn}-\eta\Hb_0)\\
        \Ib_{mn} & 0
    \end{pmatrix} 
\end{align}
denote the linear part of the system. 
The next lemma shows NAG improves the rate of contraction. 
\begin{lemma}[NAG contractivity]\label{lem:NAG-MF-contraction}
    Let $\eta=\frac{1}{L}$, $\beta=\frac{\sqrt{L}-\sqrt{\mu}}{\sqrt{L}+\sqrt{\mu}}$, then for all $(\ub,\vb)\in\cH\times\cH$,  
    \begin{align*}
        \norm{\Tb_\mathrm{NAG}\begin{pmatrix}
            \ub\\
            \vb
        \end{pmatrix}}
        \leq\left(1-\sqrt{\frac{\mu}{L}}\right)\norm{\begin{pmatrix}
            \ub\\
            \vb
        \end{pmatrix}}.
    \end{align*}
\end{lemma}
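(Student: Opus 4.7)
The plan is to exploit that every block of $\Tb_\mathrm{NAG}$ is a polynomial in the symmetric operator $\Hb_0$, so its action on $\cH\times\cH$ block-diagonalizes into a family of independent $2\times 2$ iterations indexed by the spectrum of $\Hb_0$, and then to analyze each block by a direct eigenvalue computation.

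First, since $\Yb_0=0$, we have $\Hb_0=\Ib_n\otimes (\Xb_0\Xb_0^\top)$, which is symmetric positive semidefinite. Its positive eigenspace equals $\cH=(\colspan(\Ab))^n$ by \Cref{lem:eigensubspace}, and \Cref{prop:s-init-MF} places its nonzero eigenvalues in $[\mu,L]$. Let $\{e_i\}$ be an orthonormal eigenbasis of $\Hb_0|_{\cH}$ with eigenvalues $\lambda_i\in[\mu,L]$. Since each block of $\Tb_\mathrm{NAG}$ is a scalar multiple of $\Ib_{mn}-\eta\Hb_0$, the two-dimensional subspace $W_i:=\mathrm{span}\{(e_i,0)^\top,(0,e_i)^\top\}$ is $\Tb_\mathrm{NAG}$-invariant; the $W_i$ are mutually orthogonal and together span $\cH\times\cH$. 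On $W_i$, in the basis $\{(e_i,0)^\top,(0,e_i)^\top\}$, the operator is represented by the $2\times 2$ matrix
\[
\Tb_\lambda=\begin{pmatrix}(1+\beta)(1-\eta\lambda) & -\beta(1-\eta\lambda)\\ 1 & 0\end{pmatrix},\qquad \lambda=\lambda_i,
\]
so the lemma reduces to the uniform bound $\sup_{\lambda\in[\mu,L]}\|\Tb_\lambda\|\le 1-\sqrt{\mu/L}$.

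Second, I would read off the spectrum of $\Tb_\lambda$ from its characteristic polynomial $t^2-(1+\beta)(1-\eta\lambda)t+\beta(1-\eta\lambda)=0$. Plugging in $\eta=1/L$, $\beta=(\sqrt{L}-\sqrt{\mu})/(\sqrt{L}+\sqrt{\mu})$, and writing $m=1-\lambda/L\in[0,1-\mu/L]$, the discriminant $m\bigl[(1+\beta)^2 m-4\beta\bigr]$ is non-positive on this range, so the two roots share the common modulus $\sqrt{\beta m}$. The identity $\beta(1-\mu/L)=(\sqrt{L}-\sqrt{\mu})^2/L=(1-\sqrt{\mu/L})^2$ then yields the uniform bound $\sqrt{\beta m}\le 1-\sqrt{\mu/L}$, so $\Tb_\lambda$ has spectral radius at most $1-\sqrt{\mu/L}$ throughout.

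Third, and this is the main obstacle, I must upgrade the spectral-radius bound to an operator-norm bound, because $\Tb_\lambda$ is non-normal and at the extremes $\lambda\in\{\mu,L\}$ it degenerates to a Jordan block whose Euclidean operator norm strictly exceeds its spectral radius. The remedy is to equip each $W_i$ with a momentum-aware weighted inner product of the form $\langle (u,v),(u',v')\rangle_\gamma = u^\top u'+\gamma(u-v)^\top(u'-v')$, where $\gamma>0$ (depending only on $\mu/L$) is chosen so that $\Tb_\lambda$ acts in the new coordinates as a rotation composed with a dilation by $\sqrt{\beta(1-\eta\lambda)}$; this is precisely the quadratic Lyapunov function underlying the standard NAG analysis for strongly convex quadratics. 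Under the induced norm on $\cH\times\cH$, the block-diagonal decomposition from the first step immediately yields the claimed contraction $\|\Tb_\mathrm{NAG}(u,v)^\top\|\le (1-\sqrt{\mu/L})\|(u,v)^\top\|$. The equivalence constant between this weighted norm and the Euclidean norm depends only on $\mu/L$, and its cost is absorbed into the $\mathrm{cond}(\Xb_0)$ prefactor of \Cref{thm:MF-NAG} (and explains the mild deterioration of the stated rate from $1-\sqrt{\mu/L}$ to $1-\tfrac{1}{2}\sqrt{\mu/L}$ there).
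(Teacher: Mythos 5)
Your block-diagonalization of $\Tb_\mathrm{NAG}$ into $2\times 2$ companion blocks $\Tb_\lambda$ over an eigenbasis of $\Hb_0$, and the eigenvalue computation from $t^2-(1+\beta)(1-\eta\lambda)t+\beta(1-\eta\lambda)=0$, exactly track the paper's own proof, and your spectral calculations (nonpositive discriminant on $\lambda\in[\mu,L]$, common modulus $\sqrt{\beta(1-\eta\lambda)}\le 1-\sqrt{\mu/L}$) are correct. More importantly, you have spotted a genuine gap that the paper's proof glosses over. The paper passes from this spectral-radius bound directly to the operator-norm conclusion, but $\Tb_\lambda$ is non-normal and defective at both endpoints: $\Tb_L=\begin{pmatrix}0&0\\1&0\end{pmatrix}$ is nilpotent with Euclidean operator norm $1$, and $\Tb_\mu$ is a Jordan block with double eigenvalue $1-\sqrt{\mu/L}$. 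Concretely, with $\eta=1/L$ and $\ub$ a top eigenvector of $\Hb_0$ (which lies in $\cH$), $\vb=0$, one gets $\Tb_\mathrm{NAG}$ mapping $(\ub,\vb)$ to $(0,\ub)$, so $\|\Tb_\mathrm{NAG}(\ub,\vb)\|=\|\ub\|=\|(\ub,\vb)\|$ and the Euclidean-norm inequality in \Cref{lem:NAG-MF-contraction} fails as written. Identifying this is a real contribution.

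Your repair, however, does not prove the lemma as stated, and the final accounting of constants is wrong. Passing to a weighted inner product establishes a contraction in a different norm than the Euclidean one that both the lemma and the downstream induction in \Cref{proof:MF-NAG} use. Moreover, no single weight $\gamma$ can make every $\Tb_\lambda$, $\lambda\in[\mu,L]$, a rotation composed with a dilation of modulus $\sqrt{\beta(1-\eta\lambda)}$: the conjugating transform to rotation form degenerates precisely at $\lambda\in\{\mu,L\}$, where the block is defective, so any uniform weighted bound must concede a strictly worse rate than $1-\sqrt{\mu/L}$. Finally, the attribution of the $\tfrac{1}{2}$ factor is not what the paper does: the passage from $1-\sqrt{\mu/L}$ to $1-\tfrac{1}{2}\sqrt{\mu/L}$ in \Cref{thm:MF-NAG} is introduced deliberately in \Cref{proof:MF-NAG} by choosing $\theta=1-\tfrac{1}{2}\sqrt{\mu/L}$ strictly slower than $\rho=1-\sqrt{\mu/L}$ so that $\theta-\rho$ stays bounded away from zero in the geometric sum of \Cref{lem:summation}; it is not a norm-equivalence cost. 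The clean fix to the issue you found would be to restate \Cref{lem:NAG-MF-contraction} as a weighted-norm contraction at an explicitly worse rate, or as a uniform-in-$t$ estimate $\|\Tb_\mathrm{NAG}^t\|\le C\rho^t$ on $\cH\times\cH$, and to propagate that choice of norm through \Cref{lem:NAG-MF-induction} and the proof of \Cref{thm:MF-NAG}.
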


The price to pay for the faster rate of contraction is the additional perturbations. 
The $\biota_t$ term characterizes dynamics shift, which can be controlled as GD in \Cref{lem:GD-MF-induction}. 
The $\bzeta_t$ term characterizes higher-order terms in the dynamics \eqref{eq:NAG-MF-res}, which can be controlled by the updates $\Pb_t$ and $\Qb_t$. 
In GD, these terms correspond to the gradient so that they can be bounded if $\Rb_t$ shrinks and $\Xb_t$ and $\Yb_t$ are not too far away from $\Xb_0$ and $\Yb_0$. 
In NAG, we have 
\begin{align*}
    \Pb_t&=\eta\Rb_t\Yb_t + \eta\sum_{s=1}^t\beta^{t-s+1}\Rb_s\Yb_s, 
\end{align*}
and a similar equation holds for $\Qb_t$. 
If $\Rb_t$ shrinks at rate $\theta>\theta^2\geq\beta$, then we have an $O(\theta^t)$ upper bound for $\frob{\Pb_t}$ and $\frob{\Qb_t}$. 
We formalize the argument in the following induction lemma.  
\begin{lemma}\label{lem:NAG-MF-induction}
    Suppose $0<\beta\leq\theta^2<\theta<1$. If there exist some constants $C_1$ and $C_2$ such that for any $s\leq t$, 
    the NAG dynamics \eqref{eq:NAG-MF-res} yields 
    $\norm{\begin{pmatrix}
        \rb_s\\
        \rb_{s-1}
    \end{pmatrix}}\leq C_1\theta^s\norm{\begin{pmatrix}
        \rb_0\\
        \rb_{-1}
    \end{pmatrix}}$, $\frob{\Xb_s-\Xb_0}\leq C_2$, and $\frob{\Yb_s-\Yb_0}\leq C_2$, 
    then we have 
    \begin{align*}
        \norm{\bzeta_t}\leq C_3\theta^{2t}\norm{\begin{pmatrix}
        \rb_0\\
        \rb_{-1}
    \end{pmatrix}}^2, 
    \quad\text{and}\quad \norm{\biota_t}\leq C_4\theta^t\norm{\begin{pmatrix}
        \rb_0\\
        \rb_{-1}
    \end{pmatrix}}
    \end{align*}
    for some constants $C_3$ and $C_4$ depending on $C_1$ and $C_2$. 
    Moreover, if $C_1$ and $C_2$ satisfy 
    \begin{align}\label{eq:NAG-cond-1}
        \left(\max(\norm{\Xb_0},\norm{\Yb_0})+C_2\right){\eta C_1\norm{\begin{pmatrix}
            \rb_0\\
            \rb_{-1}
        \end{pmatrix}}}
        \leq{(1-\theta)^2}C_2, 
    \end{align}
    then we have $\frob{\Xb_{t+1}-\Xb_0}\leq C_2$ and $\frob{\Yb_{t+1}-\Yb_0}\leq C_2.$
\end{lemma}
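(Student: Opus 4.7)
The plan is to control $\zeta_t$ and $\iota_t$ separately, and then integrate the step bounds $\frob{P_s}$ to close the induction on $\frob{\Xb_{t+1}-\Xb_0}$ and $\frob{\Yb_{t+1}-\Yb_0}$. The starting point is an explicit recursion for the step sizes: unrolling the NAG update \eqref{eq:NAG-MF} from $\Xb_{-1}=\Xb_0$ gives a relation of the form $\Pb_s=-\eta(1+\beta)\Rb_s\Yb_s+\beta \Pb_{s-1}+\beta\eta\Rb_{s-1}\Yb_{s-1}$, and a telescoping of this recursion expresses $\Pb_s$ as $-\eta(1+\beta)\Rb_s\Yb_s-\eta\sum_{j=0}^{s-1}\beta^{s-j+1}\Rb_j\Yb_j$ (and symmetrically for $\Qb_s$). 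Using the inductive bound $\frob{\Rb_s}\le C_1\theta^s\frob{(\rb_0,\rb_{-1})}$ together with $\|\Yb_s\|\le\|\Yb_0\|+C_2$, and using the hypothesis $\beta\le\theta^2$ to dominate $\sum_{k\ge 1}(\beta/\theta)^k$ by a constant multiple of $\theta/(1-\theta)$, I obtain $\frob{\Pb_s}\le K\eta C_1(\|\Yb_0\|+C_2)\frob{(\rb_0,\rb_{-1})}\theta^s/(1-\theta)$ for a universal constant $K$, and likewise $\frob{\Qb_s}\le K\eta C_1(\|\Xb_0\|+C_2)\frob{(\rb_0,\rb_{-1})}\theta^s/(1-\theta)$.

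For $\zeta_t$, each of the four summands is a product of two quantities whose Frobenius norms are already bounded by $O(\theta^{t})$ or $O(\theta^{t-1})$. Specifically, $\|\vectorize(\Pb_t\Qb_t^\top)\|\le\frob{\Pb_t}\frob{\Qb_t}$ and the cross terms $\Rb_{t-1}\Yb_{t-1}\Qb_{t-1}^\top$ and $\Pb_{t-1}\Xb_{t-1}^\top\Rb_{t-1}$ are bounded by $\frob{\Rb_{t-1}}\cdot\|\Yb_{t-1}\|\cdot\frob{\Qb_{t-1}}$ and $\frob{\Pb_{t-1}}\cdot\|\Xb_{t-1}\|\cdot\frob{\Rb_{t-1}}$ respectively; plugging in the bounds above yields $\|\zeta_t\|\le C_3\theta^{2t}\frob{(\rb_0,\rb_{-1})}^2$ with $C_3$ polynomial in $C_1$, $C_2$, $\eta$, $\|\Xb_0\|$, $\|\Yb_0\|$, and $1/(1-\theta)$. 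For $\iota_t$, the core estimate is $\|\Hb_0-\Hb_s\|\le\frob{\Xb_s-\Xb_0}(\|\Xb_s\|+\|\Xb_0\|)+\frob{\Yb_s-\Yb_0}(\|\Yb_s\|+\|\Yb_0\|)\le 2C_2(2\max(\|\Xb_0\|,\|\Yb_0\|)+C_2)$, obtained from $\Ab\Ab^\top-\Bb\Bb^\top=(\Ab-\Bb)\Ab^\top+\Bb(\Ab-\Bb)^\top$; combined with $\|\rb_s\|\le C_1\theta^s\frob{(\rb_0,\rb_{-1})}$ and the triangle inequality on the definition of $\iota_t$, this yields $\|\iota_t\|\le C_4\theta^t\frob{(\rb_0,\rb_{-1})}$.

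For the induction step, I telescope $\Xb_{t+1}-\Xb_0=\sum_{s=0}^{t}\Pb_s$ and apply the $\frob{\Pb_s}$ bound. The outer geometric sum $\sum_{s=0}^{\infty}\theta^s=1/(1-\theta)$ combines with the $1/(1-\theta)$ factor already present inside each $\frob{\Pb_s}$ to yield $\frob{\Xb_{t+1}-\Xb_0}\le K^\prime\eta C_1(\|\Yb_0\|+C_2)\frob{(\rb_0,\rb_{-1})}/(1-\theta)^2$, which by assumption \eqref{eq:NAG-cond-1} is bounded above by $C_2$; the bound on $\Yb_{t+1}-\Yb_0$ is symmetric. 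The main obstacle I expect is the careful bookkeeping of the momentum-accumulated series: because $\Pb_s$ itself is already a weighted sum of past gradients with coefficients $\beta^{s-j+1}$, telescoping over $s$ produces a double sum that is only convergent because the hypothesis $\beta\le\theta^2<\theta$ gives geometric decay in both indices; if $\beta$ were permitted to approach $\theta$ this argument would break, so the interplay between the momentum parameter and the contraction rate of Lemma \ref{lem:NAG-MF-contraction} must be handled cleanly to yield the quadratic $(1-\theta)^2$ in \eqref{eq:NAG-cond-1} rather than a spurious $(1-\theta)^3$.
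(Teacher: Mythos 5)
Your proposal follows essentially the same route as the paper's proof: unroll $\Pb_s$ into a momentum-weighted sum of past gradients, use $\beta\leq\theta^2$ to collapse the inner geometric series to a $\theta^s/(1-\theta)$ bound, bound the four constituents of $\bzeta_t$ and the two of $\biota_t$ by products of already-controlled quantities, and telescope $\Xb_{t+1}-\Xb_0=\sum_{s\le t}\Pb_s$ to pick up the second $1/(1-\theta)$ factor that condition \eqref{eq:NAG-cond-1} absorbs. One small bookkeeping slip: the correct unrolling (with $\Xb_{-1}=\Xb_0$, $\Rb_{-1}=\Rb_0$, $\Yb_{-1}=\Yb_0$) is $\Pb_s=-\eta(1+\beta)\Rb_s\Yb_s-\eta\sum_{j=1}^{s-1}\beta^{s-j+1}\Rb_j\Yb_j$, i.e.\ the sum starts at $j=1$; your extra $j=0$ term $-\eta\beta^{s+1}\Rb_0\Yb_0$ is spurious (the $\beta^s\Pb_0$ and $\beta^s\eta\Rb_0\Yb_0$ pieces cancel exactly), though since you only use this for an upper bound it does no harm. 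The ``double sum'' issue you flag at the end is indeed the crux, and your observation that $\beta\leq\theta^2$ is precisely what caps the inner ratio $\beta/\theta\leq\theta$ and hence yields $(1-\theta)^2$ rather than $(1-\theta)(1-\beta/\theta)$ is the right reading of the hypothesis.
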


\Cref{lem:NAG-MF-induction} is similar to \Cref{lem:GD-MF-induction}. Again by choosing a sufficiently large $c$ to initialize $\Xb_0$, we can find a set of feasible coefficients for the induction. 
In particular, we plug in $\rho=1-\frac{\sqrt{\mu}}{\sqrt{L}}$, $\theta=1-\frac{\sqrt{\mu}}{2\sqrt{L}}$ and $\beta=\frac{\sqrt{L}-\sqrt{\mu}}{\sqrt{L}+\sqrt{\mu}}$, then $\underline{c}$ defined in \Cref{thm:MF-NAG} ensures the success of induction, hence the accelerated convergence rate of NAG is proved. 
The complete proof is provided in \Cref{proof:MF-NAG}. 

\begin{remark}\label{rmk:tightness}
    Our analysis differs from that of \cite{ward2023convergence}. 
    Their analysis is based on the Polyak-\L{}ojasiewicz (PL) inequality \citep{lojasiewicz1963topological}: $f(\Xb_t,\Yb)$ is approximately $\mu$-PL and $L$-smooth in $\Yb$, and the unbalanced initialization (large $\Xb_0$ small $\Yb_0$) ensures that only $\Yb$ matters to the convergence rate, as $\Xb$ is not changing by much. 
    Since the objective function in \eqref{eq:MF} is quadratic in $\Xb$, the problem has condition number $\hat{\kappa}\coloneqq \frac{L}{\mu}=O(\kappa^2)$. 
    With these notations, the complexity in \cite{ward2023convergence} reads as $O(\hat{\kappa}\log\frac{1}{\epsilon})$, which is standard for PL functions. 
    
    However, PL inequality cannot fully capture the properties of \eqref{eq:MF}, and the analysis in \cite{ward2023convergence} does not apply to the case where $\Xb_t$ and $\Yb_t$ are updated simultaneously rather than alternatingly. 
    In fact, if we fix $\Xb\equiv\Xb_0$ and optimize $\Yb$ only, then our initialization \eqref{eq:init-MF} makes the problem \emph{quasi-strongly convex} (QSC), which is strictly stronger than PL \citep{necoara2019linear}. For QSC functions, NAG can achieve $O(\sqrt{\hat{\kappa}}\log\frac{1}{\epsilon})$ convergence rate \cite{necoara2019linear}, while for PL functions the rate can only be $\Omega(\hat{\kappa}\log\frac{1}{\epsilon})$ \citep{yue2023lower}. 
    
    We note that simultaneously optimizing $\Xb$ and $\Yb$ causes the nonconvexity issue and hence \eqref{eq:MF} does not fit in the framework for QSC functions as it requires convexity. 
    Our results in \Cref{thm:MF-GD,thm:MF-NAG} match the ones for QSC functions and \Cref{thm:MF-NAG} further matches the lower bound for general smooth strongly convex functions \citep{nemirovski1983problem}, which generally exhibit more favorable properties than nonconvex optimization problems to which \eqref{eq:MF} belongs. 
    Hence, we conjecture that our rate bounds are tight for both GD and NAG. 
    However, rigorous theory is yet to be constructed to solidify our conjecture. 
    
\end{remark}

\section{Extension to Linear Neural Network}\label{sec:LNN}
Our analysis can be extended to the squared loss training of two-layer linear neural networks, which is equivalent to the following optimization problem: 
\begin{align}\label{eq:LNN}
    \min_{\Xb\in\RR^{m\times d},\Yb\in\RR^{n\times d}}~ f(\Xb,\Yb)=\frac{1}{2}\frob{\Lb-\Xb\Yb^\top\Db}^2.
\end{align}
Here, $\Db\in\RR^{n\times N}$ corresponds to all input data concatenated together, $\Lb\in\RR^{m\times N}$ denotes the labels, $N$ is the total number of training data samples, and $d$ is the network width. 
We make the following interpolation assumption, which is commonly adopted in the study of the convergence rate of linear neural networks \citep{du2019width,hu2020provable,wang2021modular}. 
\begin{assumption}[Interpolation]\label{asm:interpolation}
    There is $\Ab$ with $\cond(\Ab)=O(1)$ such that $\Lb=\Ab\Db$, $\rank(\Lb)=r$.
\end{assumption}
Under \Cref{asm:interpolation}, we can establish a linear convergence rate for NAG when the initialization is sufficiently unbalanced and $\Xb_0$ contains the column space of $\Lb$. 
\begin{theorem}\label{thm:LNN-NAG}
    Let $\tL=\sigma_1^2(\Xb_0)\cdot\lambda_{\max}(\Db\Db^\top)$, $\tmu=\sigma_r^2(\Xb_0)\cdot\lambda_{\min}(\Db\Db^\top)$. 
    Suppose $\Yb_0=0$, $\Xb_0$ is initialized such that $\colspan(\Xb_0)\supseteq\colspan(\Lb)$ and it satisfies 
    \begin{align}\label{eq:LNN-NAG-mu}
        \tmu p\geq{4\sqrt{2}\frob{\Lb\Db^\top}(1+p)}, 
    \end{align}
    where $p=\frac{\sqrt{\tmu}}{144\sqrt{\tL}}$ does not depend on the scaling of $\Xb_0$. 
    If we choose $\eta=\frac{1}{\tL}$ and $\beta=\frac{\sqrt{\tL}-\sqrt{\tmu}}{\sqrt{\tL}+\sqrt{\tmu}}$, then the $t$-th iterate of NAG ($\Xb_t$ and $\Yb_t$) will correspond to residual $\Rb_t=\Xb_t\Yb_t^\top\Db-\Lb$ satisfying 
    \begin{align*}
        \frob{\Rb_t}
        &\leq\frac{\sigma_r^2(\Xb_0)\sigma_{\min}(\Db)}{576}\left(1-\frac{\sqrt{\tmu}}{2\sqrt{\tL}}\right)^t.
    \end{align*}
    Equivalently, let $C=\frac{\sigma_r^2(\Xb_0)\sigma_{\min}(\Db)}{576\frob{\Lb\Db^\top}}$, then the iteration complexity for $\epsilon$ relative error is 
    \begin{align*}
        T=O\left(\frac{\sigma_1(\Xb_0)\sqrt{\lambda_{\max}(\Db\Db^\top)}}{\sigma_r(\Xb_0)\sqrt{\lambda_{\min}(\Db\Db^\top)}}\log\left(\frac{C}{\epsilon}\right)\right).
    \end{align*}
\end{theorem}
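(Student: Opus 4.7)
The plan is to adapt the proof of \Cref{thm:MF-NAG} to the linear neural network setting, where the only essential structural change is the presence of the data matrix $\Db$ on the right of $\Xb\Yb^\top$. Writing $\Rb_t=\Xb_t\Yb_t^\top\Db-\Lb$ and $\rb_t=\vectorize(\Rb_t)$, a direct expansion of the NAG update produces a residual recursion of the same form as \eqref{eq:NAG-MF-res}, but now with
\begin{align*}
\Hb_t=(\Db^\top\Yb_t\Yb_t^\top\Db)\otimes\Ib_m + (\Db^\top\Db)\otimes(\Xb_t\Xb_t^\top),
\end{align*}
so that the initialization $\Yb_0=0$ yields $\Hb_0=(\Db^\top\Db)\otimes(\Xb_0\Xb_0^\top)$. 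By the Kronecker spectral theorem, the nonzero eigenvalues of $\Hb_0$ are exactly the products $\sigma_j^2(\Db)\sigma_i^2(\Xb_0)$ over the nonzero singular values, so the largest and smallest nonzero eigenvalues of $\Hb_0$ restricted to its range are exactly $\tL$ and $\tmu$.

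Next I would identify the contraction subspace $\cH$ (the span of the positive-eigenvalue eigenvectors of $\Hb_0$), which in matrix form is
\begin{align*}
\cH=\{\Rb\in\RR^{m\times N}: \colspan(\Rb)\subseteq\colspan(\Xb_0),\ \rowspan(\Rb)\subseteq\rowspan(\Db)\}.
\end{align*}
The hypothesis $\colspan(\Xb_0)\supseteq\colspan(\Lb)$ together with \Cref{asm:interpolation} guarantees that the initial residual $\Rb_0=-\Lb=-\Ab\Db$ lies in $\cH$. Invariance across iterations then follows from two observations: every NAG update to $\Xb$ has the form $\Rb_s\Db^\top\Yb_s$, whose column span lies in $\colspan(\Rb_s)\subseteq\colspan(\Xb_0)$ by induction, so $\colspan(\Xb_t)\subseteq\colspan(\Xb_0)$ for all $t$; and the row span condition $\rowspan(\Rb_t)\subseteq\rowspan(\Db)$ is automatic since both $\Lb$ and $\Xb_t\Yb_t^\top\Db$ have rows in $\rowspan(\Db)$. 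Thus $\{\rb_t\}\subset\cH$ throughout the trajectory, and analogously for the error terms appearing in $\bxi_t$.

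Having established subspace preservation, I would invoke the direct analog of \Cref{lem:NAG-MF-contraction} on $\cH\times\cH$ with the effective strongly-convex parameters $(\tL,\tmu)$, yielding a per-step linear contraction rate of $1-\sqrt{\tmu/\tL}$ for $\Tb_\mathrm{NAG}$. I would then set up the induction as in \Cref{lem:NAG-MF-induction}, under the hypotheses $\frob{\Xb_s-\Xb_0}\leq C_2$, $\frob{\Yb_s-\Yb_0}\leq C_2$, and geometric decay of $\rb_s$ at rate $\theta=1-\sqrt{\tmu}/(2\sqrt{\tL})$. The higher-order term $\bzeta_t$ now involves $\Db$-weighted products such as $\vectorize(\Pb_t\Qb_t^\top\Db)$ and $\vectorize(\Rb_{s-1}\Db^\top\Yb_{s-1}\Qb_{s-1}^\top\Db)$, each of which inherits an $O(\theta^{2t})$ bound from $\frob{\Pb_t},\frob{\Qb_t}=O(\theta^t)$; the dynamics-shift term $\biota_t$ uses the expansion
\begin{align*}
\Hb_t-\Hb_0=(\Db^\top\Yb_t\Yb_t^\top\Db)\otimes\Ib_m+(\Db^\top\Db)\otimes(\Xb_t\Xb_t^\top-\Xb_0\Xb_0^\top),
\end{align*}
with each factor bounded via $\lambda_{\max}(\Db\Db^\top)$ and the proximity of $\Xb_t,\Yb_t$ to initialization.

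The main obstacle will be closing the induction in a way that reproduces the stated hypothesis \eqref{eq:LNN-NAG-mu}. The scale-free parameter $p=\sqrt{\tmu}/(144\sqrt{\tL})$ emerges as the per-step contraction margin divided by a numerical constant absorbing combinatorial factors from $\bzeta_t$ and $\biota_t$, while $\frob{\Lb\Db^\top}$ appears as the natural upper bound on the driving quantities $\Rb_0\Db^\top=-\Lb\Db^\top$ that initialize the $\Yb$-sequence via $\Qb_0=\eta\Db\Rb_0^\top\Xb_0$-type terms. The inequality \eqref{eq:LNN-NAG-mu} is exactly the condition that makes the Frobenius-ball radius $C_2\sim p\,\sigma_r(\Xb_0)$ large enough to absorb all quadratic corrections yet small enough that $\Hb_t$ remains spectrally close to $\Hb_0$; verifying this is the most delicate calculation. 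Once the induction closes, the rate $(1-\sqrt{\tmu}/(2\sqrt{\tL}))^t$ follows, and the iteration complexity $T=O(\sqrt{\tL/\tmu}\,\log(C/\epsilon))$ is obtained by taking logarithms.
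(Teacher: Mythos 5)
Your plan follows the paper's framework step for step (vectorize a residual, identify the contraction subspace of the frozen operator $\Hb_0$, apply the NAG contraction on that subspace, close the induction on the higher-order error), and you correctly identify the spectral parameters $\tL,\tmu$ and the subspace invariance. The one substantive departure is the choice of which residual to track. You propose to track $\rb_t=\vectorize(\Rb_t)$ with $\Rb_t\in\RR^{m\times N}$ and $\Hb_0=(\Db^\top\Db)\otimes(\Xb_0\Xb_0^\top)\in\RR^{mN\times mN}$. The paper instead tracks the \emph{projected} residual $\tRb_t=\Rb_t\Db^\top\in\RR^{m\times n}$ (see Lemma~\ref{lem:NAG-LNN-dynamics}) with $\Hb_0=(\Db\Db^\top)\otimes(\Xb_0\Xb_0^\top)\in\RR^{mn\times mn}$, and only converts back to $\frob{\Rb_t}\leq\sigma_{\min}^{-1}(\Db)\frob{\tRb_t}$ at the very last step using the row-space preservation.

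Both choices give the same nonzero spectrum (since $\Db\Db^\top$ and $\Db^\top\Db$ share nonzero eigenvalues) and hence the same contraction rate $1-\sqrt{\tmu/\tL}$, so your route is not wrong in principle. But the paper's choice buys two things that you would lose. First, the gradients are $\nabla_{X}f=\tRb\Yb$ and $\nabla_{Y}f=\tRb^\top\Xb$, so the update steps $\Pb_t,\Qb_t$ are bounded directly by $\frob{\tRb_t}$; in your formulation the same bounds read $\frob{\Rb_t\Db^\top}\leq\norm{\Db}\frob{\Rb_t}$, so each of $\frob{\Pb_t},\frob{\Qb_t}$ picks up an extra $\norm{\Db}$, and since $\bzeta_t$ still carries one more $\Db$ from $\vectorize(\Pb_t\Qb_t^\top\Db)$, your $C_3$ ends up with $\norm{\Db}^3$ where the paper's has $\lambda_{\max}(\Db\Db^\top)=\norm{\Db}^2$. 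When you also translate your initial scale $\norm{\rb_0}=\frob{\Lb}$ to $\frob{\Lb\Db^\top}$ via $\frob{\Lb}\leq\sigma_{\min}^{-1}(\Db)\frob{\Lb\Db^\top}$, the closing condition of the induction acquires an extra $\cond(\Db)$ factor relative to \eqref{eq:LNN-NAG-mu}, so you would not reproduce the stated hypothesis or the constant $\sigma_r^2(\Xb_0)\sigma_{\min}(\Db)/576$ exactly. Second, the hypothesis in \Cref{thm:LNN-NAG} is written in terms of $\frob{\Lb\Db^\top}=\norm{\trb_0}$, which is precisely the initial magnitude in the $\tRb$-formulation; in the $\Rb$-formulation that quantity does not appear naturally. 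In short: switch your driving quantity from $\Rb_t$ to $\tRb_t=\Rb_t\Db^\top$ (i.e., treat the problem as ``matrix factorization in the metric induced by $\Db\Db^\top$''), run the identical induction, and then translate to $\frob{\Rb_t}$ only at the end. With that one change, the rest of your outline lines up with the paper's proof via Lemmas~\ref{lem:NAG-LNN-dynamics}--\ref{lem:NAG-LNN-induction}.
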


As \Cref{thm:LNN-NAG} shows, if our initialization guarantees the column space of $\Xb_0$ contains columns of $\Lb$, then the residual shrinks at a linear rate. 
In the worst case, the columns of $\Lb$ span the whole space of $\RR^m$, hence $d$ should be at least $m$. 
However, when the data exhibits some low-dimensional properties, e.g., $\Db$ is low-rank, then $r$ can be much smaller than $m$ and $N$. In this case, an initialization similar to \eqref{eq:init-MF} can meet the requirement of \Cref{thm:LNN-NAG}. 
Moreover, note that the convergence rate depends on both $\Db$ and $\Xb_0$, hence by orthonormalization we can make $\cond(\Xb_0)=1$ for a faster rate. When $r\leq d\ll \min(m,N)$, such orthonormalization is affordable as it takes $O(md^2)$ time rather than $O(mN^2)$ in the worst case. 
We summarize these initialization options: 
\begin{align}
    &d\geq r,\quad \bPhi\in\RR^{N\times d},\quad [\bPhi]_{i,j}\sim\cN(0,1/d),\quad \Xb_0=c\cdot\Lb\bPhi, \quad \Yb_0=0; \label{eq:init-LNN-1}\\   
    &d\geq r,\quad \bPhi\in\RR^{N\times d},\quad [\bPhi]_{i,j}\sim\cN(0,1/d),\quad \Xb_0=c\cdot\mathsf{Orth}(\Lb\bPhi), \quad \Yb_0=0; \label{eq:init-LNN-2}\\   
    &d\geq m,\quad \bPhi\in\RR^{m\times d},\quad [\bPhi]_{i,j}\sim\cN(0,1/d),\quad \Xb_0=c\cdot\bPhi, \quad \Yb_0=0; \label{eq:init-LNN-3}
\end{align}
Here, $\mathsf{Orth}(\cdot)$ denotes the orthonormalization result whose columns are orthonormal. 
By applying singular value bounds and invoking \Cref{thm:LNN-NAG}, we obtain the following corollaries. 

\begin{corollary}\label{cor:LNN-NAG-1}
    Suppose initialization \eqref{eq:init-LNN-1} is applied with some sufficiently large $c$. 
    For any $0<\tau<c_1$, $0<\delta<1$, if $d\geq r-1+\Omega(\log\frac{1}{\delta})$, then with probability at least $1-\delta$, NAG finds $\Xb_T$ and $\Yb_T$ such that $f(\Xb_T,\Yb_T)\leq\epsilon\frob{\Lb\Db^\top}^2$ where 
    \begin{align*}
        T=O\left(\frac{d\cdot\cond(\Lb)}{\tau(d-r+1)}\frac{\sqrt{\lambda_{\max}(\Db\Db^\top)}}{\sqrt{\lambda_{\min}(\Db\Db^\top)}}\log\frac{1}{\epsilon}\right).
    \end{align*}
\end{corollary}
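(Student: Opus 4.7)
The plan is to derive Corollary~\ref{cor:LNN-NAG-1} as a direct specialization of Theorem~\ref{thm:LNN-NAG} to the unbalanced Gaussian initialization \eqref{eq:init-LNN-1}, using Proposition~\ref{prop:s-init-MF} as the probabilistic tool that converts the abstract ratio $\sigma_1(\Xb_0)/\sigma_r(\Xb_0)$ appearing in the theorem's complexity bound into an explicit expression in $d$, $r$, and $\cond(\Lb)$. Since Theorem~\ref{thm:LNN-NAG} already supplies the per-iteration contraction, no new dynamical analysis is required; everything reduces to (a) verifying the two structural hypotheses of that theorem and (b) controlling the extremal singular values of $\Xb_0$.

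First I would apply Proposition~\ref{prop:s-init-MF} with the substitution $\Ab\mapsto\Lb$, noting that $\rank(\Lb)=r$ by Assumption~\ref{asm:interpolation} and that $\bPhi\in\RR^{N\times d}$ has the required i.i.d.\ $\cN(0,1/d)$ entries. On the resulting event of probability at least $1-\delta$, this gives the bounds
$$\sigma_r(\Xb_0)\geq\frac{\tau(\sqrt{d}-\sqrt{r-1})}{\sqrt{d}}\,c\,\sigma_r(\Lb),\quad \sigma_1(\Xb_0)\leq\frac{2\sqrt{d}+\sqrt{r}}{\sqrt{d}}\,c\,\sigma_1(\Lb),\quad \cond(\Xb_0)\leq\frac{6d}{\tau(d-r+1)}\cond(\Lb),$$
while the width assumption $d\geq r-1+\Omega(\log(1/\delta))$ is exactly what drives the failure probability $3e^{-\min\{(d-r+1)\log(1/(c_1\tau)),\,c_2 d,\,d/2\}}$ below $\delta$. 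On the same event, $\Xb_0=c\Lb\bPhi$ has rank $r=\rank(\Lb)$, so $\colspan(\Xb_0)=\colspan(\Lb)$; in particular the inclusion hypothesis of Theorem~\ref{thm:LNN-NAG} holds.

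Next I would check condition \eqref{eq:LNN-NAG-mu}. The critical observation is that $p=\sqrt{\tmu}/(144\sqrt{\tL})$ depends only on the ratio $\tmu/\tL=\cond^{-2}(\Xb_0)\cdot\lambda_{\min}(\Db\Db^\top)/\lambda_{\max}(\Db\Db^\top)$, which is invariant under the rescaling $\Xb_0\mapsto c\Xb_0$, whereas $\tmu=\sigma_r^2(\Xb_0)\lambda_{\min}(\Db\Db^\top)$ is quadratic in $c$ and the right hand side $4\sqrt{2}\frob{\Lb\Db^\top}(1+p)$ is $c$-independent. Hence \eqref{eq:LNN-NAG-mu} is automatically satisfied once $c$ exceeds an explicit problem-dependent threshold that does not depend on the target accuracy $\epsilon$; this is precisely the ``some sufficiently large $c$'' quantifier in the corollary statement.

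Finally I would read off the iteration complexity from Theorem~\ref{thm:LNN-NAG} and substitute the bound on $\cond(\Xb_0)$ from Step~1:
$$T=O\!\left(\cond(\Xb_0)\,\sqrt{\frac{\lambda_{\max}(\Db\Db^\top)}{\lambda_{\min}(\Db\Db^\top)}}\,\log\frac{C}{\epsilon}\right)=O\!\left(\frac{d\cdot\cond(\Lb)}{\tau(d-r+1)}\,\sqrt{\frac{\lambda_{\max}(\Db\Db^\top)}{\lambda_{\min}(\Db\Db^\top)}}\,\log\frac{1}{\epsilon}\right),$$
where $\log C$ is absorbed into the $O(\cdot)$ because $C$ depends only on the fixed constant $c$ and the problem data, not on $\epsilon$. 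A routine bookkeeping step converts the residual-norm guarantee $\frob{\Rb_T}\leq\epsilon\frob{\Lb\Db^\top}$ produced by the theorem into the $f(\Xb_T,\Yb_T)\leq\epsilon\frob{\Lb\Db^\top}^2$ form requested in the corollary via $f=\tfrac{1}{2}\frob{\Rb}^2$ and a relabeling $\epsilon\mapsto\sqrt{2\epsilon}$, which perturbs the logarithm only by an $O(1)$ factor. I do not anticipate any serious obstacle: the single delicate point is the scale-invariance of $p$, which is exactly what makes the ``sufficiently large $c$'' needed for \eqref{eq:LNN-NAG-mu} compatible with a $c$ that stays bounded as the target accuracy $\epsilon$ shrinks.
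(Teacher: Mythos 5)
Your proof is correct and follows essentially the same route as the paper's one-line argument: invoke Proposition~\ref{prop:s-init-MF} with $\Ab\mapsto\Lb$ to bound $\cond(\Xb_0)$ with probability $1-\delta$, then plug into Theorem~\ref{thm:LNN-NAG}. You simply spell out the auxiliary checks (column-space inclusion, scale-invariance of $p$ making \eqref{eq:LNN-NAG-mu} achievable for large $c$, the $f$ vs.\ $\frob{\Rb}$ bookkeeping) that the paper leaves implicit.
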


\begin{corollary}\label{cor:LNN-NAG-2}
    Suppose initialization \eqref{eq:init-LNN-2} is applied with some sufficiently large $c$. 
    If $d\geq r$, then with probability $1$, NAG finds $\Xb_T$ and $\Yb_T$ such that $f(\Xb_T,\Yb_T)\leq\epsilon\frob{\Lb\Db^\top}^2$ where 
    \begin{align*}
        T=O\left(\sqrt{\frac{\lambda_{\max}(\Db\Db^\top)}{\lambda_{\min}(\Db\Db^\top)}}\log\frac{1}{\epsilon}\right).
    \end{align*}
\end{corollary}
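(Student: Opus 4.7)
The plan is to deduce \Cref{cor:LNN-NAG-2} as a direct specialization of \Cref{thm:LNN-NAG} to the orthonormal unbalanced initialization \eqref{eq:init-LNN-2}. All the analytic machinery is already in place; what remains is to verify the two hypotheses of \Cref{thm:LNN-NAG}, namely $\colspan(\Xb_0)\supseteq\colspan(\Lb)$ and the quantitative condition \eqref{eq:LNN-NAG-mu}, and then read off what the complexity bound becomes in this special case.

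First, I would establish the span condition and the conditioning of $\Xb_0$. Let $\Lb=U_L\bSigma_L V_L^\top$ be a thin SVD, so that $\rank(\Lb)=r$. Then $\Lb\bPhi$ and $V_L^\top\bPhi$ share the same rank, and $V_L^\top\bPhi\in\RR^{r\times d}$ has rank $r$ almost surely because $d\geq r$ and $\bPhi$ is a Gaussian matrix. Hence with probability $1$, $\colspan(\Lb\bPhi)=\colspan(\Lb)$, and applying $\mathsf{Orth}$ followed by scaling by $c$ yields an $\Xb_0$ whose first $r$ singular values all equal $c$ (with the remaining $d-r$ singular values equal to $0$) and whose column span still equals $\colspan(\Lb)$. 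In particular $\sigma_1(\Xb_0)=\sigma_r(\Xb_0)=c$, so $\cond(\Xb_0)=1$, and $\colspan(\Xb_0)\supseteq\colspan(\Lb)$ holds almost surely, which accounts for the ``with probability $1$'' in the corollary (in contrast to the tail-bound statement of \Cref{cor:LNN-NAG-1}).

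Next, I would check \eqref{eq:LNN-NAG-mu}. Substituting the above singular values into the definitions of \Cref{thm:LNN-NAG} yields
\begin{align*}
    \tL=c^2\lambda_{\max}(\Db\Db^\top),\qquad \tmu=c^2\lambda_{\min}(\Db\Db^\top),\qquad p=\frac{1}{144}\sqrt{\frac{\lambda_{\min}(\Db\Db^\top)}{\lambda_{\max}(\Db\Db^\top)}},
\end{align*}
so $p$ is independent of $c$ (consistent with the statement of \Cref{thm:LNN-NAG} that $p$ does not depend on the scaling of $\Xb_0$). Inequality \eqref{eq:LNN-NAG-mu} then reduces to $c^2\lambda_{\min}(\Db\Db^\top)\,p\geq 4\sqrt{2}\frob{\Lb\Db^\top}(1+p)$, a purely quadratic-in-$c$ condition that is satisfied once $c$ exceeds an explicit threshold depending only on $\Db$ and $\Lb$. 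This is precisely the ``sufficiently large $c$'' assumption in the statement of the corollary.

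Finally, I would plug these substitutions into the complexity formula of \Cref{thm:LNN-NAG}. Because $\sigma_1(\Xb_0)/\sigma_r(\Xb_0)=1$, the prefactor $\sigma_1(\Xb_0)\sqrt{\lambda_{\max}(\Db\Db^\top)}/\bigl(\sigma_r(\Xb_0)\sqrt{\lambda_{\min}(\Db\Db^\top)}\bigr)$ collapses to $\sqrt{\lambda_{\max}(\Db\Db^\top)/\lambda_{\min}(\Db\Db^\top)}$, which gives the advertised rate. The normalizing constant $C$ in $\log(C/\epsilon)$ depends on $c$ and on problem-dependent quantities but not on $\epsilon$, so $\log(C/\epsilon)=\log(1/\epsilon)+O(1)$ and is absorbed into the $O(\cdot)$, yielding $T=O\bigl(\sqrt{\lambda_{\max}(\Db\Db^\top)/\lambda_{\min}(\Db\Db^\top)}\log(1/\epsilon)\bigr)$. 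There is no real obstacle here since \Cref{thm:LNN-NAG} already does the heavy lifting; the only delicate point, handled cleanly by the rank argument above, is arguing that the event $\rank(V_L^\top\bPhi)=r$ has probability exactly $1$ so that the randomness can be discharged without a failure probability.
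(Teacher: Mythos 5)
Your proposal is correct and takes essentially the same route as the paper, whose proof is a one-liner noting that orthonormalization gives $\cond(\Xb_0)=1$ and the result follows from \Cref{thm:LNN-NAG}. You supply the extra (and welcome) detail that the paper leaves implicit: that $\rank(V_L^\top\bPhi)=r$ almost surely so $\colspan(\Xb_0)\supseteq\colspan(\Lb)$ holds with probability $1$, that \eqref{eq:LNN-NAG-mu} reduces to a $c^2$-condition because $p$ is scale-invariant, and that $\log(C/\epsilon)$ is absorbed into the $O(\cdot)$.
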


\begin{corollary}\label{cor:LNN-NAG-3}
    Suppose initialization \eqref{eq:init-LNN-3} is applied with some sufficiently large $c$. 
    For any $0<\tau<c_1$, $0<\delta<1$, if $d\geq m-1+\Omega(\log\frac{1}{\delta})$, then with probability at least $1-\delta$, NAG finds $\Xb_T$ and $\Yb_T$ such that $f(\Xb_T,\Yb_T)\leq\epsilon\frob{\Lb\Db^\top}^2$ where
    \begin{align*}
        T=O\left(\frac{d}{\tau(d-m+1)}\frac{\sqrt{\lambda_{\max}(\Db\Db^\top)}}{\sqrt{\lambda_{\min}(\Db\Db^\top)}}\log\frac{1}{\epsilon}\right).
    \end{align*}
\end{corollary}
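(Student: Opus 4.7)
My plan is to derive \Cref{cor:LNN-NAG-3} as a direct specialization of \Cref{thm:LNN-NAG}. Three items must be verified: (i) the column-space inclusion $\colspan(\Xb_0)\supseteq\colspan(\Lb)$; (ii) the unbalance condition \eqref{eq:LNN-NAG-mu}; and (iii) quantitative control of the singular values of $\Xb_0=c\bPhi$. Item (i) is free: since $d\geq m$ and $\bPhi\in\RR^{m\times d}$ has i.i.d.\ Gaussian entries, $\Xb_0$ has full row rank $m$ with probability one, so $\colspan(\Xb_0)=\RR^m\supseteq\colspan(\Lb)$ regardless of the value of $r=\rank(\Lb)$.

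For item (iii), I would establish a Gaussian-matrix analogue of \Cref{prop:s-init-MF} in which $\bPhi$ itself is the unstructured random factor, with no $\Ab$ prefix. The same extreme-singular-value bounds for rectangular Gaussian matrices that underlie \Cref{prop:s-init-MF}, applied directly to the $m\times d$ matrix $\bPhi$ (substituting $m$ for $r$), should yield, with probability at least $1-3e^{-\min\{(d-m+1)\log(1/(c_1\tau)),\,c_2 d,\,d/2\}}$, the two-sided estimate
\[
\frac{\tau(\sqrt{d}-\sqrt{m-1})}{\sqrt{d}}\,c
\;\leq\; \sigma_m(\Xb_0)
\;\leq\; \sigma_1(\Xb_0)
\;\leq\; \frac{2\sqrt{d}+\sqrt{m}}{\sqrt{d}}\,c.
\]
Since $r\leq m$ forces $\sigma_r(\Xb_0)\geq\sigma_m(\Xb_0)$, this delivers the condition-number bound $\sigma_1(\Xb_0)/\sigma_r(\Xb_0)=O(d/(\tau(d-m+1)))$; requiring $d\geq m-1+\Omega(\log(1/\delta))$ then drives the failure probability below $\delta$, exactly as in the matrix-factorization setting.

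For item (ii), I would exploit the scaling structure of \eqref{eq:LNN-NAG-mu}: the ratio $p=\sqrt{\tmu}/(144\sqrt{\tL})$ is invariant under $c\mapsto\alpha c$, whereas $\tmu$ itself scales as $c^2$. Hence the left-hand side of \eqref{eq:LNN-NAG-mu} grows quadratically in $c$ while the right-hand side depends only on $\frob{\Lb\Db^\top}$ and $p$, so any $c$ above a fixed threshold (depending on $\frob{\Lb\Db^\top}$, $\lambda_{\min}(\Db\Db^\top)$, and the already-controlled $\cond(\Xb_0)$) suffices. With the three items in hand, substituting the singular-value ratio into the complexity bound of \Cref{thm:LNN-NAG} and folding the polylogarithmic $\log C$ factor into $\log(1/\epsilon)$ yields the stated iteration count.

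The only genuinely technical step is the Gaussian singular-value estimate in item (iii); this is essentially the same calculation that powers \Cref{prop:s-init-MF}, with the role of the rank-$r$ random matrix $\Ab\bPhi$ replaced by the unconstrained Gaussian $\bPhi$, so the substitution $r\mapsto m$ is the only structural change. I do not anticipate an obstacle beyond bookkeeping: the requirement $d\geq m$ (rather than $d\geq r$) is the price paid for not projecting $\bPhi$ through a rank-$r$ matrix first, and it shows up cleanly as $r\mapsto m$ in both the probability and the condition-number estimates.
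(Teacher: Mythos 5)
Your proposal is essentially the paper's own argument, expanded into three explicit checks. The paper's proof is a two-line specialization: apply the Gaussian extreme-singular-value bounds (\Cref{prop:s-lower,prop:s-upper}) directly to the $m\times d$ Gaussian matrix to get $\cond(\Xb_0)=O(d/(\tau(d-m+1)))$, then plug into \Cref{thm:LNN-NAG}. Your items (i) and (iii) carry out exactly this: the full-row-rank observation for $d\geq m$ gives the column-space inclusion for free, and the extreme-singular-value bounds (now with $r\mapsto m$ since there is no rank-$r$ prefix $\Ab$) give the condition-number estimate and the probability statement. Your item (ii), showing the unbalance threshold \eqref{eq:LNN-NAG-mu} is met because its left side scales as $c^2$ while the right side is $c$-invariant, is correct and is the content of the ``sufficiently large $c$'' hypothesis.

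One small wrinkle worth flagging: in \Cref{thm:LNN-NAG} the quantity $\tmu$ is written with $\sigma_r(\Xb_0)$ where $r=\rank(\Lb)$, but for initialization \eqref{eq:init-LNN-3} we have $\rank(\Xb_0)=m\geq r$, so the smallest nonzero eigenvalue of $\Hb_0$ on the relevant subspace is actually $\sigma_m^2(\Xb_0)\lambda_{\min}(\Db\Db^\top)$, not $\sigma_r^2(\Xb_0)\lambda_{\min}(\Db\Db^\top)$. The paper resolves this implicitly by phrasing the corollary's complexity in terms of $\cond(\Xb_0)=\sigma_1(\Xb_0)/\sigma_m(\Xb_0)$ rather than $\sigma_1/\sigma_r$. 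You sidestep the same issue by noting $\sigma_r\geq\sigma_m$ so that $\sigma_1/\sigma_r\leq\sigma_1/\sigma_m$, which yields the same $O(d/(\tau(d-m+1)))$ bound. The final iteration count is identical either way, but to be fully rigorous you should observe that the $\tmu$ entering the contraction argument (and hence the step size and momentum) must be the smallest nonzero eigenvalue $\sigma_m^2(\Xb_0)\lambda_{\min}(\Db\Db^\top)$, and your singular-value estimate controls precisely that quantity. With that clarification, your proof is complete and matches the paper's route.
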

\begin{remark}
    While we only consider NAG in this section, our analysis can be directly applied to GD and obtain $O\Bigl(\frac{\sigma_1^2(\Xb_0)\lambda_{\max}(\Db\Db^\top)}{\sigma_r^2(\Xb_0)\lambda_{\min}(\Db\Db^\top)}\log\frac{1}{\epsilon}\Bigr)$ convergence rate with initializations \eqref{eq:init-LNN-1} to \eqref{eq:init-LNN-3}. 
\end{remark}

\Cref{cor:LNN-NAG-2,cor:LNN-NAG-3} show accelerated convergence rate of NAG, as their dependence on the condition number $\kappa\coloneqq\frac{\lambda_{\max}(\Db\Db^\top)}{\lambda_{\min}(\Db\Db^\top)}=\cond^2(\Db)$ is $O(\sqrt{\kappa})$ rather than $O(\kappa)$, matching the results in \cite{wang2021modular} for HB and \cite{liu2022convergence} for NAG. 
Meanwhile, \Cref{cor:LNN-NAG-1} has an additional dependence on $\cond(\Lb)$. 
Under \Cref{asm:interpolation}, $\cond(\Lb)=O(\sqrt{\kappa})$ and hence the overall dependence is $O(\kappa)$. 
Although this is slower than NAG with initialization \eqref{eq:init-LNN-2} or \eqref{eq:init-LNN-3}, it still outperforms GD with initialization \eqref{eq:init-LNN-1}, which has $O(\kappa^2)$ dependence. 
Compared to previous results listed in \Cref{tab:LNN}, we only require the network width to be $\Omega(r+\log\frac{1}{\delta})$ or $\Omega(m+\log\frac{1}{\delta})$ depending on the initialization and there is no additional dependence on the input rank or condition number. 
When the data is low-rank, NAG with initialization \eqref{eq:init-LNN-1} enables the sublinear-width (w.r.t. output dimension and sample size) network to converge linearly. 
It can be further accelerated if orthonormalization is adopted \eqref{eq:init-LNN-2}, which echos the orthogonal initialization in \cite{hu2020provable,wang2021modular}. 
In the general case, our analysis still provides a tighter result, as \eqref{eq:init-LNN-3} only requires the width to be $\Omega(m+\log\frac{1}{\delta})$.

\section{Numerical Experiment}\label{sec:experiment}
We validate our results via numerical experiments. 
For matrix factorization \eqref{eq:MF}, we construct $\Ab=\Ub\bSigma\Vb^\top\in\RR^{100\times 80}$, where $\bSigma\in\RR^{5\times 5}$ is diagonal with $\sigma_1(\bSigma)=1$ and $\sigma_5(\bSigma)=0.2$, and $\Ub$ and $\Vb$ are orthonormal matrices. 
We set different levels of overparameterization ($d\geq 5$) and initialize $\Xb_0$ and $\Yb_0$ according to \eqref{eq:init-MF} with $c=50\sqrt{d}$. 
For linear neural network \eqref{eq:LNN}, we construct the input data matrix $\Db=\Ub\bSigma\Vb^\top\in\RR^{80\times 120}$, where $\bSigma\in\RR^{5\times 5}$ is diagonal with $\sigma_1(\bSigma)=1$ and $\sigma_5(\bSigma)=0.5$, $\Ub$ is orthonormal and $\Vb$ is Gaussian. We use a Gaussian matrix $\Ab\in\RR^{100\times 80}$ to construct the label matrix $\Lb=\Ab\Db$. 
We keep $c=50\sqrt{d}$ and initialize $\Xb_0$ and $\Yb_0$ according to \eqref{eq:init-LNN-1}. 
We run all experiments with $10$ different initialization seeds and take the average.

\begin{wrapfigure}{r}{0.5\textwidth}
    \vspace{-0.25in}
    \centering
    \includegraphics[width=1.0\linewidth]{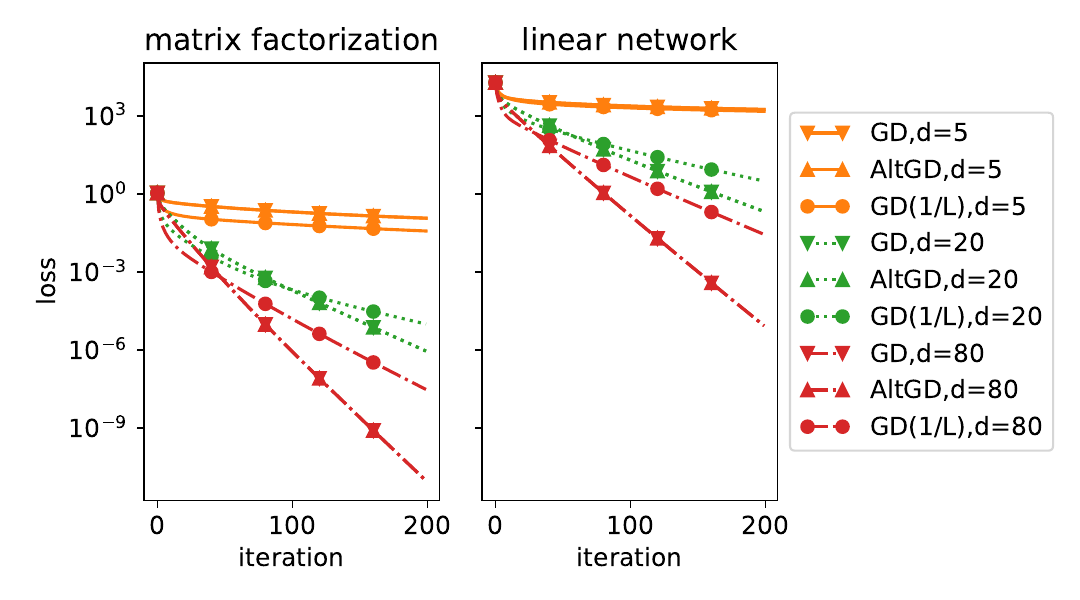}
    \vspace{-0.3in}
    \caption{\small \it GD and AltGD achieve similar performance. The left plot is for \eqref{eq:MF}, and the right plot is for \eqref{eq:LNN}. }
    \label{fig:result-1}
    \vspace{-0.25in}
\end{wrapfigure}
We first compare GD and AltGD. For matrix factorization, We use the same initialization and the same step size $\eta=2/(L+\mu)$, where $L$ and $\mu$ are computed as defined in \Cref{thm:MF-GD,thm:MF-NAG}. 
For 
\noindent linear neural networks, $L$ and $\mu$ are replaced by $\tL$ and $\tmu$ in \Cref{thm:LNN-NAG}. 
As shown in \Cref{fig:result-1}, they perform very similarly and the loss curves are overlapped. 
To better illustrate, we additionally use $\eta=1/L$ for GD, and it performs differently from GD/AltGD with $\eta=2/(L+\mu)$.

\begin{wrapfigure}{r}{0.5\textwidth}
  \vspace{-0.2in}
    \centering
    \includegraphics[width=1.0\linewidth]{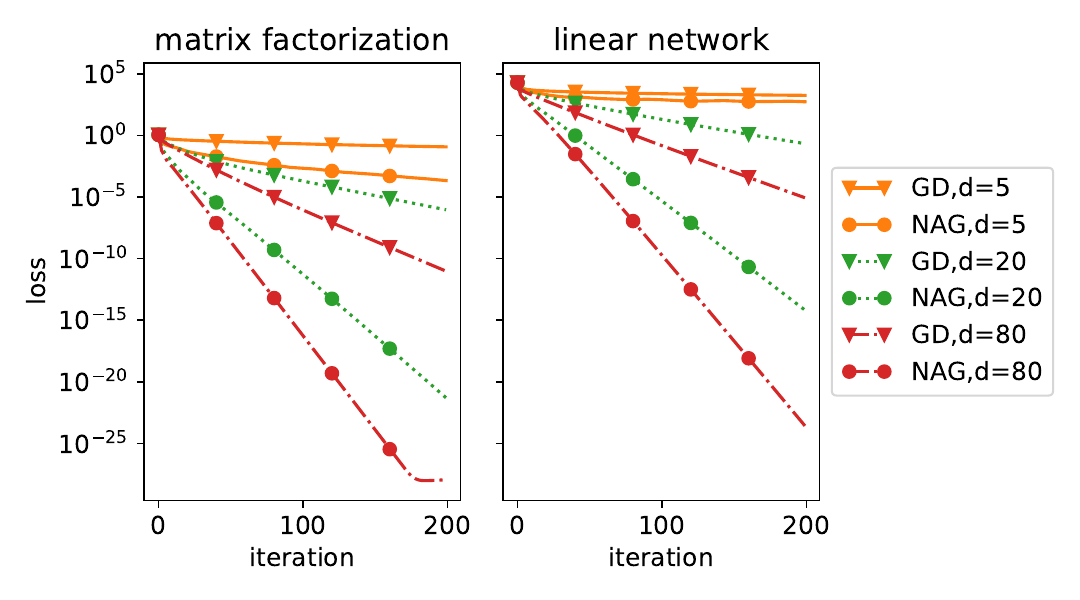}
    \vspace{-0.3in}
    \caption{\small \it NAG converges faster than GD. The left plot is for \eqref{eq:MF}, and the right plot is for \eqref{eq:LNN}. 
    } 
    \vspace{-0.25in}
    
  \label{fig:result-2}
\end{wrapfigure}
We then compare GD and NAG. 
For matrix factorization, we use $\eta=2/(L+\mu)$ for GD and use $\eta=1/L$ and $\beta=(\sqrt{L}-\sqrt{\mu})/(\sqrt{L}+\sqrt{\mu})$ for NAG, where $L$ and $\mu$ are computed as defined in \Cref{thm:MF-NAG}. 
For linear neural networks, we replace $L$ and $\mu$ by $\tL$ and $\tmu$ defined in \Cref{thm:LNN-NAG}. 
The results are shown in \Cref{fig:result-2}. 
As illustrated, NAG exhibits much faster convergence than GD. 
Moreover, a higher overparameterization level helps accelerate convergence, as predicted by the prefactor $O(\mathrm{poly}(d(d-r+1)^{-1}))$ in our iteration complexity. 

\begin{wrapfigure}{r}{0.5\textwidth}
    \vspace{-0.25in}
    \centering
    \includegraphics[width=1.0\linewidth]{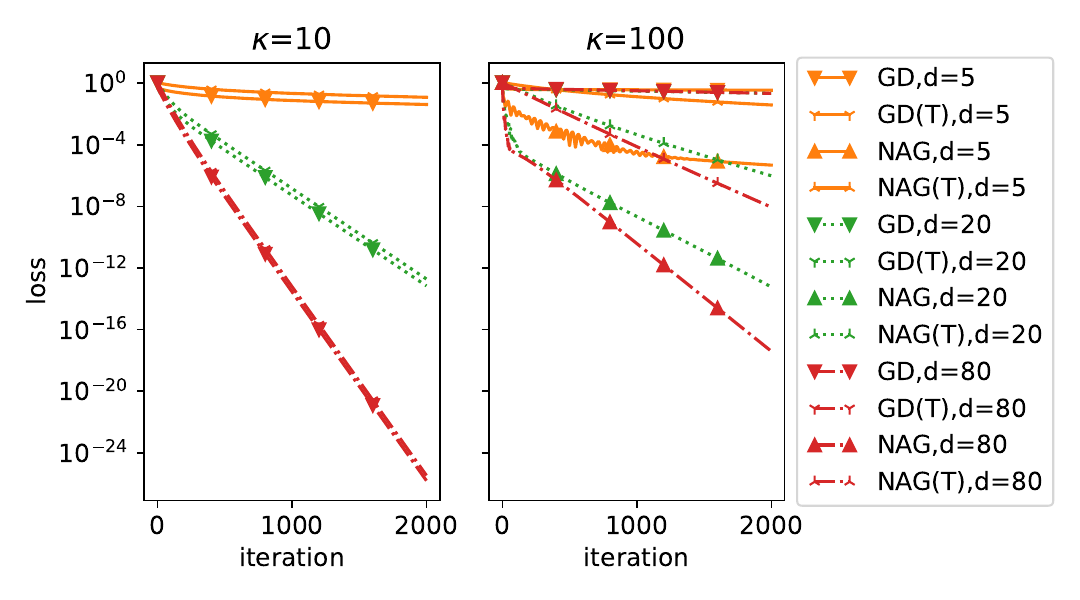}
    \vspace{-0.3in}
    \caption{\small \it Comparison of predicted loss and numerical loss for matrix factorization. The left plot is for GD where $\kappa=10$, and the right plot is for GD and NAG where $\kappa=100$. (T) denotes theory prediction. }
      \vspace{-0.3in}
    \label{fig:result-3}
\end{wrapfigure}
To further illustrate the tightness of our theory, we compare our theoretical predictions with the actual loss in matrix factorization, as shown in \Cref{fig:result-3}. We set $c=200\sqrt{d}$ and $\sigma_5(\bSigma)\in\{0.1,0.01\}$, keeping other settings unchanged. 
The theoretical prediction at step $t$ is computed as $(1-\mu/L)^{2t} \cdot f(\Xb_0,\Yb_0)$ for GD and $(1-\sqrt{\mu}/(2\sqrt{L}))^{2t} \cdot f(\Xb_0,\Yb_0)$ for NAG. We observe that the slope of the predicted loss closely matches the actual loss, supporting the tightness of our theory, especially for GD. 
Additional experiments are provided in \Cref{sec:additional-experiments}.

\section{Conclusion and Future Work}\label{sec:conclusion}

We establish the convergence rate of GD and NAG for rectangular matrix factorization \eqref{eq:MF} under
an unbalanced initialization and show the provable acceleration of NAG. 
We further extend our analysis to linear neural networks \eqref{eq:LNN} and show the acceleration of NAG without excessive width requirements in previous work. 
\noindent 
Numerical experiments are provided to support our theory. 

We believe our analysis can be extended to initialization where $\Xb_0\approx c\Ab\bPhi$ and $\Yb_0\approx 0$ rather than exact equalities. 
Relaxing the exact rank-$r$ condition to approximately rank-$r$ is also a possible generalization. 
The linear neural network model considered in this paper cannot fully capture the practical settings. 
We leave the extension to nonlinear activations for future work. 

\newpage

\begin{ack}
The authors are grateful for the partial support by NSF DMS-1847802, Cullen-Peck Scholarship, and GT-Emory Humanity.AI Award.
RW was supported in part by NSF DMS-1952735, NSF IFML grantv2019844, NSF DMS-N2109155, and NSF 2217033.  
We thank the anonymous reviewers for their helpful comments.



\end{ack}

\bibliography{ref}
\bibliographystyle{abbrvnat}







\appendix


\section{Singular Value Bounds}\label{sec:RMT-appendix}
\subsection{Singular Value Bounds for Random Matrix}
\begin{proposition}[\cite{rudelson2009smallest}]\label{prop:s-lower}
    Let $\Ab$ be an $N\times n$ random matrix, $N\geq n$, whose elements are i.i.d. zero mean sub-Gaussian random variables with unit variance. Then for $\tau\geq 0$, we have 
    \begin{align*}
        \PP\bigl(\sigma_n(\Ab)\leq\tau(\sqrt{N}-\sqrt{n-1})\bigr)
        \leq (c_1\tau)^{N-n+1}+e^{-c_2N}
    \end{align*}
    where $c_1, c_2>0$ depend (polynomially) only on the sub-Gaussian moment.
\end{proposition}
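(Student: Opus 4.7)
The plan is to reduce the spectral bound to a uniform lower bound on $\|\Ab\xb\|$ over the unit sphere $S^{n-1}$, since $\sigma_n(\Ab)=\inf_{\xb\in S^{n-1}}\|\Ab\xb\|$. First I would decompose the sphere into \emph{compressible} vectors (those close to a sparse vector with support of size $\approx \rho n$ for some small constant $\rho$) and \emph{incompressible} vectors, following the Rudelson--Vershynin strategy. The motivation is that the two sets require fundamentally different tools: compressible vectors admit an efficient $\epsilon$-net since they lie on a low-complexity union of low-dimensional spheres, while incompressible vectors are ``spread out'' enough that a column-distance/small-ball argument becomes effective.

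For the compressible part, I would cover the compressible set by an $\epsilon$-net of cardinality at most $(C/\epsilon)^{\rho n}$ for a suitably small constant $\rho$, and for each fixed $\xb$ in the net apply a standard sub-Gaussian concentration bound to show $\|\Ab\xb\|\geq c\sqrt{N}$ with probability at least $1-e^{-cN}$. A union bound, with $\rho$ chosen small enough to beat the $\epsilon$-net cardinality, would then give the $e^{-c_2N}$ term in the stated failure probability, uniformly over compressible $\xb$. Passing from the net to the full compressible set uses the operator norm $\|\Ab\|\leq C\sqrt{N}$ with high probability as the Lipschitz constant, which is a separate classical sub-Gaussian bound.

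For the incompressible part, the key identity is that for any incompressible unit vector $\xb$, $\|\Ab\xb\|\geq c\,n^{-1/2}\,\mathrm{dist}(\Ab_k,H_k)$ where $H_k$ denotes the span of all columns of $\Ab$ except the $k$-th, for at least a $c$-fraction of indices $k$. I would then bound the probability that $\mathrm{dist}(\Ab_k,H_k)$ is small via a small-ball probability estimate: conditioning on the other columns, $H_k$ has codimension $N-n+1$ in $\RR^N$, and projecting $\Ab_k$ onto this codimension-$(N-n+1)$ subspace produces a sub-Gaussian random vector whose norm concentrates, with anti-concentration controlled coordinate-by-coordinate via a Berry--Esseen or tensorization argument. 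The tensorization lemma is what produces the crucial $(c_1\tau)^{N-n+1}$ factor, since each of the $N-n+1$ independent directions contributes a factor $c_1\tau$ to the probability that $\mathrm{dist}(\Ab_k,H_k)\lesssim \tau\sqrt{N-n+1}$.

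The main obstacle is the incompressible case, specifically establishing the small-ball bound with the correct exponent $N-n+1$. The tensorization lemma (Rogozin-type) requires a uniform coordinate-wise anti-concentration estimate, which in turn requires the sub-Gaussian density to have a nontrivial absolutely continuous component or at least uniform bounds on its Lévy concentration function; for general sub-Gaussian distributions one has to go through the rescaled characteristic function, which is where the constants $c_1,c_2$ acquire their ``polynomial dependence on the sub-Gaussian moment.'' Once this tensorized small-ball bound is in hand, a union bound over the $O(n)$ coordinate choices of $k$ (cheap compared to the $(c_1\tau)^{N-n+1}$ factor whenever $N-n+1$ is not tiny) closes the argument and assembles the two terms in the stated bound.
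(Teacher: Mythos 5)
This statement is Proposition~\ref{prop:s-lower}, which the paper cites directly from Rudelson and Vershynin without proof; there is no in-paper argument to compare against, only the original RV argument, and your sketch is best read as an attempt to reconstruct that.

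At the high level your outline is faithful to RV: reduction to $\inf_{\xb\in S^{n-1}}\norm{\Ab\xb}$, the compressible/incompressible split, the $\epsilon$-net plus operator-norm bound on the compressible part giving $e^{-cN}$, and the reduction on the incompressible part to a small-ball estimate for $\mathrm{dist}(\Ab_k,H_k)$ with $H_k^\perp$ of dimension $N-n+1$. The step that is genuinely off is the claim that the $(c_1\tau)^{N-n+1}$ factor comes from ``each of the $N-n+1$ independent directions'' contributing $c_1\tau$ via a coordinate-wise Berry--Esseen or tensorization argument. The coordinates of $P_{H_k^\perp}\Ab_k$ in any orthonormal basis of $H_k^\perp$ are not independent --- each is a linear combination of the $N$ independent entries of $\Ab_k$, and these combinations share coordinates --- so Rogozin-type tensorization (which requires independence of the summands being anti-concentrated) does not apply. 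What RV actually prove is a genuinely multi-dimensional small-ball bound: they control the L\'evy concentration function of the $d$-dimensional random vector $P\Ab_k$ ($d=N-n+1$) via a multi-dimensional Esseen inequality, combined with the fact that the random subspace $H_k^\perp$ is, with overwhelming probability, ``spread'' (e.g.\ its projection matrix has small entries), which is what makes the concentration function bounded by $(C\tau)^d$. There is also a subtlety you glide past: the incompressible reduction does not bound $\norm{\Ab\xb}$ by a single column distance, but by an averaged inequality over a constant fraction of indices $k$, and these distance events are heavily dependent, so the final union bound over $k$ must be handled via the averaging lemma rather than naively. When $N-n+1$ is small (near the square case) the argument further requires the arithmetic-structure/LCD machinery of the square-matrix paper, which your sketch does not mention; for $N-n+1$ large the structure is simpler. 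So: right skeleton, but the key lemma --- the codimension-$d$ small-ball bound --- is not obtained the way you describe, and the tensorization-over-coordinates shortcut would not go through as written.
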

\begin{proposition}[\cite{vershynin2010introduction}]\label{prop:s-upper}
    Let $\Ab$ be an $N\times n$ random matrix, $N\geq n$, whose elements are i.i.d. zero mean Gaussian random variables with unit variance. Then for $t\geq 0$, we have 
    \begin{align*}
        \PP\bigl(\sigma_1(\Ab)\geq\sqrt{N}+\sqrt{n}+t\bigr)
        \leq e^{-\frac{t^2}{2}}.
    \end{align*}
\end{proposition}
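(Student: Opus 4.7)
The plan is to establish the stated tail bound via the classical two-step recipe: a Gaussian concentration inequality for the spectral norm around its mean, combined with a sharp bound on the mean itself. Write $s(\Ab) = \sigma_1(\Ab) = \norm{\Ab}$ and view it as a function of the $Nn$ i.i.d.\ standard Gaussian entries, i.e. of a vector $\gb \in \RR^{Nn}$.

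First I would verify that $s$ is $1$-Lipschitz with respect to the Euclidean norm on $\RR^{Nn}$. This follows from the reverse triangle inequality for the operator norm together with the fact that the operator norm is dominated by the Frobenius norm: for any $\Ab, \Ab'$, $|s(\Ab)-s(\Ab')| \le \norm{\Ab - \Ab'} \le \frob{\Ab - \Ab'}$, and the right-hand side is precisely the Euclidean distance between the corresponding coordinate vectors. Next I would invoke the Borell--Tsirelson--Ibragimov--Sudakov (Gaussian) concentration inequality: for any $1$-Lipschitz $f:\RR^{Nn}\to\RR$ and $\gb\sim\cN(\zero,\Ib)$, one has $\PP(f(\gb)-\EE f(\gb)\ge t)\le e^{-t^2/2}$ for all $t\ge 0$. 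Applied to $s$, this yields
\begin{align*}
    \PP\bigl(\sigma_1(\Ab) \ge \EE[\sigma_1(\Ab)] + t\bigr) \le e^{-t^2/2}.
\end{align*}

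The second step is to show $\EE[\sigma_1(\Ab)] \le \sqrt{N} + \sqrt{n}$. I would use Gordon's inequality (a matrix extension of Slepian's lemma): for the two centered Gaussian processes $X_{\ub,\vb} = \ub^\top \Ab \vb$ and $Y_{\ub,\vb} = \inner{\gb}{\ub} + \inner{\hb}{\vb}$ indexed by unit vectors $\ub\in\SSS^{N-1}, \vb\in\SSS^{n-1}$ (with $\gb\sim\cN(\zero,\Ib_N)$, $\hb\sim\cN(\zero,\Ib_n)$ independent), one checks that $\EE[(X_{\ub,\vb}-X_{\ub',\vb'})^2] \le \EE[(Y_{\ub,\vb}-Y_{\ub',\vb'})^2]$. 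Gordon's min-max comparison theorem then gives
\begin{align*}
    \EE\bigl[\sigma_1(\Ab)\bigr]
    = \EE\sup_{\ub,\vb} X_{\ub,\vb}
    \le \EE\sup_{\ub,\vb} Y_{\ub,\vb}
    = \EE\norm{\gb} + \EE\norm{\hb}
    \le \sqrt{N} + \sqrt{n},
\end{align*}
where the last step uses Jensen's inequality together with $\EE\norm{\gb}^2 = N$. Combining this mean bound with the concentration inequality proves the claim.

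The main obstacle is really the mean bound, since Gaussian concentration is a black-box tool once Lipschitz continuity is established. An alternative route that avoids Gordon's inequality would be an $\varepsilon$-net argument on the spheres $\SSS^{N-1}$ and $\SSS^{n-1}$: bound $\sup_{\ub,\vb}\ub^\top\Ab\vb$ by a supremum over a $1/4$-net and apply a union bound to the univariate Gaussian $\ub^\top\Ab\vb\sim\cN(0,1)$. This yields the same order $\sqrt{N}+\sqrt{n}$ up to constants but is less sharp in the constants, so I would prefer the Gordon route for the stated form of the inequality. Either way, the concentration step of the proof is immediate from the Lipschitz estimate, and the proposition follows.
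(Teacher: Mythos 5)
Your argument is correct and coincides with the standard proof given in the cited reference (Vershynin, 2010, Corollary 5.35): the paper itself does not reprove this statement but simply cites it. One small technical remark: for the sup--sup comparison $\EE\sup X_{\ub,\vb}\le\EE\sup Y_{\ub,\vb}$ the precise tool needed is the Sudakov--Fernique (Slepian--Fernique) inequality rather than Slepian's lemma itself, since the processes $X$ and $Y$ do not have equal variances ($\EE X_{\ub,\vb}^2=1$ while $\EE Y_{\ub,\vb}^2=2$); Sudakov--Fernique requires only the increment comparison you verified, and the version of Gordon's theorem you invoke does subsume it, so the argument goes through as written.
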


\subsection{Proof of Proposition~\ref{prop:s-init-MF}}
\begin{proof}[Proof of \Cref{prop:s-init-MF}]
    Singular value decompose $\Ab$ as $\Ab=\Ub\bSigma\Vb^\top$, then $\Xb_0=c\Ub\bSigma\Vb^\top\bPhi$. Since $\Vb^\top\Vb=\Ib_r$, the columns of $\Vb^\top\bPhi\in\RR^{r\times d}$ are independent Gaussian vectors with distribution $\cN(0,\frac{1}{d}\Vb^\top\Vb)=\cN(0,\frac{1}{d}\Ib_r)$. 
    By \Cref{prop:s-lower} in \Cref{sec:RMT-appendix}, we have 
    \begin{align*}
        \PP\biggl(\sigma_r(\Vb^\top\bPhi)\leq\tau\left(1-\frac{\sqrt{r-1}}{\sqrt{d}}\right)\biggr)
        \leq e^{-(d-r+1)\log\frac{1}{c_1\tau}} + e^{-c_2d}
    \end{align*}
    for some universal constants $c_1$ and $c_2$ and any $\tau\geq 0$. On the other hand, by \Cref{prop:s-upper} in \Cref{sec:RMT-appendix}, we have 
    \begin{align*}
        \PP\biggl(\sigma_1(\Vb^\top\bPhi)\geq\frac{\sqrt{d}+\sqrt{r}+\sqrt{s}}{\sqrt{d}}\biggr)
        \leq e^{-\frac{s}{2}}.
    \end{align*}
    Plugging in $s=d$ and applying the union bound yield 
    \begin{align*}
        \PP\biggl(\frac{\tau(\sqrt{d}-\sqrt{r-1})}{\sqrt{d}}
        \leq \sigma_r(\Vb^\top\bPhi)
        \leq \sigma_1(\Vb^\top\bPhi)
        \leq \frac{2\sqrt{d}+\sqrt{r}}{\sqrt{d}}\biggr)
        \geq 1 - \delta, 
    \end{align*}
    where $\delta=3e^{-\min\{(d-r+1)\log\frac{1}{c_1\tau}, c_2d, \frac{d}{2}\}}$. 
    The proposition follows immediately from the fact that 
    \begin{align*}
        c\cdot\sigma_r(\Vb^\top\bPhi)\sigma_r(\Ab)\leq\sigma_r(\Xb_0)\leq\sigma_1(\Xb_0)\leq c\cdot\sigma_1(\Vb^\top\bPhi)\sigma_1(\Ab). 
    \end{align*}
\end{proof}

\section{Missing Proofs for GD}\label{sec:GD-appendix}
\subsection{Auxiliary Lemma}
\begin{lemma}\label{lem:summation}
    Suppose $\{a_t\}_{t\geq 0}$ and $\{b_t\}_{t\geq 0}$ are two non-negative sequences satisfying 
    \begin{align*}
        a_{t+1}\leq\rho\cdot a_t + b_t,\quad b_t\leq \theta^t\cdot c_0, 
    \end{align*}
    where $0\leq\rho<\theta<1$, $c_0\geq 0$, then the following holds for all $t\geq 0$:  
    \begin{align*}
        a_t\leq\theta^{t}\cdot\left(a_0 + \frac{c_0}{\theta-\rho}\right).
    \end{align*}
\end{lemma}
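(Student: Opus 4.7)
The plan is to prove this by a direct unrolling of the recurrence followed by a geometric-sum estimate; alternatively, a one-line induction works equally well. I would present whichever leads to the cleanest argument, but I will sketch both so that the bookkeeping is clear.

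First, iterating $a_{t+1}\leq \rho a_t + b_t$ gives the standard closed-form bound
\[
a_t \;\leq\; \rho^t a_0 \;+\; \sum_{s=0}^{t-1}\rho^{t-1-s}b_s.
\]
Plugging in $b_s\leq c_0\theta^s$ reduces the task to bounding $\sum_{s=0}^{t-1}\rho^{t-1-s}\theta^s$. Reindexing by $k=t-1-s$ turns this into $\theta^{t-1}\sum_{k=0}^{t-1}(\rho/\theta)^k$, which is a finite geometric series with ratio $\rho/\theta<1$. Summing it and simplifying yields $(\theta^t-\rho^t)/(\theta-\rho)$. Combining with $\rho^t\leq\theta^t$ then gives
\[
a_t \;\leq\; \theta^t a_0 \;+\; \frac{c_0\,\theta^t}{\theta-\rho} \;=\; \theta^t\!\left(a_0+\frac{c_0}{\theta-\rho}\right),
\]
which is exactly the claim.

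Alternatively, I would do induction on $t$. The base case $t=0$ is immediate since $a_0\leq a_0+c_0/(\theta-\rho)$. For the inductive step, assuming $a_t\leq\theta^t(a_0+c_0/(\theta-\rho))$, the recurrence yields
\[
a_{t+1} \;\leq\; \rho\theta^t\!\left(a_0+\tfrac{c_0}{\theta-\rho}\right) + c_0\theta^t,
\]
and a short algebraic check shows this is bounded by $\theta^{t+1}(a_0+c_0/(\theta-\rho))$; the difference works out to $\theta^t(\theta-\rho)a_0\geq 0$, which is the key cancellation that motivates the specific constant $c_0/(\theta-\rho)$.

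There is no real obstacle here: the lemma is a routine comparison between a linear recursion with contraction rate $\rho$ and a forcing term decaying at a strictly larger rate $\theta$, and the strict inequality $\rho<\theta$ is exactly what makes the geometric sum summable with leading behavior $\theta^t$. The only thing to be mindful of is using $\rho<\theta$ (rather than $\rho\leq\theta$) to keep $1/(\theta-\rho)$ finite; the hypotheses supply this.
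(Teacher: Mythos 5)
Your first argument (unrolling the recursion to $a_t\leq\rho^t a_0+\sum_{s=0}^{t-1}\rho^{t-1-s}b_s$, bounding $b_s\leq c_0\theta^s$, summing the geometric series to get $(\theta^t-\rho^t)/(\theta-\rho)$, and finally using $\rho^t\leq\theta^t$) is exactly the paper's proof, just written out with the inequalities made explicit where the paper somewhat sloppily writes equalities. The alternative inductive version you sketch is also correct, but the unrolling route is the one the paper takes.
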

\begin{proof}
    The inequality holds trivially for $t=0$. For $t\geq 0$, we have 
    \begin{align*}
        a_{t+1}
        &=\rho^{t+1}\cdot a_0+\sum_{s=0}^t\rho^{t-s}\theta^s\cdot c_0\\
        &=\rho^{t+1}\cdot a_0+\frac{\theta^{t+1}-\rho^{t+1}}{\theta-\rho}\cdot c_0\\
        &=\theta^{t+1}\cdot \left(a_0+\frac{1}{\theta-\rho}\cdot c_0\right).
    \end{align*}
\end{proof}

\subsection{Proof of Proposition~\ref{lem:GD-MF-dynamics}}
\begin{proof}[Proof of \Cref{lem:GD-MF-dynamics}]
    According to \eqref{eq:GD-MF}, we have 
    \begin{align*}
        \Rb_{t+1}
        &=\Xb_{t+1}\Yb_{t+1}^\top-\Ab\\
        &=(\Xb_t+\Pb_t)(\Yb_t+\Qb_t)^\top-\Ab\\
        &=\Rb_t-\eta\left(\Rb_t\Yb_t\Yb_t^\top + \Xb_t\Xb_t^\top\Rb_t\right) + \Pb_t\Qb_t^\top.
    \end{align*}
    Applying vectorization on both sides yields 
    \begin{align*}
        \rb_{t+1}
        &=\rb_t - \eta\Hb_t\rb_t + \beta(\rb_t-\rb_{t-1}) + \vectorize(\Pb_t\Qb_t^\top)\\
        &=(\Ib_{mn}-\eta\Hb_t)\rb_t + \vectorize(\Pb_t\Qb_t^\top).
    \end{align*}
    Hence we have the result. 
\end{proof}

\subsection{Proof of Lemma~\ref{lem:eigensubspace}}
\begin{proof}[Proof of \Cref{lem:eigensubspace}]
    By \Cref{prop:s-init-MF}, the symmetric matrix $\Hb_0=\Ib_n\otimes(\Xb_0\Xb_0^\top)$ has $nr$ positive eigenvalues, 
    and the eigensubspace of these positive eigenvalues is 
    \begin{align*}
        \cH=\prod_{i=1}^n \colspan(\Xb_0)=\prod_{i=1}^n \colspan(\Ab).
    \end{align*}
    According to the GD update \eqref{eq:GD-MF}, 
    \begin{align*}
        \colspan(\Xb_{t+1})\subseteq\colspan(\Xb_t) + \colspan(\Xb_t\Yb_t^\top\Yb_t) + \colspan(\Ab\Yb_t)\subseteq\colspan(\Xb_t) + \colspan(\Ab), 
    \end{align*}
    hence by induction we conclude $\colspan(\Xb_t)\subseteq\colspan(\Ab)$ for all $t\geq 0$. 
    As a result, we have 
    \begin{align*}
        \rb_t=\vectorize(\Xb_t\Yb_t^\top-\Ab)\in\cH.
    \end{align*}
    For $\bxi_t$, notice that 
    \begin{align*}
        \colspan(\Rb_t\Yb_t\Yb_t^\top+\Xb_t\Xb_t^\top\Rb_t)
        \subseteq\colspan(\Rb_t)+\colspan(\Xb_t)
        \subseteq\colspan(\Ab)
    \end{align*}
    and 
    \begin{align*}
        \colspan(\Pb_t\Qb_t^\top)=\colspan((\Xb_{t+1}-\Xb_t)(\Yb_{t+1}-\Yb_t)^\top)
        \subseteq\colspan(\Xb_{t+1})+\colspan(\Xb_t)\subseteq\colspan(\Ab), 
    \end{align*}
    thus we have  
    \begin{align*}
        \bxi_t=\eta\cdot\vectorize(\Rb_t\Yb_0\Yb_0^\top+\Xb_0\Xb_0^\top\Rb_t-\Rb_t\Yb_t\Yb_t^\top-\Xb_t\Xb_t^\top\Rb_t)+\vectorize(\Pb_t\Qb_t^\top)\in\cH.
    \end{align*}
\end{proof}

\subsection{Proof of Lemma~\ref{lem:GD-MF-contraction}}
\begin{proof}[Proof of \Cref{lem:GD-MF-contraction}]
    Since $\Ib_{mn}$ commutes with symmetric matrix $\Hb_0$, we can simultaneously diagonalize the two matrices and get 
    \begin{align*}
        \lambda_i(\Tb_\mathrm{GD})=1-\eta\lambda_{mn-i}(\Hb_0),\quad \forall i=1,2,\dots,mn.
    \end{align*}
    When $\eta\in(0,\frac{2}{L})$, $\lambda_i(\Tb_\mathrm{GD})=1$ for $i=1,2,\dots,(m-r)n$. 
    Let $\{\vb_i\}_{i=1}^{mn}$ be orthonormal eigenvectors, $\vb_i$ corresponds to $\lambda_i(\Tb_\mathrm{GD})$, then we have $\mathrm{Span}(\{\vb_i\}_{i=1}^{(m-r)n})=\ker(\Hb_0)\perp\cH$. Consequently, 
    \begin{align*}
        \norm{\Tb_\mathrm{GD}\vb}
        &=\norm{\Tb_\mathrm{GD}\left(\sum_{i=1}^{mn}\inner{\vb}{\vb_i}\vb_i\right)}\\
        &=\sqrt{\sum_{i=(m-r)n+1}^{mn}\inner{\vb}{\vb_i}^2\lambda_i^2(\Tb_\mathrm{GD})}\\
        &\leq\max_{(m-r)n+1\leq i\leq mn}\abs{\lambda_i(\Tb_\mathrm{GD})}\norm{\vb}\\
        &=\max\{\abs{1-\eta L},\abs{1-\eta\mu}\}\norm{\vb},
    \end{align*}
    where the second identity is from $\vb\in\cH\perp\ker(\Hb_0)$. 
    Plugging in the step size yields the second result. 
\end{proof}

\subsection{Proof of Lemma~\ref{lem:GD-MF-induction}}
\begin{proof}[Proof of \Cref{lem:GD-MF-induction}]
    For all $s\leq t$, by assumption we have 
    \begin{align*}
        \frob{\Pb_s}
        &=\eta\frob{\Rb_s\Yb_s}\\
        &\leq\eta\norm{\Yb_s}\frob{\Rb_s}\\
        &\leq\eta(\norm{\Yb_0}+\norm{\Yb_s-\Yb_0})\frob{\Rb_s}\\
        &\leq\eta(\norm{\Yb_0}+\frob{\Yb_s-\Yb_0})\frob{\Rb_s}\\
        &\leq\eta(\norm{\Yb_0}+C_2)\frob{\Rb_s}\\
        &\leq\eta(\norm{\Yb_0}+C_2)C_1\theta^s\norm{\rb_0}.
    \end{align*}
    Similarly, we have 
    \begin{align*}
        \frob{\Qb_s}
        &\leq\eta(\norm{\Xb_0}+C_2)C_1\theta^s\norm{\rb_0}.
    \end{align*}
    Combining the two bounds yields 
    \begin{align*}
        \norm{\vectorize(\Pb_s\Qb_s^\top)}
        =\frob{\Pb_s\Qb_s^\top}
        \leq\frob{\Pb_s}\frob{\Qb_s}
        \leq C_3\theta^{2t}\norm{\rb_0}^2, 
    \end{align*}
    where $C_3=\eta^2 C_1^2(\norm{\Xb_0}+C_2)(\norm{\Yb_0}+C_2)$.

    For the second part, we have 
    \begin{align*}
        \norm{(\Hb_0-\Hb_s)\rb_s}
        &=\frob{\Rb_s(\Yb_0\Yb_0^\top-\Yb_s\Yb_s^\top) + (\Xb_0\Xb_0^\top-\Xb_s\Xb_s^\top)\Rb_s}\\
        &\leq\frob{\Rb_s(\Yb_0\Yb_0^\top-\Yb_s\Yb_s^\top)} + \frob{(\Xb_0\Xb_0^\top-\Xb_s\Xb_s^\top)\Rb_s}\\
        &\leq\norm{\Yb_0\Yb_0^\top-\Yb_s\Yb_s^\top}\frob{\Rb_s} + \norm{\Xb_0\Xb_0^\top-\Xb_s\Xb_s^\top}\frob{\Rb_s}\\
        &\leq(2\norm{\Yb_0}+\frob{\Yb_s-\Yb_0})\frob{\Yb_s-\Yb_0}\frob{\Rb_s} \\
        &\quad + (2\norm{\Xb_0}+\frob{\Xb_s-\Xb_0})\frob{\Xb_s-\Xb_0}\frob{\Rb_s}\\
        &\leq 2(\norm{\Xb_0}+\norm{\Yb_0}+C_2)C_2\frob{\Rb_s} \\
        &\leq C_4\theta^s\norm{\rb_0}, 
    \end{align*}
    where $C_4=2\eta(\norm{\Xb_0}+\norm{\Yb_0}+C_2)C_1C_2$. 

    Finally, when \eqref{eq:GD-cond-1} holds, we have 
    \begin{align*}
        \frob{\Xb_{t+1}-\Xb_0}
        &\leq\sum_{s=0}^t\frob{\Pb_s}
        \leq\frac{\eta(\norm{\Yb_0}+C_2)C_1}{1-\theta}\norm{\rb_0}
        \leq C_2.
    \end{align*}
    Similarly, we have $\frob{\Yb_{t+1}-\Yb_0}\leq C_2$. 
\end{proof}

\subsection{Proof of Theorem~\ref{thm:MF-GD}}\label{proof:MF-GD}
\begin{proof}[Proof of \Cref{thm:MF-GD}]
    Let $C_1$ to $C_4$ be constants defined in \Cref{lem:GD-MF-induction}. 
    Define $\rho=\frac{L-\mu}{L+\mu}$, $\theta=1-\frac{\mu}{L}$, $a_t=C_1\norm{\rb_t}$, and $b_t=C_1\norm{\bxi_t}$ for $t\geq 0$. 
    By \Cref{lem:GD-MF-dynamics,lem:eigensubspace,lem:GD-MF-contraction} we have 
    \begin{align*}
        a_{t+1}\leq \rho\cdot a_t + b_t
    \end{align*}
    for all $t\geq 0$. It remains to show that $b_t\leq\theta^t\cdot c_0$. 
    By initialization \eqref{eq:init-MF}, $a_0=C_1\norm{\rb_0}=C_1\frob{\Ab}$, $b_0=0$. 
        Let $C_1=\frac{\mu(L+\mu)p}{2\frob{\Ab}L(1+p)}$ and $C_2=p\sqrt{L}$ where $p=\frac{\mu(L-\mu)}{24L^2}\in(0,1)$. 
    Plugging $\eta=\frac{2}{L+\mu}$, $\norm{\Xb_0}=\sqrt{L}$ and $\norm{\Yb_0}=0$ into $C_3$ and $C_4$ yields 
    \begin{align*}
        C_3=\frac{\mu^2p^3}{\frob{\Ab}^2L(1+p)},\quad 
        C_4=\frac{2\mu p^2}{\frob{\Ab}}. 
    \end{align*}
    Let
    \begin{align*}
        c_0=C_1(C_3\norm{\rb_0}+C_4)\norm{\rb_0},
    \end{align*}
    then we can show the following relations:  
    \begin{align}\label{eq:MF-GD-constant}
        a_0+\frac{c_0}{\theta-\rho}\leq C_1^2\frob{\Ab},\quad C_1\geq 1.
    \end{align}
    Indeed, by \Cref{prop:s-init-MF}, with probability at least $1-\delta$, our choice of $c$ guarantees 
    \begin{align}\label{eq:MF-GD-mu}
        \mu
        \geq \frac{144\cond^4(\Xb_0)\frob{\Ab}}{(\cond^2(\Xb_0)-1)}
        = \frac{144L^2\frob{\Ab}}{\mu(L-\mu)}. 
    \end{align}
    Our goal is to show  
    \begin{align*}
        a_0+\frac{c_0}{\theta-\rho}
        &=C_1\frob{\Ab} + C_1(C_3\frob{\Ab}+C_4)\frob{\Ab}\cdot\frac{L(L+\mu)}{\mu(L-\mu)}
        \leq C_1^2\frob{\Ab},
    \end{align*}
    which is equivalent to 
    \begin{align*}
        \frob{\Ab} + \left(\frac{\mu p^3}{L(1+p)} + 2 p^2\right)\cdot\frac{L(L+\mu)}{L-\mu}
        \leq \frac{\mu(L+\mu)p}{2L(1+p)}. 
    \end{align*}

    The above inequality holds when: 
    \begin{align}
        &\frob{\Ab}\leq \frac{\mu(L+\mu)p}{6L(1+p)},\label{eq:MF-GD-cond1}\\
        &\frac{p^2}{L-\mu}\leq\frac{1}{6L},\label{eq:MF-GD-cond2}\\
        &\frac{2pL}{L-\mu}\leq\frac{\mu}{6L(1+p)}.\label{eq:MF-GD-cond3}
    \end{align}
    Let $p=\frac{\mu(L-\mu)}{24L^2}$, then we have $p<1$, $pL<\mu$ and 
    \begin{align*}
        &\frac{p^2}{L-\mu}
        \leq\frac{p}{L-\mu}
        =\frac{\mu}{24 L^2}
        \leq\frac{1}{6L},\\
        &\frac{2pL}{L-\mu}\leq\frac{\mu}{12L}\leq\frac{\mu}{6L(1+p)}, 
    \end{align*}
    thus \eqref{eq:MF-GD-cond2} and \eqref{eq:MF-GD-cond3} hold. 
    Finally, \eqref{eq:MF-GD-cond1} holds in view of \eqref{eq:MF-GD-mu}: 
    \begin{align*}
        \frac{\mu(L+\mu)p}{6L(1+p)}
        \geq \frac{\mu p}{6}
        = \frac{\mu^2(L-\mu)}{144L^2}
        \geq \frob{\Ab}.
    \end{align*}
    Combining the results proves the \eqref{eq:MF-GD-constant}. 

    Now we can proceed with the induction in \Cref{lem:GD-MF-induction}. Firstly, $\norm{\rb_0}\leq C_1\norm{\rb_0}$ as $C_1\geq 1$ by \eqref{eq:MF-GD-constant}, and $\frob{\Xb_0-\Xb_0}=\frob{\Yb_0-\Yb_0}=0\leq C_2$. 
    Suppose the induction conditions in \Cref{lem:GD-MF-contraction} holds for $s\leq t$, then we have
    \begin{align*}
        b_s=C_1\norm{\bxi_s}\leq C_1(C_3\theta^{2s}\norm{\rb_0}^2+C_4\theta^s\norm{\rb_0})
        \leq c_0\cdot\theta^s.
    \end{align*}
    Consequently, by \Cref{lem:summation} and \eqref{eq:MF-GD-constant} we have \begin{align*}
        a_{t+1}
        \leq\theta^{t+1}\cdot\left(a_0+\frac{c_0}{\theta-\rho}\right)
        \leq C_1^2\cdot\theta^{t+1}\frob{\Ab},
    \end{align*} 
    thus $\norm{\rb_{t+1}}\leq C_1\theta^{t+1}\norm{\rb_0}$. 
    Moreover, by our construction of $C_1$ and $C_2$, \eqref{eq:GD-cond-1} always holds, thus 
    we also have $\frob{\Xb_{t+1}-\Xb_0}\leq C_2$ and $\frob{\Yb_{t+1}-\Yb_0}\leq C_2$. 
    All conditions for the $t+1$ step are satisfied, hence the proof is completed by induction. 
    Plugging in $C_1$ and the choice of $c$ yields the results. 
\end{proof}

\section{Missing Proofs for NAG}\label{sec:NAG-appendix}
\subsection{Proof of Proposition~\ref{lem:NAG-MF-dynamics}}\label{proof:NAG-MF-dynamics}
\begin{proof}[Proof of \Cref{lem:NAG-MF-dynamics}]
    According to the NAG update rule, we have 
    \begin{align*}
        \Rb_{t+1}
        &=\Xb_{t+1}\Yb_{t+1}^\top-\Ab\\
        &=(\Xb_t+\Pb_t)(\Yb_t+\Qb_t)^\top-\Ab\\
        &=\Rb_t+\Pb_t\Yb_t^\top + \Xb_t\Qb_t^\top + \Pb_t\Qb_t^\top\\
        &=\Rb_t+\left(\beta(\Xb_t-\Xb_{t-1})-(1+\beta)\eta\Rb_t\Yb_t+\beta\eta\Rb_{t-1}\Yb_{t-1}\right)\Yb_t^\top \\
        &\quad + \Xb_t\left(\beta(\Yb_t^\top-\Yb_{t-1}^\top)-(1+\beta)\eta\Xb_t^\top\Rb_t+\beta\eta\Xb_{t-1}^\top\Rb_{t-1}\right) + \Pb_t\Qb_t^\top\\
        &=\Rb_t-(1+\beta)\eta\left(\Rb_t\Yb_t\Yb_t^\top + \Xb_t\Xb_t^\top\Rb_t\right) + \beta(\Xb_t\Yb_t^\top-\Xb_{t-1}\Yb_{t-1}^\top)\\
        &\quad + \beta\eta\left(\Rb_{t-1}\Yb_{t-1}\Yb_{t-1}^\top + \Xb_{t-1}\Xb_{t-1}^\top\Rb_{t-1}\right) + \beta(\Xb_t\Yb_t^\top+\Xb_{t-1}\Yb_{t-1}^\top) - \beta\left(\Xb_{t-1}\Yb_t^\top + \Xb_t\Yb_{t-1}^\top\right)\\
        &\quad + \beta\eta\left(\Rb_{t-1}\Yb_{t-1}\Yb_t^\top + \Xb_t\Xb_{t-1}^\top\Rb_{t-1} - \Rb_{t-1}\Yb_{t-1}\Yb_{t-1}^\top - \Xb_{t-1}\Xb_{t-1}^\top\Rb_{t-1}\right) + \Pb_t\Qb_t^\top\\
        &=\Rb_t-(1+\beta)\eta\left(\Rb_t\Yb_t\Yb_t^\top + \Xb_t\Xb_t^\top\Rb_t\right) + \beta(\Rb_t-\Rb_{t-1})\\
        &\quad + \beta\eta\left(\Rb_{t-1}\Yb_{t-1}\Yb_{t-1}^\top + \Xb_{t-1}\Xb_{t-1}^\top\Rb_{t-1}\right) + \beta(\Xb_t\Yb_t^\top+\Xb_{t-1}\Yb_{t-1}^\top - \Xb_{t-1}\Yb_t^\top - \Xb_t\Yb_{t-1}^\top)\\
        &\quad + \beta\eta\left(\Rb_{t-1}\Yb_{t-1}\Yb_t^\top + \Xb_t\Xb_{t-1}^\top\Rb_{t-1} - \Rb_{t-1}\Yb_{t-1}\Yb_{t-1}^\top - \Xb_{t-1}\Xb_{t-1}^\top\Rb_{t-1}\right) + \Pb_t\Qb_t^\top.
    \end{align*}
    Applying vectorization on both sides yields 
    \begin{align*}
        \rb_{t+1}
        &=\rb_t - (1+\beta)\eta\Hb_t\rb_t + \beta(\rb_t-\rb_{t-1}) + \beta\eta\Hb_{t-1}\rb_{t-1}\\
        &\quad +\beta\vectorize(\Xb_t\Yb_t^\top+\Xb_{t-1}\Yb_{t-1}^\top - \Xb_{t-1}\Yb_t^\top - \Xb_t\Yb_{t-1}^\top)\\
        &\quad +\beta\eta\vectorize(\Rb_{t-1}\Yb_{t-1}\Yb_t^\top + \Xb_t\Xb_{t-1}^\top\Rb_{t-1} - \Rb_{t-1}\Yb_{t-1}\Yb_{t-1}^\top - \Xb_{t-1}\Xb_{t-1}^\top\Rb_{t-1}) + \vectorize(\Pb_t\Qb_t^\top)\\
        &=(1+\beta)(\Ib_{mn}-\eta\Hb_t)\rb_t - \beta(\Ib_{mn}-\eta\Hb_{t-1})\rb_{t-1} + \bpsi_t + \bphi_t.
    \end{align*}
    Hence we have 
    \begin{align*}
        \begin{pmatrix}
            \rb_{t+1}\\
            \rb_t
        \end{pmatrix}
        &=\begin{pmatrix}
            (1+\beta)(\Ib_{mn}-\eta\Hb_0) & -\beta(\Ib_{mn}-\eta\Hb_0)\\
            \Ib_{mn} & 0
        \end{pmatrix}\begin{pmatrix}
            \rb_t\\
            \rb_{t-1}
        \end{pmatrix}
        +\begin{pmatrix}
            \bxi_t\\
            0
        \end{pmatrix}.
    \end{align*}
\end{proof}

\subsection{Proof of Lemma~\ref{lem:NAG-MF-contraction}}\label{proof:NAG-MF-contraction}
\begin{proof}[Proof of \Cref{lem:NAG-MF-contraction}]
    Suppose $\lambda$ is an eigenvalue of $\Tb_\mathrm{NAG}$, then we have 
    \begin{align*}
        \det(\Tb_\mathrm{NAG}-\lambda\Ib_{2mn})
        =\det((\beta+\lambda^2-(1+\beta)\lambda)\Ib_{mn}+(\eta(1+\beta)\lambda-\eta\beta)\Hb_0). 
    \end{align*}
    Since $\Hb_0$ is symmetric, it can be simultaneously diagonalized with $\Ib$, hence the above equation becomes
    \begin{align*}
        \lambda^2-(1+\beta)\lambda+\beta+\eta(1+\beta)\lambda_i(\Hb_0)\lambda-\eta\beta\lambda_i(\Hb_0)=0
    \end{align*}
    for some $1\leq i\leq mn$. 
    Solving the equation yields 
    \begin{align*}
        \lambda=\frac{1}{2}\left((1+\beta)(1-\eta\lambda_i(\Hb_0))\pm\sqrt{(1-\eta\lambda_i(\Hb_0))\left(-4\beta+(1+\beta)^2(1-\eta\lambda_i(\Hb_0))\right)}\right). 
    \end{align*}
    For $i>nr$, $\lambda_i(\Hb_0)=0$, hence $\lambda=1$ or $\lambda=\beta$. 
    The corresponding eigen subspaces are 
    \begin{align*}
        &\cH_1=\left\{(\ub^\top,\vb^\top)^\top\mid \ub=\vb\in\ker(\Hb_0)\right\},\\
        &\cH_\beta=\left\{(\ub^\top,\vb^\top)^\top\mid \ub=\beta\vb\in\ker(\Hb_0)\right\}.
    \end{align*}
    The dimensions are $\dim(\cH_1)=\dim(\cH_\beta)=(m-r)n$. 
    It is easy to verify that whenever $0<\beta<1$, 
    \begin{align*}
        \cH_1\oplus\cH_\beta=\ker(\cH_0)\times\ker(\cH_0). 
    \end{align*}
    The complement space of $\cH_1\oplus\cH_\beta$ corresponds to the eigen subspace for non-trivial eigenvalues. 
    By checking the dimension and orthogonality, we have  
    \begin{align*}
        (\cH_1\oplus\cH_\beta)^\perp=\cH\times\cH.
    \end{align*}
    For $i\leq nr$, the subspace is $\cH\times\cH$ and the contraction condition requires 
    \begin{align*}
        0<\eta
        <\frac{2(1+\beta)}{(1+2\beta)\sigma_1^2(\Xb_0)}
        =\frac{2(1+\beta)}{(1+2\beta)L}. 
    \end{align*}
    By checking the monotonicity of $\abs{\lambda}$ with respect to $1-\eta\lambda_i(\Hb_0)\in[1-\eta L, 1-\eta\mu]$, we have 
    \begin{align*}
        \abs{\lambda}\leq\max\biggl\{
        &\frac{1}{2}\left((1+\beta)(1-\eta \mu)+\sqrt{(1-\eta \mu)\left(-4\beta+(1+\beta)^2(1-\eta \mu\right)}\right), \\
        &\frac{1}{2}\left(-(1+\beta)(1-\eta L)+\sqrt{(1-\eta L)\left(-4\beta+(1+\beta)^2(1-\eta L\right)}\right)\biggr\}.
    \end{align*}
    If we choose step size $\eta=\frac{1}{L}$, momentum $\beta=\frac{\sqrt{L}-\sqrt{\mu}}{\sqrt{L}+\sqrt{\mu}}$, then we have $\abs{\lambda}\leq 1-\sqrt{\frac{\mu}{L}}$.
\end{proof}

\subsection{Proof of Lemma~\ref{lem:NAG-MF-induction}}\label{proof:NAG-MF-induction}
\begin{proof}[Proof of \Cref{lem:NAG-MF-induction}]
    According to Lemma~\ref{lem:NAG-MF-dynamics}, 
    \begin{align*}
        &\bxi_t=\bzeta_t + \biota_t, \\
        &\bzeta_t=\vectorize(\Pb_t\Qb_t^\top) + \beta\vectorize(\eta\Rb_{t-1}\Yb_{t-1}\Qb_{t-1}^\top + \eta\Pb_{t-1}\Xb_{t-1}^\top\Rb_{t-1} + \Pb_{t-1}\Qb_{t-1}^\top)\\
        &\biota_t=(1+\beta)\eta(\Hb_0-\Hb_t)\rb_t-\beta\eta(\Hb_0-\Hb_{t-1})\rb_{t-1}.
    \end{align*}

    We first bound $\frob{\Pb_t}$ and $\frob{\Qb_t}$. 
    For every $0\leq s\leq t$, we have 
    \begin{align*}
        \frob{\Rb_s\Yb_s}
        &\leq\norm{\Yb_s}\frob{\Rb_s}\\
        &\leq(\norm{\Yb_0}+\norm{\Yb_s-\Yb_0})\frob{\Rb_s}\\
        &\leq(\norm{\Yb_0}+\frob{\Yb_s-\Yb_0})\frob{\Rb_s}\\
        &\leq(\norm{\Yb_0}+C_2)\frob{\Rb_s}.
    \end{align*}
    Similarly, 
    \begin{align*}
        \frob{\Rb_s^\top\Xb_s}
        &\leq(\norm{\Xb_0}+C_2)\frob{\Rb_s}.
    \end{align*}
    By assumption, we have 
    \begin{align*}
        \frob{\Rb_s}
        \leq\norm{\begin{pmatrix}
            \rb_s\\
            \rb_{s-1}
        \end{pmatrix}}
        \leq C_1\theta^s\norm{\begin{pmatrix}
        \rb_0\\
        \rb_{-1}
    \end{pmatrix}}.
    \end{align*}
    As a result, the momentum terms can be bounded: 
    \begin{align}
        \frob{\Pb_t}
        &=\frob{\eta\Rb_t\Yb_t + \eta\sum_{s=1}^t\beta^{t-s+1}\Rb_s\Yb_s}\nonumber\\
        &\leq\eta\frob{\Rb_t\Yb_t}+\eta\sum_{s=1}^t\beta^{t-s+1}\frob{\Rb_s\Yb_s}\nonumber\\
        &\leq\eta (\norm{\Yb_0}+C_2)\left(\frob{\Rb_t}+\sum_{s=1}^t\beta^{t-s+1}\frob{\Rb_s}\right)\nonumber\\
        &\leq\eta C_1(\norm{\Yb_0}+C_2)\left(\theta^t+\sum_{s=1}^t\beta^{t-s+1}\theta^s\right)\norm{\begin{pmatrix}
            \rb_0\\
            \rb_{-1}
        \end{pmatrix}}\nonumber\\
        &\leq\eta C_1(\norm{\Yb_0}+C_2)\frac{1}{1-\theta}\cdot\theta^t\norm{\begin{pmatrix}
            \rb_0\\
            \rb_{-1}
        \end{pmatrix}},\label{eq:bound-P}
    \end{align}
    and 
    \begin{align}
        \frob{\Qb_t}
        \leq\eta C_1(\norm{\Xb_0}+C_2)\frac{1}{1-\theta}\cdot\theta^t\norm{\begin{pmatrix}
            \rb_0\\
            \rb_{-1}
        \end{pmatrix}},\label{eq:bound-Q}
    \end{align}
    where we use $\beta\leq\theta^2<\theta$ in the last steps. 

    Next, we bound $\norm{\bzeta_t}$. Using the triangle inequality, we get 
    \begin{align*}
        \norm{\bzeta_t}
        &\leq \frob{\Pb_t\Qb_t^\top} + \beta\frob{\eta\Rb_{t-1}\Yb_{t-1}\Qb_{t-1}^\top + \eta\Pb_{t-1}\Xb_{t-1}^\top\Rb_{t-1} + \Pb_{t-1}\Qb_{t-1}^\top}. 
    \end{align*}
    For the first term, we have 
    \begin{align*}
        \frob{\Pb_t\Qb_t^\top}
        \leq\frob{\Pb_t}\frob{\Qb_t}
        &\leq \frac{\eta^2 C_1^2(\norm{\Xb_0}+C_2)(\norm{\Yb_0}+C_2)}{(1-\theta)^2}\theta^{2t}\norm{\begin{pmatrix}
            \rb_0\\
            \rb_{-1}
        \end{pmatrix}}^2.
    \end{align*}
    For the second term, we have 
    \begin{align*}
        &\beta\frob{\eta\Rb_{t-1}\Yb_{t-1}\Qb_{t-1}^\top + \eta\Pb_{t-1}\Xb_{t-1}^\top\Rb_{t-1} + \Pb_{t-1}\Qb_{t-1}^\top}\\
        \leq&\beta\left(\eta\frob{\Rb_{t-1}}(\norm{\Yb_{t-1}}\frob{\Qb_{t-1}} + \norm{\Xb_{t-1}}\frob{\Pb_{t-1}}) + \frob{\Pb_{t-1}}\frob{\Qb_{t-1}}\right)\\
        \leq& \frac{\eta^2 C_1^2(\norm{\Xb_0}+C_2)(\norm{\Yb_0}+C_2)(3-2\theta)}{(1-\theta)^2}\theta^{2t}\norm{\begin{pmatrix}
            \rb_0\\
            \rb_{-1}
        \end{pmatrix}}^2. 
    \end{align*}
    As a result, we have 
    \begin{align*}
        \norm{\bzeta_t}
        \leq C_3\theta^{2t}\norm{\begin{pmatrix}
            \rb_0\\
            \rb_{-1}
        \end{pmatrix}}^2, 
    \end{align*}
    where $C_3=\frac{\eta^2 C_1^2(\norm{\Xb_0}+C_2)(\norm{\Yb_0}+C_2)(4-2\theta)}{(1-\theta)^2}$.

    We then show upper bound for $\norm{\biota_t}$. 
    Using the triangle inequality, we get 
    \begin{align}\label{eq:bound-iota}
        \norm{\biota_t}
        \leq(1+\beta)\eta\norm{(\Hb_0-\Hb_t)\rb_t} + \beta\eta\norm{(\Hb_0-\Hb_{t-1})\rb_{t-1}}. 
    \end{align}
    For any $s\leq t$, we have 
    \begin{align*}
        \norm{(\Hb_0-\Hb_s)\rb_s}
        &=\frob{\Rb_s(\Yb_0\Yb_0^\top-\Yb_s\Yb_s^\top) + (\Xb_0\Xb_0^\top-\Xb_s\Xb_s^\top)\Rb_s}\\
        &\leq\frob{\Rb_s(\Yb_0\Yb_0^\top-\Yb_s\Yb_s^\top)} + \frob{(\Xb_0\Xb_0^\top-\Xb_s\Xb_s^\top)\Rb_s}\\
        &\leq\norm{\Yb_0\Yb_0^\top-\Yb_s\Yb_s^\top}\frob{\Rb_s} + \norm{\Xb_0\Xb_0^\top-\Xb_s\Xb_s^\top}\frob{\Rb_s}\\
        &\leq(2\norm{\Yb_0}+\frob{\Yb_s-\Yb_0})\frob{\Yb_s-\Yb_0}\frob{\Rb_s} \\
        &\quad + (2\norm{\Xb_0}+\frob{\Xb_s-\Xb_0})\frob{\Xb_s-\Xb_0}\frob{\Rb_s}\\
        &\leq 2(\norm{\Xb_0}+\norm{\Yb_0}+C_2)C_2\frob{\Rb_s} \\
        &\leq 2(\norm{\Xb_0}+\norm{\Yb_0}+C_2)C_1C_2\theta^s\norm{\begin{pmatrix}
        \rb_0\\
        \rb_{-1}
    \end{pmatrix}}.
    \end{align*}
    Plugging it into \eqref{eq:bound-iota} yields 
    \begin{align*}
        \norm{\biota_t}
        &\leq 2(\norm{\Xb_0}+\norm{\Yb_0}+C_2)C_1C_2((1+\beta)\eta\theta^t + \beta\eta\theta^{t-1}) \norm{\begin{pmatrix}
            \rb_0\\
            \rb_{-1}
        \end{pmatrix}}\\
        &\leq C_4\theta^t\norm{\begin{pmatrix}
            \rb_0\\
            \rb_{-1}
        \end{pmatrix}},
    \end{align*}
    where $C_4=2\eta(\norm{\Xb_0}+\norm{\Yb_0}+C_2)C_1C_2(1+2\theta)$.

    Finally, given \eqref{eq:NAG-cond-1} and \eqref{eq:bound-P}, we have 
    \begin{align*}
        \frob{\Xb_{t+1}-\Xb_0}
        &\leq\sum_{s=0}^t\frob{\Pb_s}\leq\frac{\eta C_1(\norm{\Yb_0}+C_2)}{(1-\theta)^2}\norm{\begin{pmatrix}
            \rb_0\\
            \rb_{-1}
        \end{pmatrix}}
        \leq C_2,
    \end{align*}
    where the last inequality is from our assumption on $C_2$.
    Similarly, by \eqref{eq:bound-Q}, we have 
    \begin{align*}
        \frob{\Yb_{t+1}-\Yb_0}
        &\leq\sum_{s=0}^t\frob{\Qb_s}\leq\frac{\eta C_1(\norm{\Xb_0}+C_2)}{(1-\theta)^2}\norm{\begin{pmatrix}
            \rb_0\\
            \rb_{-1}
        \end{pmatrix}}
        \leq C_2.
    \end{align*}
\end{proof}

\subsection{Proof of Theorem~\ref{thm:MF-NAG}}\label{proof:MF-NAG}
\begin{proof}[Proof of \Cref{thm:MF-NAG}]
    By initialization, we have $\norm{\rb_0}=\norm{\rb_{-1}}=\frob{\Ab}$. 
    Let $C_1$ to $C_4$ be constants defined in \Cref{lem:NAG-MF-induction}. 
    Define $\rho=1-\frac{\sqrt{\mu}}{\sqrt{L}}$, $\theta=1-\frac{\sqrt{\mu}}{2\sqrt{L}}$, $a_t=C_1\norm{(\rb_t^\top,\rb_{t-1}^\top)}$, and $b_t=C_1\norm{\bxi_t}$ for $t\geq 0$. 
    It is easy to verify that $\beta\leq\theta^2<\theta<1$ and $\rho<\theta<1$. 
    By \Cref{lem:NAG-MF-dynamics,lem:eigensubspace,lem:NAG-MF-contraction} we have 
    \begin{align*}
        a_{t+1}\leq \rho\cdot a_t + b_t
    \end{align*}
    for all $t\geq 0$. It remains to show that $b_t\leq\theta^t\cdot c_0$. 
    For the initial step, $a_0=\sqrt{2}C_1\frob{\Ab}$, $b_0=0$. 
    Let $C_1=\frac{\mu p}{4\sqrt{2}\frob{\Ab}(1+p)}$ and $C_2=p\sqrt{L}$ where $p=\frac{\sqrt{\mu}}{144\sqrt{L}}\leq\frac{1}{144}<1$, 
    then we have 
    \begin{align*}
        &C_3
        =\frac{\mu p^3(2+\sqrt{\frac{\mu}{L}})}{8\frob{\Ab}^2(1+p)},\quad
        C_4
        =\frac{\mu p^2(3-\sqrt{\frac{\mu}{L}})}{2\sqrt{2}\frob{\Ab}}.
    \end{align*}
    Let $c_0=\sqrt{2}C_1(\sqrt{2}C_3\frob{\Ab}+C_4)\frob{\Ab}$, 
    then we can show the following relations:  
    \begin{align}\label{eq:MF-NAG-constant}
        a_0+\frac{c_0}{\theta-\rho}\leq \sqrt{2}C_1^2\frob{\Ab} \quad\text{and}\quad C_1\geq 1.
    \end{align}
        Indeed, by \Cref{prop:s-init-MF}, with probability at least $1-\delta$, our choice of $c$ guarantees 
        \begin{align}\label{eq:MF-NAG-mu}
            \mu=\sigma_r^2(\Xb_0)\geq\frac{\tau^2(\sqrt{d}-\sqrt{r-1})^2c^2\sigma_r^2(\Ab)}{d}\geq\frac{4\sqrt{2}\frob{\Ab}(1+p)}{p}, 
        \end{align}
        thus $C_1\geq 1$. Here, we use the bound $p\leq\frac{1}{144}<1$ to verify the numerical constant. 
        It remains to show  
        \begin{align*}
            a_0+\frac{c_0}{\theta-\rho}\leq\sqrt{2}C_1^2\frob{\Ab},
        \end{align*}
        which is equivalent to 
        \begin{align*}
            \frob{\Ab}+\frac{p^3\sqrt{\mu L}(2+\sqrt{\frac{\mu}{L}})}{2\sqrt{2}(1+p)} + \frac{p^2\sqrt{\mu L}(3-\sqrt{\frac{\mu}{L}})}{\sqrt{2}}\leq \frac{\mu p}{4\sqrt{2}(1+p)},
        \end{align*}
        Since we set $p=\frac{\sqrt{\mu}}{144\sqrt{L}}<1$, each one of the three terms on the left hand side is upper bounded by $\frac{\mu p}{12\sqrt{2}(1+p)}$, hence the inequality holds. 
    The relations \eqref{eq:MF-NAG-constant} guarantee the induction conditions in \Cref{lem:NAG-MF-induction}, thus we have 
    \begin{align*}
        \norm{\rb_{t+1}}
        \leq \sqrt{2}C_1\theta^{t+1}\frob{\Ab}
        \leq\frac{c^2\sigma_1^2(\Ab)}{64\frob{\Ab}\cond(\Xb_0)}\theta^{t+1}\frob{\Ab},
    \end{align*}
    where the last inequality uses $p>0$ and \Cref{prop:s-init-MF}.
\end{proof}

\section{Missing Proofs for NAG in Section~\ref{sec:LNN}}
Let $\trb_t=\vectorize(\tRb_t)$, then we have the following dynamics. 
\begin{lemma}\label{lem:NAG-LNN-dynamics}
    Let $\Pb_t=\Xb_{t+1}-\Xb_t$ and $\Qb_t=\Yb_{t+1}-\Yb_t$ denote the momentum. 
    Let $\Rb_t=\Xb_t\Yb_t^\top\Db-\Lb$ denote the residual, $\tRb_t=\Xb_t\Yb_t^\top\Db\Db^\top-\Lb\Db^\top$ denote the projected residual, $\trb_t=\vectorize(\tRb_t)\in\RR^{mn}$. Then NAG has the following dynamics: 
    \begin{align}\label{eq:NAG-LNN-res}
        \begin{pmatrix}
            \trb_{t+1}\\
            \trb_t
        \end{pmatrix}
        &=\begin{pmatrix}
            (1+\beta)(\Ib_{mn}-\eta\Hb_0) & -\beta(\Ib_{mn}-\eta\Hb_0)\\
            \Ib_{mn} & 0
        \end{pmatrix}\begin{pmatrix}
            \trb_t\\
            \trb_{t-1}
        \end{pmatrix}
        +\begin{pmatrix}
            \bxi_t\\
            0
        \end{pmatrix},
    \end{align}
    where 
    \begin{align*}
        &\Hb_t=(\Db\Db^\top\Yb_t\Yb_t^\top)\otimes\Ib_{m}+(\Db\Db^\top)\otimes(\Xb_t\Xb_t^\top),\\
        &\bxi_t=\bzeta_t + \biota_t, \\
        &\bzeta_t=\vectorize(\Pb_t\Qb_t^\top\Db\Db^\top) + \beta\vectorize(\Pb_{t-1}\Qb_{t-1}^\top\Db\Db^\top)\\
        &\quad\quad +\beta\eta\vectorize((\tRb_{t-1}\Yb_{t-1}\Qb_{t-1}^\top + \Pb_{t-1}\Xb_{t-1}^\top\tRb_{t-1})\Db\Db^\top),\\
        &\biota_t=(1+\beta)\eta(\Hb_0-\Hb_t)\tilde{\rb}_t-\beta\eta(\Hb_0-\Hb_{t-1})\tilde{\rb}_{t-1}.
    \end{align*}
\end{lemma}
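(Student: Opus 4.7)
The plan is to mirror the proof of Proposition~\ref{lem:NAG-MF-dynamics} almost verbatim, treating $\tRb_t$ as the ``effective residual'' and right-multiplying everywhere by $\Db\Db^\top$ when expanding the square. First I would compute the gradients of $f$ in \eqref{eq:LNN}: since $f=\tfrac12\frob{\Rb_t}^2$ with $\Rb_t=\Xb_t\Yb_t^\top\Db-\Lb$, one gets $\nabla_X f=\Rb_t\Db^\top\Yb_t=\tRb_t\Yb_t$ and $\nabla_Y f=\Db\Rb_t^\top\Xb_t=\tRb_t^\top\Xb_t$. Plugging into the NAG update then yields
\begin{align*}
\Pb_t&=\beta(\Xb_t-\Xb_{t-1})-(1+\beta)\eta\tRb_t\Yb_t+\beta\eta\tRb_{t-1}\Yb_{t-1},\\
\Qb_t&=\beta(\Yb_t-\Yb_{t-1})-(1+\beta)\eta\tRb_t^\top\Xb_t+\beta\eta\tRb_{t-1}^\top\Xb_{t-1},
\end{align*}
which are exactly the MF expressions with $\Rb$ replaced by $\tRb$.

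Next, I would expand $\tRb_{t+1}=(\Xb_t+\Pb_t)(\Yb_t+\Qb_t)^\top\Db\Db^\top-\Lb\Db^\top=\tRb_t+(\Pb_t\Yb_t^\top+\Xb_t\Qb_t^\top+\Pb_t\Qb_t^\top)\Db\Db^\top$ and substitute the formulas for $\Pb_t,\Qb_t$. This produces six groups of terms: (i) the principal $-(1+\beta)\eta(\tRb_t\Yb_t\Yb_t^\top+\Xb_t\Xb_t^\top\tRb_t)\Db\Db^\top$ piece; (ii) the principal $+\beta\eta(\tRb_{t-1}\Yb_{t-1}\Yb_{t-1}^\top+\Xb_{t-1}\Xb_{t-1}^\top\tRb_{t-1})\Db\Db^\top$ piece; (iii) the $\beta$-momentum combination $\beta[2\Xb_t\Yb_t^\top-\Xb_{t-1}\Yb_t^\top-\Xb_t\Yb_{t-1}^\top]\Db\Db^\top$; (iv)-(v) two cross-products $\beta\eta\tRb_{t-1}\Yb_{t-1}\Qb_{t-1}^\top\Db\Db^\top$ and $\beta\eta\Pb_{t-1}\Xb_{t-1}^\top\tRb_{t-1}\Db\Db^\top$ coming from the $\Yb_{t-1}\to\Yb_t$ and $\Xb_{t-1}\to\Xb_t$ discrepancies; and (vi) the quadratic $\Pb_t\Qb_t^\top\Db\Db^\top$. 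The central algebraic identity, the same one used in the MF proof, is the telescoping
\[(\Xb_t-\Xb_{t-1})(\Yb_t-\Yb_{t-1})^\top=\Pb_{t-1}\Qb_{t-1}^\top=\Xb_t\Yb_t^\top+\Xb_{t-1}\Yb_{t-1}^\top-\Xb_{t-1}\Yb_t^\top-\Xb_t\Yb_{t-1}^\top,\]
which lets me rewrite group (iii) as $\beta(\Xb_t\Yb_t^\top-\Xb_{t-1}\Yb_{t-1}^\top)\Db\Db^\top+\beta\Pb_{t-1}\Qb_{t-1}^\top\Db\Db^\top=\beta(\tRb_t-\tRb_{t-1})+\beta\Pb_{t-1}\Qb_{t-1}^\top\Db\Db^\top$, absorbing the first piece into the linear recursion and sending the second into $\bzeta_t$.

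After collecting terms, I would obtain
\[\tRb_{t+1}=(1+\beta)\tRb_t-\beta\tRb_{t-1}-(1+\beta)\eta(\tRb_t\Yb_t\Yb_t^\top+\Xb_t\Xb_t^\top\tRb_t)\Db\Db^\top+\beta\eta(\tRb_{t-1}\Yb_{t-1}\Yb_{t-1}^\top+\Xb_{t-1}\Xb_{t-1}^\top\tRb_{t-1})\Db\Db^\top+\bzeta_t^\star,\]
where $\bzeta_t^\star=\Pb_t\Qb_t^\top\Db\Db^\top+\beta\Pb_{t-1}\Qb_{t-1}^\top\Db\Db^\top+\beta\eta(\tRb_{t-1}\Yb_{t-1}\Qb_{t-1}^\top+\Pb_{t-1}\Xb_{t-1}^\top\tRb_{t-1})\Db\Db^\top$. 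Applying $\vectorize$ and the Kronecker identity $\vectorize(A X B)=(B^\top\otimes A)\vectorize(X)$ converts $\tRb_s\Yb_s\Yb_s^\top\Db\Db^\top$ and $\Xb_s\Xb_s^\top\tRb_s\Db\Db^\top$ into $(\Db\Db^\top\Yb_s\Yb_s^\top\otimes\Ib_m)\trb_s$ and $(\Db\Db^\top\otimes\Xb_s\Xb_s^\top)\trb_s$ respectively, yielding the claimed $\Hb_t$. Finally, adding and subtracting $(1+\beta)\eta\Hb_0\trb_t-\beta\eta\Hb_0\trb_{t-1}$ splits the linear part into the stated $\Tb_\mathrm{NAG}$-style matrix acting on $(\trb_t,\trb_{t-1})$ plus the dynamics-shift remainder $\biota_t$, and $\vectorize(\bzeta_t^\star)$ is precisely $\bzeta_t$.

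The main obstacle is purely bookkeeping: carefully substituting the NAG updates, tracking which products collapse via the telescoping identity, and correctly absorbing the $\Db\Db^\top$ factor into the Kronecker structure so that the resulting $\Hb_t$ comes out as $(\Db\Db^\top\Yb_t\Yb_t^\top)\otimes\Ib_m+(\Db\Db^\top)\otimes(\Xb_t\Xb_t^\top)$ rather than some transposed or mis-ordered variant. Once that is done correctly, the companion-form recursion follows by pairing the scalar recursion for $\trb_{t+1}$ with the trivial identity $\trb_t=\trb_t$.
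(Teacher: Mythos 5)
Your proposal is correct and takes essentially the same route as the paper: the paper's own proof simply writes down the NAG update for \eqref{eq:LNN} (with $\tRb_t\Yb_t$ and $\tRb_t^\top\Xb_t$ replacing the MF gradients) and then says ``the result follows by direct computation,'' which is precisely the substitution-and-telescoping argument from Proposition~\ref{lem:NAG-MF-dynamics} carried out with an extra right factor of $\Db\Db^\top$. Your gradient computations, the split of the $\beta$-term via the identity $\Pb_{t-1}\Qb_{t-1}^\top=\Xb_t\Yb_t^\top+\Xb_{t-1}\Yb_{t-1}^\top-\Xb_{t-1}\Yb_t^\top-\Xb_t\Yb_{t-1}^\top$, the $\Yb_{t-1}\to\Yb_t$ and $\Xb_{t-1}\to\Xb_t$ discrepancy terms, and the Kronecker identity $\vectorize(A X B)=(B^\top\otimes A)\vectorize(X)$ all match what the paper leaves implicit, and the resulting $\Hb_t$, $\bzeta_t$, $\biota_t$ agree exactly with the statement.
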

\begin{proof}[Proof of \Cref{lem:NAG-LNN-dynamics}]
    We denote $\Rb_t=\Xb_t\Yb_t^\top\Db-\Lb$ as the residual, $\tRb_t=\Rb_t\Db^\top$ as the projected residual, then the NAG update for \eqref{eq:LNN} can be written as   
    \begin{align}\label{eq:NAG-LNN}
        \begin{pmatrix}
            \Xb_{t+1}\\
            \Yb_{t+1}\\
        \end{pmatrix}
        &=\begin{pmatrix}
            (1+\beta)(\Xb_t-\eta\tRb_t\Yb_t)-\beta(\Xb_{t-1}-\eta\tRb_{t-1}\Yb_{t-1})\\
            (1+\beta)(\Yb_t-\eta\tRb_t^\top\Xb_t)-\beta(\Yb_{t-1}-\eta\tRb_{t-1}^\top\Xb_{t-1})\\
        \end{pmatrix}. 
    \end{align}
    The result follows from \eqref{eq:NAG-LNN} by direct computation.
\end{proof}

\begin{lemma}\label{lem:LNN-eigensubspace}
    Let $\cH\subseteq\RR^{mn}$ denote the linear subspace containing all eigenvectors of $\Hb_0=(\Db\Db^\top)\otimes(\Xb_0\Xb_0^\top)$ with positive eigenvalues. If $\colspan(\Xb_0)=\colspan(\Lb)$ and $\Yb_0=0$, then we have 
    \begin{align*}
        \cH=\colspan(\Db\otimes\Lb) \quad\text{and}\quad  \{\trb_t,\bxi_t\}_{t\geq 0}\subset\cH,
    \end{align*}
    where $\Hb_0$, $\trb_t$ and $\bxi_t$ are defined as in \Cref{lem:NAG-LNN-dynamics}. 
\end{lemma}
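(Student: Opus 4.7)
The plan is to exploit the Kronecker factorization of $\Hb_0$ to identify $\cH$ explicitly, and then argue by induction on $t$ that every matrix whose vectorization appears in the NAG residual dynamics of \Cref{lem:NAG-LNN-dynamics} has column space contained in $\colspan(\Lb)$ and row space contained in $\colspan(\Db)$. Under column-first vectorization, this is exactly the characterization of membership in $\cH$.

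First, since $\Hb_0=(\Db\Db^\top)\otimes(\Xb_0\Xb_0^\top)$ is a Kronecker product of two symmetric PSD matrices, its eigenvectors are $\vb_i\otimes\ub_j$ with eigenvalues $\lambda_i(\Db\Db^\top)\lambda_j(\Xb_0\Xb_0^\top)$. The positive eigenspaces of the two factors are $\colspan(\Db)$ and $\colspan(\Xb_0)=\colspan(\Lb)$ by the hypothesis, so
\begin{align*}
\cH=\colspan(\Db)\otimes\colspan(\Lb)=\colspan(\Db\otimes\Lb),
\end{align*}
using $\colspan(\Ab\otimes\Bb)=\colspan(\Ab)\otimes\colspan(\Bb)$. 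Equivalently, $\vectorize(\Mb)\in\cH$ iff $\colspan(\Mb)\subseteq\colspan(\Lb)$ and $\colspan(\Mb^\top)\subseteq\colspan(\Db)$.

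Next, I would show by induction on $t$ that $\colspan(\Xb_t)\subseteq\colspan(\Lb)$ for all $t\ge -1$. The base cases $t=-1,0$ follow from $\Xb_{-1}=\Xb_0$ (NAG initialization) and the hypothesis. For the step, the update \eqref{eq:NAG-LNN} places the columns of $\Xb_{t+1}$ in $\colspan(\Xb_t)+\colspan(\Xb_{t-1})+\colspan(\tRb_t\Yb_t)+\colspan(\tRb_{t-1}\Yb_{t-1})$, and $\tRb_s=(\Xb_s\Yb_s^\top\Db-\Lb)\Db^\top$ gives $\colspan(\tRb_s)\subseteq\colspan(\Xb_s)+\colspan(\Lb)\subseteq\colspan(\Lb)$ by the inductive hypothesis, so $\colspan(\Xb_{t+1})\subseteq\colspan(\Lb)$ as well.

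Finally, I would read off the membership of $\trb_t$ and $\bxi_t$ in $\cH$ term by term. The projected residual $\tRb_t$ has column space in $\colspan(\Lb)$ (by the induction above) and row space in $\colspan(\Db\Db^\top)=\colspan(\Db)$, so $\trb_t\in\cH$. Every summand of $\bzeta_t$ has the form $\vectorize(\Mb\,\Db\Db^\top)$ in which $\Mb$ contains a leftmost factor in $\{\Pb_s,\tRb_s\}$; since $\colspan(\Pb_s)\subseteq\colspan(\Xb_{s+1})+\colspan(\Xb_s)\subseteq\colspan(\Lb)$ and $\colspan(\tRb_s)\subseteq\colspan(\Lb)$, the column space of each such product stays in $\colspan(\Lb)$, while the trailing $\Db\Db^\top$ pushes the row space into $\colspan(\Db)$. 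For $\biota_t$, direct vectorization gives
\begin{align*}
\Hb_s\trb_s=\vectorize\bigl(\Xb_s\Xb_s^\top\tRb_s\Db\Db^\top+\tRb_s\Yb_s\Yb_s^\top\Db\Db^\top\bigr)
\end{align*}
for $s\in\{0,t-1,t\}$, and each summand again has column space in $\colspan(\Lb)$ and row space in $\colspan(\Db)$, so $\biota_t\in\cH$. The argument involves no deep obstacle; the main work is bookkeeping for the several summands defining $\bxi_t$, but the Kronecker structure reduces each verification to a short column/row-space containment check that inherits from the single inductive claim $\colspan(\Xb_t)\subseteq\colspan(\Lb)$.
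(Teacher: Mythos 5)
Your proposal is correct and follows essentially the same route as the paper's proof: identify $\cH$ via the Kronecker eigendecomposition of $\Hb_0$, then verify membership of $\trb_t$ and $\bxi_t$ by a column/row-space containment check that reduces to $\colspan(\Xb_t)\subseteq\colspan(\Lb)$. The only substantive difference is that you make the inductive argument $\colspan(\Xb_t)\subseteq\colspan(\Lb)$ explicit (the paper uses $\colspan(\Db\otimes\Xb_t)\subseteq\colspan(\Db\otimes\Xb_0)$ without spelling out why, deferring to the analogous GD argument in \Cref{lem:eigensubspace}) and you phrase the membership criterion as a pair of column/row-space containments rather than directly as membership in $\colspan(\Db\otimes\Xb_0)$; both are cosmetic refinements of the same argument, and your added detail is if anything more complete than the paper's terse ``similarly, $\bxi_t\in\cH$.''
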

\begin{proof}
    By Theorem 4.2.15 in \cite{horn1994topics}, we have the following eigenvalue decomposition for Kronecker product: 
    \begin{align*}
        \Hb_0=(\Ub_D\otimes \Ub_0)(\bSigma_D^2\otimes \bSigma_0^2)(\Ub_D\otimes\Ub_0)^\top, 
    \end{align*}
    where $\Db=\Ub_D\bSigma_D\Vb_D^\top$ and $\Xb_0=\Ub_0\bSigma_0\Vb_0^\top$ are singular value decompositions of $\Db$ and $\Xb_0$. 
    Therefore, we have 
    \begin{align*}
        \cH=\colspan(\Ub_D\otimes\Ub_0)=\colspan(\Db\otimes\Xb_0)=\colspan(\Db\otimes\Lb).
    \end{align*}
    In particular, the eigenvalues (not ordered) are 
    \begin{align*}
        \lambda_{(i-1)m+j}(\Hb_0)=\lambda_i(\Db\Db^\top)\lambda_j(\Xb_0\Xb_0^\top)=\sigma_i^2(\Db)\sigma_j^2(\Xb_0), ~i\in[n],j\in[m],
    \end{align*}
    where $\sigma_j(\Xb_0)> 0$ for $1\leq j\leq r$, $\sigma_j(\Xb_0)=0$ for $r+1\leq j\leq d$. 
    By \Cref{asm:interpolation}, $\Lb=\Ab\Db$, thus we have 
    \begin{align*}
        \vectorize(\Lb\Db^\top)
        =\vectorize(\Lb\Ib_N\Db^\top)
        =(\Db\otimes\Lb)\Ib_N
        \in\colspan(\Db\otimes\Lb)
        =\cH. 
    \end{align*}
    Meanwhile, 
    \begin{align*}
        \vectorize(\Xb_t\Yb_t^\top\Db\Db^\top)
        =(\Db\otimes\Xb_t)\vectorize(\Yb_t^\top\Db)
        \in\colspan(\Db\otimes\Xb_t) 
        \subseteq\colspan(\Db\otimes\Xb_0) 
        =\cH, 
    \end{align*}
    thus we have $\trb_t\in\cH$. Similarly, we have $\bxi_t\in\cH$. 
\end{proof}

\begin{lemma}[NAG contraction]\label{lem:NAG-LNN-contraction}
    If we choose step size $\eta=\frac{1}{\tL}$ and momentum $\beta=\frac{\sqrt{\tL}-\sqrt{\tmu}}{\sqrt{\tL}+\sqrt{\tmu}}$ where $\tL=\sigma_1^2(\Xb_0)\cdot\lambda_{\max}(\Db\Db^\top)$, $\tmu=\sigma_r^2(\Xb_0)\cdot\lambda_{\min}(\Db\Db^\top)$, then for all $(\ub,\vb)\in\cH\times\cH$, $\cH$ defined in \Cref{lem:LNN-eigensubspace},  
    \begin{align*}
        \norm{\Tb_\mathrm{NAG}\begin{pmatrix}
            \ub\\
            \vb
        \end{pmatrix}}
        \leq\left(1-\sqrt{\frac{\tmu}{\tL}}\right)\norm{\begin{pmatrix}
            \ub\\
            \vb
        \end{pmatrix}}.
    \end{align*}
\end{lemma}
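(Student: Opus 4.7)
The plan is to mimic the proof of \Cref{lem:NAG-MF-contraction} almost verbatim, replacing the role of $\Hb_0=\Ib_n\otimes(\Xb_0\Xb_0^\top)$ by the new $\Hb_0=(\Db\Db^\top)\otimes(\Xb_0\Xb_0^\top)$ and the scalars $L,\mu$ by $\tL,\tmu$. The key structural facts that made the earlier proof work still hold: $\Hb_0$ is symmetric positive semidefinite, its image equals $\cH$ (by \Cref{lem:LNN-eigensubspace}), and its positive spectrum is contained in $[\tmu,\tL]$ (read off from the Kronecker eigenvalues $\sigma_i^2(\Db)\sigma_j^2(\Xb_0)$ computed there). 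These are exactly the ingredients used in the matrix factorization argument.

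First I would expand the block determinant
\begin{align*}
\det(\Tb_\mathrm{NAG}-\lambda\Ib_{2mn})=\det\bigl((\beta+\lambda^2-(1+\beta)\lambda)\Ib_{mn}+(\eta(1+\beta)\lambda-\eta\beta)\Hb_0\bigr),
\end{align*}
and since $\Hb_0$ commutes with $\Ib_{mn}$, simultaneously diagonalize to reduce the eigenvalue equation to the scalar quadratic
\begin{align*}
\lambda^2-(1+\beta)(1-\eta\lambda_i(\Hb_0))\lambda+\beta(1-\eta\lambda_i(\Hb_0))=0
\end{align*}
for each $i=1,\dots,mn$. For indices with $\lambda_i(\Hb_0)=0$, the two roots are $\lambda=1$ and $\lambda=\beta$, with eigensubspaces $\cH_1=\{(\ub,\vb):\ub=\vb\in\ker\Hb_0\}$ and $\cH_\beta=\{(\ub,\vb):\ub=\beta\vb\in\ker\Hb_0\}$. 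A dimension count shows $\cH_1\oplus\cH_\beta=\ker\Hb_0\times\ker\Hb_0$, and its orthogonal complement is exactly $\cH\times\cH$ (since $\cH=(\ker\Hb_0)^\perp$). This is where the restriction to $\cH\times\cH$ in the statement becomes essential: it kills the trivial $\lambda=1$ mode that would otherwise prevent contraction.

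Second, for indices with $\lambda_i(\Hb_0)\in[\tmu,\tL]$, the roots satisfy
\begin{align*}
\lambda=\tfrac{1}{2}\!\left((1+\beta)(1-\eta\lambda_i(\Hb_0))\pm\sqrt{(1-\eta\lambda_i(\Hb_0))\bigl(-4\beta+(1+\beta)^2(1-\eta\lambda_i(\Hb_0))\bigr)}\right).
\end{align*}
Plugging in $\eta=1/\tL$ gives $1-\eta\lambda_i(\Hb_0)\in[0,1-\tmu/\tL]$, and with $\beta=(\sqrt{\tL}-\sqrt{\tmu})/(\sqrt{\tL}+\sqrt{\tmu})$ one verifies by the standard NAG algebra (checking the two monotonicity endpoints as in the MF proof) that $|\lambda|\leq 1-\sqrt{\tmu/\tL}$. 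This bound propagates from eigenvalues to the operator norm on $\cH\times\cH$ because $\Tb_\mathrm{NAG}$ is normal on this invariant subspace (the relevant $2\times 2$ blocks, after the simultaneous diagonalization, inherit an orthonormal eigenbasis on the non-degenerate part), yielding the desired inequality.

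The main obstacle is purely bookkeeping rather than conceptual: one must confirm that the invariant subspace decomposition from the MF proof carries over exactly, i.e.\ that $(\cH_1\oplus\cH_\beta)^\perp=\cH\times\cH$ and that $\Tb_\mathrm{NAG}$ restricted to $\cH\times\cH$ admits an orthonormal eigenbasis so that the spectral bound equals the operator norm. Both reduce to the symmetry of $\Hb_0$ and the elementary fact that the $2\times 2$ NAG block associated to any positive eigenvalue in $[\tmu,\tL]$ is diagonalizable with eigenvectors orthogonal to those of the other blocks after transporting through the Kronecker spectral basis. Once this is in hand, the bound $1-\sqrt{\tmu/\tL}$ on each block transfers to the claimed contraction on $\cH\times\cH$.
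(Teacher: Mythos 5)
Your plan is exactly the paper's plan: the paper's proof of this lemma is a one-liner deferring to the matrix-factorization contraction lemma with the eigenvalues $\sigma_i^2(\Db)\sigma_j^2(\Xb_0)$ substituted, and you reproduce that argument in full. So the block-determinant computation, the $\cH_1\oplus\cH_\beta=\ker(\Hb_0)\times\ker(\Hb_0)$ decomposition, and the eigenvalue bound $|\lambda|\leq 1-\sqrt{\tmu/\tL}$ all match the intended approach.

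However, the step where you pass from the eigenvalue bound to the operator-norm bound is wrong as written, and this is worth flagging because you are the one who tried to make it explicit (the paper's proof of \Cref{lem:NAG-MF-contraction} simply bounds $|\lambda|$ and stops, leaving the same gap unaddressed). You assert that $\Tb_\mathrm{NAG}$ is normal on $\cH\times\cH$ and that the $2\times 2$ blocks admit an orthonormal eigenbasis. Neither is true. After simultaneously diagonalizing $\Hb_0$, the relevant block for an eigenvalue $\lambda_i(\Hb_0)>0$ is
\begin{align*}
M_i=\begin{pmatrix}(1+\beta)a_i & -\beta a_i\\ 1 & 0\end{pmatrix},\qquad a_i=1-\eta\lambda_i(\Hb_0),
\end{align*}
which is not symmetric, not normal, and for complex-conjugate eigenvalues has non-orthogonal (complex) eigenvectors; for $a_i=0$ it is nilpotent and not even diagonalizable, with $\norm{M_i}=1$. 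So the spectral radius of $M_i$ is strictly smaller than $\norm{M_i}$ in general. Different blocks do act on mutually orthogonal coordinate pairs, but that only makes eigenvectors of \emph{different} blocks orthogonal to each other, not eigenvectors \emph{within} a block; you conflate the two in your final paragraph. The honest statement is that the induction should be carried out with respect to a block-adapted norm (or accept a bounded prefactor $C\geq 1$ in front of $\rho^t$), which is the standard fix in linear NAG analyses; simply invoking normality does not close the gap.
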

\begin{proof}
    Following the same line of proof for \Cref{lem:NAG-MF-contraction} in \Cref{proof:NAG-MF-contraction} and substituting the eigenvalues in \Cref{lem:LNN-eigensubspace}, we obtain the result. 
\end{proof}

\begin{lemma}\label{lem:NAG-LNN-induction}
    Suppose $0<\beta\leq\theta^2<\theta<1$. If there exist some constants $C_1$ and $C_2$ such that for any $s\leq t$, the NAG dynamics \eqref{lem:NAG-LNN-dynamics} yields $\norm{\begin{pmatrix}
        \trb_s\\
        \trb_{s-1}
    \end{pmatrix}}\leq C_1\theta^s\norm{\begin{pmatrix}
        \trb_0\\
        \trb_{-1}
    \end{pmatrix}}$, $\frob{\Xb_s-\Xb_0}\leq C_2$, and $\frob{\Yb_s-\Yb_0}\leq C_2$, then we have 
    \begin{align*}
        \norm{\bzeta_t}\leq C_3\theta^{2t}\norm{\begin{pmatrix}
        \trb_0\\
        \trb_{-1}
    \end{pmatrix}}^2, 
    \quad \text{and}
    \quad \norm{\biota_t}\leq C_4\theta^t\norm{\begin{pmatrix}
        \trb_0\\
        \trb_{-1}
    \end{pmatrix}}
    \end{align*}
    for some constants $C_3$ and $C_4$ depending on $C_1$ and $C_2$. 
    Moreover, if $C_1$ and $C_2$ satisfy 
    \begin{align*}    
        \left(\max(\norm{\Xb_0},\norm{\Yb_0})+C_2\right){\eta C_1\norm{\begin{pmatrix}
            \trb_0\\
            \trb_{-1}
        \end{pmatrix}}}
        \leq{(1-\theta)^2}C_2, 
    \end{align*}
    then we have  
    \begin{align*}
        \frob{\Xb_{t+1}-\Xb_0}\leq C_2,
        \quad \frob{\Yb_{t+1}-\Yb_0}\leq C_2.
    \end{align*}
\end{lemma}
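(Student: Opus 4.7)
The plan is to adapt the proof of Lemma~\ref{lem:NAG-MF-induction} (given in \Cref{proof:NAG-MF-induction}) to the linear-neural-network setting. Structurally, the NAG residual dynamics \eqref{eq:NAG-LNN-res} is identical to \eqref{eq:NAG-MF-res} after replacing the residual $\Rb_t$ by the projected residual $\tRb_t=\Rb_t\Db^\top$ and appending a right multiplication by $\Db\Db^\top$ to every higher-order term; the surrogate ``Hessian'' $\Hb_t$ likewise acquires a Kronecker factor $\Db\Db^\top$ in place of $\Ib_n$. The main obstacle is not conceptual but a bookkeeping task: I must verify that every occurrence of $\lambda_{\max}(\Db\Db^\top)$ in the higher-order bounds is absorbed by the step size $\eta=1/\tL$ (with $\tL=\sigma_1^2(\Xb_0)\lambda_{\max}(\Db\Db^\top)$), so that the final constants $C_3,C_4$ take exactly the same form as in the matrix factorization case with $L,\mu$ replaced by $\tL,\tmu$.

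First I would unroll the NAG momentum. From the recursion $\Pb_s = \beta\Pb_{s-1} - (1+\beta)\eta\tRb_s\Yb_s + \beta\eta\tRb_{s-1}\Yb_{s-1}$ with $\Pb_{-1}=0$, the same telescoping argument used in \Cref{proof:NAG-MF-induction} gives a closed form $\Pb_s = \eta\sum_{j=0}^s \gamma_j\,\tRb_{s-j}\Yb_{s-j}$, where the coefficients satisfy $|\gamma_j|\le 1+\beta$ for $j=0$ and $|\gamma_j|\le \beta^j$ for $j\ge 1$. Combining this with the inductive hypothesis $\frob{\tRb_s}\le\|(\trb_s^\top,\trb_{s-1}^\top)^\top\|\le C_1\theta^s\|(\trb_0^\top,\trb_{-1}^\top)^\top\|$, the bound $\norm{\Yb_s}\le\norm{\Yb_0}+C_2$, and the inequality $\beta\le\theta^2<\theta$ (so that $\sum_{j\ge 0}\beta^j\theta^{-j}$ is geometric), I obtain
\[
\frob{\Pb_s}\le \frac{\eta C_1(\norm{\Yb_0}+C_2)}{1-\theta}\,\theta^s\Bigl\|\begin{pmatrix}\trb_0\\\trb_{-1}\end{pmatrix}\Bigr\|,
\]
and symmetrically for $\frob{\Qb_s}$ with $\Xb$ in place of $\Yb$.

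Next, for $\norm{\bzeta_t}$ I would apply the submultiplicative inequality $\frob{A\Db\Db^\top}\le\lambda_{\max}(\Db\Db^\top)\frob{A}$ to each of the three vectorized matrix products appearing in $\bzeta_t$, then bound the inner products using the previous step together with $\frob{\Pb_s}\frob{\Qb_s}=O(\theta^{2s})$ and $\frob{\tRb_s}\norm{\Yb_s}\frob{\Qb_s}=O(\theta^{2s})$. Collecting terms yields
\[
\norm{\bzeta_t}\le C_3\,\theta^{2t}\Bigl\|\begin{pmatrix}\trb_0\\\trb_{-1}\end{pmatrix}\Bigr\|^2,\qquad C_3=\frac{\eta^2\lambda_{\max}(\Db\Db^\top)C_1^2(\norm{\Xb_0}+C_2)(\norm{\Yb_0}+C_2)(4-2\theta)}{(1-\theta)^2}.
\]
For $\norm{\biota_t}$, I would decompose $\Hb_s-\Hb_0=(\Db\Db^\top(\Yb_s\Yb_s^\top-\Yb_0\Yb_0^\top))\otimes\Ib_m+(\Db\Db^\top)\otimes(\Xb_s\Xb_s^\top-\Xb_0\Xb_0^\top)$, use the Kronecker identity $(B\otimes A)\vectorize(M)=\vectorize(AMB^\top)$ to convert $(\Hb_0-\Hb_s)\trb_s$ back to matrix form, and invoke $\norm{\Xb_s\Xb_s^\top-\Xb_0\Xb_0^\top}\le(2\norm{\Xb_0}+C_2)C_2$ together with its $\Yb$-analogue to obtain
\[
\norm{\biota_t}\le C_4\,\theta^t\Bigl\|\begin{pmatrix}\trb_0\\\trb_{-1}\end{pmatrix}\Bigr\|,\qquad C_4=2\eta\lambda_{\max}(\Db\Db^\top)(\norm{\Xb_0}+\norm{\Yb_0}+C_2)C_1C_2(1+2\theta).
\]

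Finally, the iterate bounds follow by telescoping: $\frob{\Xb_{t+1}-\Xb_0}\le\sum_{s=0}^t\frob{\Pb_s}\le \eta C_1(\norm{\Yb_0}+C_2)\|(\trb_0^\top,\trb_{-1}^\top)^\top\|/(1-\theta)^2$, which is at most $C_2$ exactly under the stated hypothesis on $C_1,C_2$; the same argument works for $\Yb$. The promised cancellation appears when one substitutes $\eta=1/\tL=1/(\sigma_1^2(\Xb_0)\lambda_{\max}(\Db\Db^\top))$: the factor $\eta^2\lambda_{\max}(\Db\Db^\top)$ in $C_3$ reduces to $1/(\sigma_1^4(\Xb_0)\lambda_{\max}(\Db\Db^\top))$, and $\eta\lambda_{\max}(\Db\Db^\top)$ in $C_4$ reduces to $1/\sigma_1^2(\Xb_0)$, so the constants match those in \Cref{proof:NAG-MF-induction} (up to the replacement of $L,\mu$ by $\tL,\tmu$) and the induction step closes verbatim.
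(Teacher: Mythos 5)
Your proposal is correct and tracks the paper's own proof of Lemma~\ref{lem:NAG-LNN-induction} essentially verbatim: the paper likewise recycles the unrolled-momentum bounds for $\frob{\Pb_s}$, $\frob{\Qb_s}$ from the matrix-factorization case, pulls $\lambda_{\max}(\Db\Db^\top)$ out of the $\Db\Db^\top$-multiplied higher-order terms via submultiplicativity, and telescopes the updates to close the $C_2$-ball bounds; your constants $C_3$, $C_4$ match the paper exactly (with $\lambda_1=\lambda_{\max}$). The only cosmetic slip is the claimed coefficient decay $|\gamma_j|\le\beta^j$ for $j\ge 1$ — the recursion actually yields $|\gamma_j|=\beta^{j+1}$, and it is this extra factor of $\beta$ that (together with $\beta\le\theta^2$) produces precisely the $(1-\theta)^{-1}$ prefactor in the $\frob{\Pb_s}$ bound rather than a slightly worse constant; since you state the correct final bound anyway, this does not affect the argument.
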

\begin{proof}[Proof of \Cref{lem:NAG-LNN-induction}]
    Following the same line of proof for \Cref{lem:NAG-MF-induction} in \Cref{proof:NAG-MF-induction}, we have 
    \begin{align}
        \frob{\Pb_t}
        &\leq\eta C_1(\norm{\Yb_0}+C_2)\frac{1}{1-\theta}\cdot\theta^t\norm{\begin{pmatrix}
            \trb_0\\
            \trb_{-1}
        \end{pmatrix}},\label{eq:bound-P-LNN}
    \end{align}
    and 
    \begin{align}
        \frob{\Qb_t}
        \leq\eta C_1(\norm{\Xb_0}+C_2)\frac{1}{1-\theta}\cdot\theta^t\norm{\begin{pmatrix}
            \trb_0\\
            \trb_{-1}
        \end{pmatrix}}.\label{eq:bound-Q-LNN}
    \end{align}
    As a result, we have 
    \begin{align*}
        \frob{\Pb_t\Qb_t^\top\Db\Db^\top}
        \leq\lambda_1(\Db\Db^\top)\frob{\Pb_t}\frob{\Qb_t}
        &\leq \frac{\eta^2 C_1^2(\norm{\Xb_0}+C_2)(\norm{\Yb_0}+C_2){\lambda_1(\Db\Db^\top)}}{(1-\theta)^2}\theta^{2t}\norm{\begin{pmatrix}
            \trb_0\\
            \trb_{-1}
        \end{pmatrix}}^2,
    \end{align*}
    and 
    \begin{align*}
        &\beta\frob{(\eta\tRb_{t-1}\Yb_{t-1}\Qb_{t-1}^\top + \eta\Pb_{t-1}\Xb_{t-1}^\top\tRb_{t-1} + \Pb_{t-1}\Qb_{t-1}^\top)\Db\Db^\top}\\
        \leq&\beta{\lambda_1(\Db\Db^\top)}\left(\eta\frob{\tRb_{t-1}}(\norm{\Yb_{t-1}}\frob{\Qb_{t-1}} + \norm{\Xb_{t-1}}\frob{\Pb_{t-1}}) + \frob{\Pb_{t-1}}\frob{\Qb_{t-1}}\right)\\
        \leq& \frac{\eta^2 C_1^2(\norm{\Xb_0}+C_2)(\norm{\Yb_0}+C_2)(3-2\theta){\lambda_1(\Db\Db^\top)}}{(1-\theta)^2}\theta^{2t}\norm{\begin{pmatrix}
            \trb_0\\
            \trb_{-1}
        \end{pmatrix}}^2, 
    \end{align*}
    Combining the inequalities, we get 
    \begin{align*}
        \norm{\bzeta_t}\leq C_3\theta^{2t}\norm{\begin{pmatrix}
            \trb_0\\
            \trb_{-1}
        \end{pmatrix}}^2,
    \end{align*}
    where $C_3=\frac{\eta^2 C_1^2(\norm{\Xb_0}+C_2)(\norm{\Yb_0}+C_2)(4-2\theta){\lambda_1(\Db\Db^\top)}}{(1-\theta)^2}$.

    Similarly, we have 
    \begin{align*}
        \norm{\biota_t}
        &\leq 2(\norm{\Xb_0}+\norm{\Yb_0}+C_2)C_1C_2\lambda_1(\Db\Db^\top)((1+\beta)\eta\theta^t + \beta\eta\theta^{t-1}) \norm{\begin{pmatrix}
            \trb_0\\
            \trb_{-1}
        \end{pmatrix}}\\
        &\leq C_4\theta^t\norm{\begin{pmatrix}
            \trb_0\\
            \trb_{-1}
        \end{pmatrix}},
    \end{align*}
    where $C_4=2\eta(\norm{\Xb_0}+\norm{\Yb_0}+C_2)C_1C_2(1+2\theta){\lambda_1(\Db\Db^\top)}$.

    Finally, by \eqref{eq:bound-P-LNN}, we have 
    \begin{align*}
        \frob{\Xb_{t+1}-\Xb_0}
        &\leq\sum_{s=0}^t\frob{\Pb_s}\leq\frac{\eta C_1(\norm{\Yb_0}+C_2)}{(1-\theta)^2}\norm{\begin{pmatrix}
            \trb_0\\
            \trb_{-1}
        \end{pmatrix}}
        \leq C_2,
    \end{align*}
    where the last inequality is from our assumption on $C_2$.
    Similarly, by \eqref{eq:bound-Q-LNN}, we have 
    \begin{align*}
        \frob{\Yb_{t+1}-\Yb_0}
        &\leq\sum_{s=0}^t\frob{\Qb_s}\leq\frac{\eta C_1(\norm{\Xb_0}+C_2)}{(1-\theta)^2}\norm{\begin{pmatrix}
            \trb_0\\
            \trb_{-1}
        \end{pmatrix}}
        \leq C_2.
    \end{align*}
\end{proof}

\subsection{Proof of Theorem~\ref{thm:LNN-NAG}}
\begin{proof}[Proof of \Cref{thm:LNN-NAG}]
    By initialization, we have $\norm{\trb_0}=\norm{\trb_{-1}}=\frob{\Lb\Db^\top}$. 
    Let $C_1$ to $C_4$ be constants defined in \Cref{lem:NAG-LNN-induction}. 
    Define $\rho=1-\frac{\sqrt{\tmu}}{\sqrt{\tL}}$, $\theta=1-\frac{\sqrt{\tmu}}{2\sqrt{\tL}}$, $a_t=C_1\frob{\begin{pmatrix}\tRb_t&\tRb_{-1}\end{pmatrix}}$, and $b_t=C_1\norm{\bxi_t}$ for $t\geq 0$. 
    It is easy to verify that $\beta\leq\theta^2<\theta<1$ and $\rho<\theta<1$. 
    By \Cref{lem:NAG-LNN-dynamics,lem:LNN-eigensubspace,lem:NAG-LNN-contraction} we have 
    \begin{align*}
        a_{t+1}\leq \rho\cdot a_t + b_t
    \end{align*}
    for all $t\geq 0$. It remains to show that $b_t\leq\theta^t\cdot c_0$. 
    For the initial step, $a_0=\sqrt{2}C_1\frob{\Lb\Db^\top}$, $b_0=0$. 
    Let $C_1=\frac{\tmu p}{4\sqrt{2}\frob{\Lb\Db^\top}(1+p)}$ and $C_2=p\sqrt{L}$ where $p=\frac{\sqrt{\tmu}}{144\sqrt{\tL}}\leq\frac{1}{144}<1$, 
    then we have 
    \begin{align*}
        &C_3
        =\frac{\tmu p^3}{8\frob{\Lb\Db^\top}^2(1+p)}\left(2+\sqrt{\frac{\tmu}{\tL}}\right),\quad
        C_4
        =\frac{\tmu p^2}{2\sqrt{2}\frob{\Lb\Db^\top}}\left(3-\sqrt{\frac{\tmu}{\tL}}\right).
    \end{align*}
    Let $c_0=\sqrt{2}C_1(\sqrt{2}C_3\frob{\Lb\Db^\top}+C_4)\frob{\Lb\Db^\top}$, 
    then we can show the following relations: 
    Given our choice of constants, there hold  
    \begin{align}\label{eq:LNN-NAG-constant}
        a_0+\frac{c_0}{\theta-\rho}\leq \sqrt{2}C_1^2\frob{\Ab} \quad\text{and}\quad C_1\geq 1.
    \end{align}
    Indeed, by \eqref{eq:LNN-NAG-mu}, we have $C_1\geq 1$. 
    It remains to show  
    \begin{align*}
        a_0+\frac{c_0}{\theta-\rho}\leq\sqrt{2}C_1^2\frob{\Lb\Db^\top},
    \end{align*}
    which is equivalent to 
    \begin{align*}
        \frob{\Lb\Db^\top} + \frac{\sqrt{\tmu\tL} p^3}{2\sqrt{2}(1+p)}\left(2+\sqrt{\frac{\tmu}{\tL}}\right) + \frac{\sqrt{\tmu\tL} p^2}{\sqrt{2}}\left(3-\sqrt{\frac{\tmu}{\tL}}\right)\leq \frac{\tmu p}{4\sqrt{2}(1+p)}. 
    \end{align*}
    By \eqref{eq:LNN-NAG-mu} and $p=\frac{\sqrt{\tmu}}{144\sqrt{\tL}}<1$, each one of the three terms on the left hand side is upper bounded by $\frac{\mu p}{12\sqrt{2}(1+p)}$, hence the inequality holds. 
    \eqref{eq:LNN-NAG-constant} guarantees the induction conditions in \Cref{lem:NAG-LNN-induction}, thus we have 
    \begin{align*}
        \norm{\trb_{t+1}}
        \leq \sqrt{2}C_1\theta^{t+1}\frob{\Lb\Db^\top}
        \leq \frac{\tmu }{576\frob{\Lb\Db^\top}}\left(1-\frac{\sqrt{\tmu}}{2\sqrt{\tL}}\right)^{t+1}\frob{\Lb\Db^\top}.
    \end{align*} 
    By \Cref{asm:interpolation}, we have $\rowspan(\Lb)\in\rowspan(\Db)=\colspan(\Db^\top)$, thus we have 
    \begin{align*}
        \frob{\Rb_t}
        &=\frob{\Xb_t\Yb_t^\top\Db-\Lb}\\
        &\leq\sigma_{\min}^{-1}(\Db)\frob{(\Xb_t\Yb_t^\top\Db-\Lb)\Db^\top}\\
        &\leq\frac{\sigma_r^2(\Xb_0)\sigma_{\min}(\Db)}{576}\left(1-\frac{\sigma_r(\Xb_0)\sqrt{\lambda_{\min}(\Db\Db^\top)}}{2\sigma_1(\Xb_0)\sqrt{\lambda_{\max}(\Db\Db^\top)}}\right)^t.
    \end{align*}
\end{proof}

\subsection{Proof of Corollaries}
\begin{proof}[Proof of \Cref{cor:LNN-NAG-1}]
    By \Cref{prop:s-init-MF}, $\cond(\Xb_0)=O(\frac{d\cdot\cond(\Lb)}{\tau(d-r+1)})$ with probability at least $1-\delta$, where $\delta=3e^{-\min\{(d-r+1)\log\frac{1}{c_1\tau}, c_2d, \frac{d}{2}\}}$. Plugging it in \Cref{thm:LNN-NAG} yields the result. 
\end{proof}
\begin{proof}[Proof of \Cref{cor:LNN-NAG-2}]
    After orthonormalization, we have $\cond(\Xb_0)=1$. The result follows immediately from \Cref{thm:LNN-NAG}. 
\end{proof}
\begin{proof}[Proof of \Cref{cor:LNN-NAG-3}]
    By \Cref{prop:s-lower,prop:s-upper}, $\cond(\Xb_0)=O(\frac{d}{\tau(d-m+1)})$ with probability at least $1-\delta$, where $\delta=3e^{-\min\{(d-m+1)\log\frac{1}{c_1\tau}, c_2d, \frac{d}{2}\}}$. Plugging it in \Cref{thm:LNN-NAG} yields the result. 
\end{proof}

\section{Additional Experiments}\label{sec:additional-experiments}
\begin{figure}[htb!]
    \centering
    \includegraphics[width=0.55\linewidth]{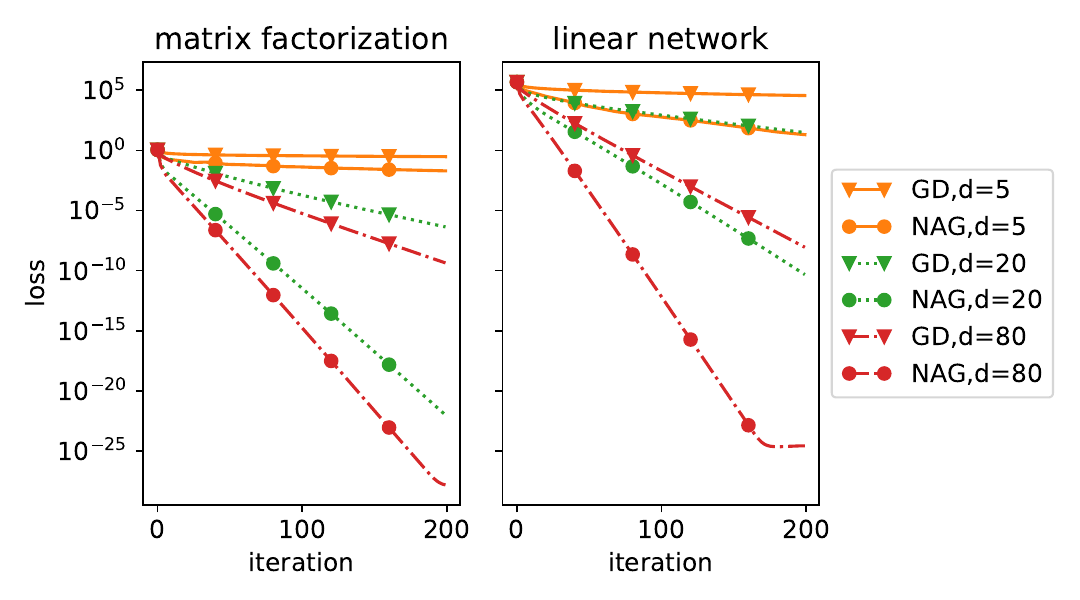}
    \caption{\small \it GD and NAG on large matrices exhibit similar behavior to small matrices in \Cref{fig:result-2}. Left: matrix factorization with $m=1200$ and $n=1000$. Right: linear neural networks with $m=500$, $n=400$, $N=600$. }
    \label{fig:result-4}
\end{figure}

\begin{figure}[htb!]
    \centering
    \includegraphics[width=0.55\linewidth]{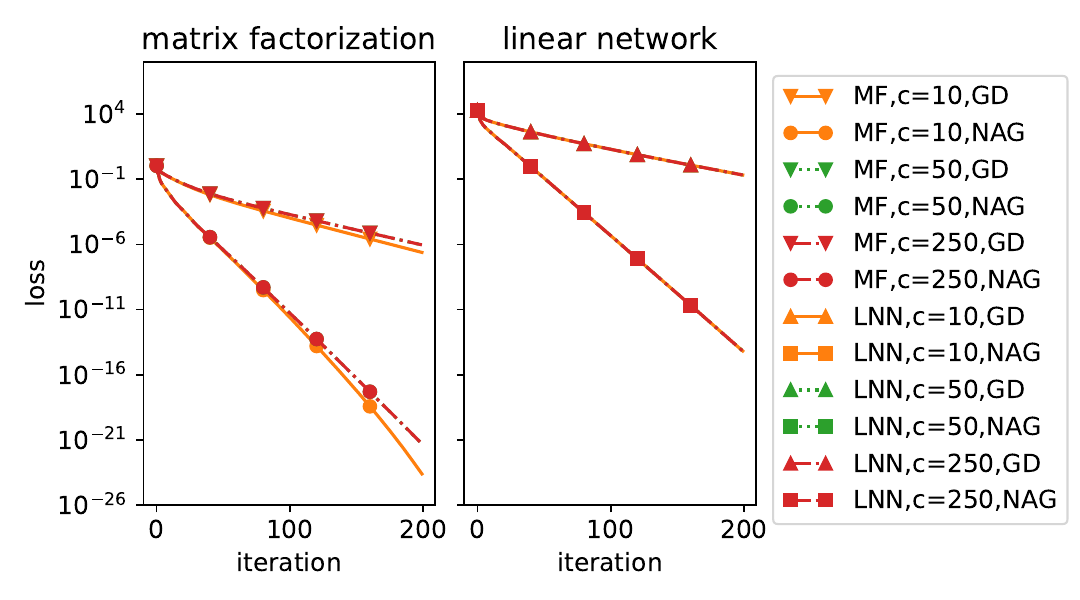}
    \caption{\small \it GD and NAG with different values of $c$. When $c$ is sufficiently large, changing its value would not significantly affect the convergence rate. }
    \label{fig:result-5}
\end{figure}

\begin{figure}[htb!]
    \centering
    \includegraphics[width=0.55\linewidth]{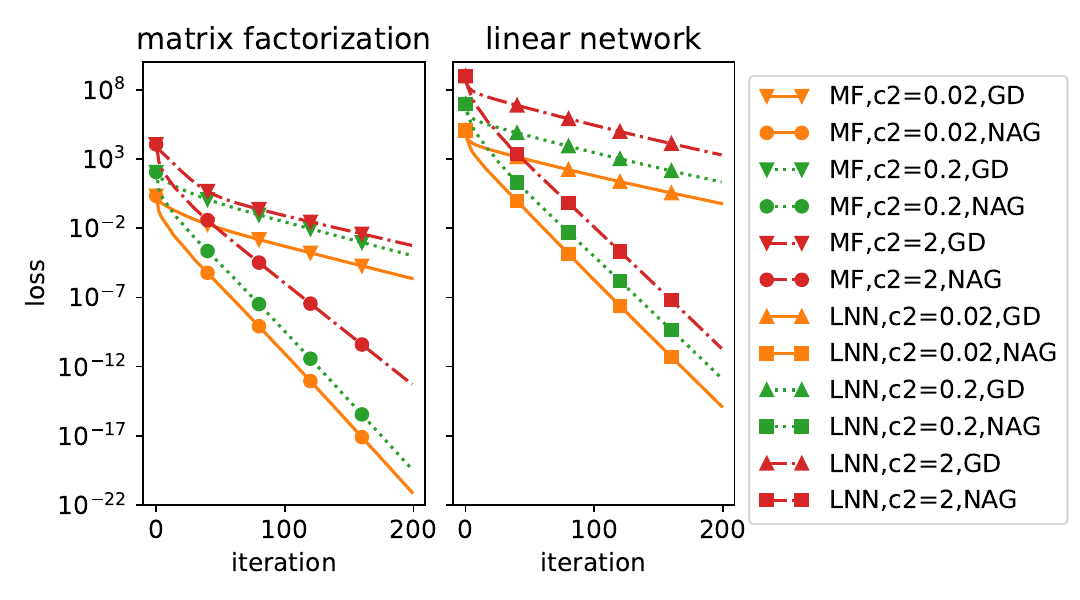}
    \caption{\small \it GD and NAG with initialization $\mathbf{X}_0=c_1\mathbf{A}\mathbf{\Phi}_1$, $\mathbf{Y}_0=c_2\mathbf{\Phi}_2$, $c_1=50$. The initial loss (intercept) increases as $c_2$ increases within a range, while the convergence rate (slope) does not change significantly. }
    \label{fig:result-6}
\end{figure}
This section provides additional experiments. 
Firstly, we investigate larger-sized problems by setting $(m,n)=(1200,1000)$ for matrix factorization and $(m,n,N)=(500,400,600)$ for linear neural networks. We keep other settings the same as for \Cref{fig:result-2} and compare the performances of GD and NAG. The results are provided in \Cref{fig:result-4}. As illustrated, the conclusion that NAG performs better than GD and overparameterization accelerates convergence remains valid for large matrices. 

Secondly, we conduct additional experiments on GD and NAG with different values of $c$ and plot the results in \Cref{fig:result-5}. 
We set $d=20$, while other settings remain the same as in \Cref{fig:result-2}. 
As illustrated, when $c$ is sufficiently large, further increasing $c$ has little effect on the convergence rate, which is consistent with our theory. 

We also investigate general unbalanced initialization $\Xb_0=c_1\Ab\bPhi_1\in\mathbb{R}^{m\times d}$, $\Yb_0=c_2\bPhi_2\in\mathbb{R}^{n\times d}$, where $[\bPhi_1]_{i,j}\sim\mathcal{N}(0,1/d)$ and $[\bPhi_2]_{i,j}\sim\mathcal{N}(0,1/n)$. We set $d=20$, while other settings remain the same as in \Cref{fig:result-2}. We keep $c_1=50$ and set different values of $c_2$. The results are plotted in \Cref{fig:result-6}. As illustrated, changing $c_2$ within a range only results in different initial losses (intercept), while the convergence rates (slope) are not significantly affected. 
This supports our claim in \Cref{rmk:unbalance}.

\section*{NeurIPS Paper Checklist}

\begin{enumerate}

\item {\bf Claims}
    \item[] Question: Do the main claims made in the abstract and introduction accurately reflect the paper's contributions and scope?
    \item[] Answer: \answerYes{} 
    \item[] Justification: The main claims made in the abstract and introduction (\Cref{sec:intro}) accurately reflect the paper's contributions and scope. 
    \item[] Guidelines:
    \begin{itemize}
        \item The answer NA means that the abstract and introduction do not include the claims made in the paper.
        \item The abstract and/or introduction should clearly state the claims made, including the contributions made in the paper and important assumptions and limitations. A No or NA answer to this question will not be perceived well by the reviewers. 
        \item The claims made should match theoretical and experimental results, and reflect how much the results can be expected to generalize to other settings. 
        \item It is fine to include aspirational goals as motivation as long as it is clear that these goals are not attained by the paper. 
    \end{itemize}

\item {\bf Limitations}
    \item[] Question: Does the paper discuss the limitations of the work performed by the authors?
    \item[] Answer: \answerYes{} 
    \item[] Justification: We state all settings and assumptions required for our results and discuss limitations (e.g. exact rank-$r$ $\mathbf{A}$, $\mathbf{Y}_0=0$, etc.) in \Cref{sec:intro,sec:MF,sec:LNN,sec:conclusion}. 
    \item[] Guidelines:
    \begin{itemize}
        \item The answer NA means that the paper has no limitation while the answer No means that the paper has limitations, but those are not discussed in the paper. 
        \item The authors are encouraged to create a separate "Limitations" section in their paper.
        \item The paper should point out any strong assumptions and how robust the results are to violations of these assumptions (e.g., independence assumptions, noiseless settings, model well-specification, asymptotic approximations only holding locally). The authors should reflect on how these assumptions might be violated in practice and what the implications would be.
        \item The authors should reflect on the scope of the claims made, e.g., if the approach was only tested on a few datasets or with a few runs. In general, empirical results often depend on implicit assumptions, which should be articulated.
        \item The authors should reflect on the factors that influence the performance of the approach. For example, a facial recognition algorithm may perform poorly when image resolution is low or images are taken in low lighting. Or a speech-to-text system might not be used reliably to provide closed captions for online lectures because it fails to handle technical jargon.
        \item The authors should discuss the computational efficiency of the proposed algorithms and how they scale with dataset size.
        \item If applicable, the authors should discuss possible limitations of their approach to address problems of privacy and fairness.
        \item While the authors might fear that complete honesty about limitations might be used by reviewers as grounds for rejection, a worse outcome might be that reviewers discover limitations that aren't acknowledged in the paper. The authors should use their best judgment and recognize that individual actions in favor of transparency play an important role in developing norms that preserve the integrity of the community. Reviewers will be specifically instructed to not penalize honesty concerning limitations.
    \end{itemize}

\item {\bf Theory Assumptions and Proofs}
    \item[] Question: For each theoretical result, does the paper provide the full set of assumptions and a complete (and correct) proof?
    \item[] Answer: \answerYes{} 
    \item[] Justification: We clearly state all sets of assumptions (\Cref{sec:intro,sec:MF,sec:LNN}) and proof sketches in the main part of the paper (\Cref{sec:proof-sketch}), and provide complete and verified proof in the appendix (Appendix A to D). Theorems and Lemmas are properly referenced.
    \item[] Guidelines:
    \begin{itemize}
        \item The answer NA means that the paper does not include theoretical results. 
        \item All the theorems, formulas, and proofs in the paper should be numbered and cross-referenced.
        \item All assumptions should be clearly stated or referenced in the statement of any theorems.
        \item The proofs can either appear in the main paper or the supplemental material, but if they appear in the supplemental material, the authors are encouraged to provide a short proof sketch to provide intuition. 
        \item Inversely, any informal proof provided in the core of the paper should be complemented by formal proofs provided in appendix or supplemental material.
        \item Theorems and Lemmas that the proof relies upon should be properly referenced. 
    \end{itemize}

    \item {\bf Experimental Result Reproducibility}
    \item[] Question: Does the paper fully disclose all the information needed to reproduce the main experimental results of the paper to the extent that it affects the main claims and/or conclusions of the paper (regardless of whether the code and data are provided or not)?
    \item[] Answer: \answerYes{} 
    \item[] Justification: We state all main configurations of our experiments in \Cref{sec:experiment} that allows one to reproduce our results.
    \item[] Guidelines:
    \begin{itemize}
        \item The answer NA means that the paper does not include experiments.
        \item If the paper includes experiments, a No answer to this question will not be perceived well by the reviewers: Making the paper reproducible is important, regardless of whether the code and data are provided or not.
        \item If the contribution is a dataset and/or model, the authors should describe the steps taken to make their results reproducible or verifiable. 
        \item Depending on the contribution, reproducibility can be accomplished in various ways. For example, if the contribution is a novel architecture, describing the architecture fully might suffice, or if the contribution is a specific model and empirical evaluation, it may be necessary to either make it possible for others to replicate the model with the same dataset, or provide access to the model. In general. releasing code and data is often one good way to accomplish this, but reproducibility can also be provided via detailed instructions for how to replicate the results, access to a hosted model (e.g., in the case of a large language model), releasing of a model checkpoint, or other means that are appropriate to the research performed.
        \item While NeurIPS does not require releasing code, the conference does require all submissions to provide some reasonable avenue for reproducibility, which may depend on the nature of the contribution. For example
        \begin{enumerate}
            \item If the contribution is primarily a new algorithm, the paper should make it clear how to reproduce that algorithm.
            \item If the contribution is primarily a new model architecture, the paper should describe the architecture clearly and fully.
            \item If the contribution is a new model (e.g., a large language model), then there should either be a way to access this model for reproducing the results or a way to reproduce the model (e.g., with an open-source dataset or instructions for how to construct the dataset).
            \item We recognize that reproducibility may be tricky in some cases, in which case authors are welcome to describe the particular way they provide for reproducibility. In the case of closed-source models, it may be that access to the model is limited in some way (e.g., to registered users), but it should be possible for other researchers to have some path to reproducing or verifying the results.
        \end{enumerate}
    \end{itemize}

\item {\bf Open access to data and code}
    \item[] Question: Does the paper provide open access to the data and code, with sufficient instructions to faithfully reproduce the main experimental results, as described in supplemental material?
    \item[] Answer: \answerYes{} 
    \item[] Justification: We provide anonymized code in the zip file for experiments in \Cref{sec:experiment} as supplement materials. 
    \item[] Guidelines:
    \begin{itemize}
        \item The answer NA means that paper does not include experiments requiring code.
        \item Please see the NeurIPS code and data submission guidelines (\url{https://nips.cc/public/guides/CodeSubmissionPolicy}) for more details.
        \item While we encourage the release of code and data, we understand that this might not be possible, so “No” is an acceptable answer. Papers cannot be rejected simply for not including code, unless this is central to the contribution (e.g., for a new open-source benchmark).
        \item The instructions should contain the exact command and environment needed to run to reproduce the results. See the NeurIPS code and data submission guidelines (\url{https://nips.cc/public/guides/CodeSubmissionPolicy}) for more details.
        \item The authors should provide instructions on data access and preparation, including how to access the raw data, preprocessed data, intermediate data, and generated data, etc.
        \item The authors should provide scripts to reproduce all experimental results for the new proposed method and baselines. If only a subset of experiments are reproducible, they should state which ones are omitted from the script and why.
        \item At submission time, to preserve anonymity, the authors should release anonymized versions (if applicable).
        \item Providing as much information as possible in supplemental material (appended to the paper) is recommended, but including URLs to data and code is permitted.
    \end{itemize}

\item {\bf Experimental Setting/Details}
    \item[] Question: Does the paper specify all the training and test details (e.g., data splits, hyperparameters, how they were chosen, type of optimizer, etc.) necessary to understand the results?
    \item[] Answer: \answerYes{} 
    \item[] Justification: We specify all important experiment details in \Cref{sec:experiment}.
    \item[] Guidelines:
    \begin{itemize}
        \item The answer NA means that the paper does not include experiments.
        \item The experimental setting should be presented in the core of the paper to a level of detail that is necessary to appreciate the results and make sense of them.
        \item The full details can be provided either with the code, in appendix, or as supplemental material.
    \end{itemize}

\item {\bf Experiment Statistical Significance}
    \item[] Question: Does the paper report error bars suitably and correctly defined or other appropriate information about the statistical significance of the experiments?
    \item[] Answer: \answerNo{} 
    \item[] Justification: Our experiments do not require error bars. 
    \item[] Guidelines:
    \begin{itemize}
        \item The answer NA means that the paper does not include experiments.
        \item The authors should answer "Yes" if the results are accompanied by error bars, confidence intervals, or statistical significance tests, at least for the experiments that support the main claims of the paper.
        \item The factors of variability that the error bars are capturing should be clearly stated (for example, train/test split, initialization, random drawing of some parameter, or overall run with given experimental conditions).
        \item The method for calculating the error bars should be explained (closed form formula, call to a library function, bootstrap, etc.)
        \item The assumptions made should be given (e.g., Normally distributed errors).
        \item It should be clear whether the error bar is the standard deviation or the standard error of the mean.
        \item It is OK to report 1-sigma error bars, but one should state it. The authors should preferably report a 2-sigma error bar than state that they have a 96\% CI, if the hypothesis of Normality of errors is not verified.
        \item For asymmetric distributions, the authors should be careful not to show in tables or figures symmetric error bars that would yield results that are out of range (e.g. negative error rates).
        \item If error bars are reported in tables or plots, The authors should explain in the text how they were calculated and reference the corresponding figures or tables in the text.
    \end{itemize}

\item {\bf Experiments Compute Resources}
    \item[] Question: For each experiment, does the paper provide sufficient information on the computer resources (type of compute workers, memory, time of execution) needed to reproduce the experiments?
    \item[] Answer: \answerNo{} 
    \item[] Justification: Our experiments have no special requirements on compute resources. 
    \item[] Guidelines:
    \begin{itemize}
        \item The answer NA means that the paper does not include experiments.
        \item The paper should indicate the type of compute workers CPU or GPU, internal cluster, or cloud provider, including relevant memory and storage.
        \item The paper should provide the amount of compute required for each of the individual experimental runs as well as estimate the total compute. 
        \item The paper should disclose whether the full research project required more compute than the experiments reported in the paper (e.g., preliminary or failed experiments that didn't make it into the paper). 
    \end{itemize}
    
\item {\bf Code Of Ethics}
    \item[] Question: Does the research conducted in the paper conform, in every respect, with the NeurIPS Code of Ethics \url{https://neurips.cc/public/EthicsGuidelines}?
    \item[] Answer: \answerYes{} 
    \item[] Justification: The research conducted in the paper conform with the NeurIPS Code of Ethics. 
    \item[] Guidelines:
    \begin{itemize}
        \item The answer NA means that the authors have not reviewed the NeurIPS Code of Ethics.
        \item If the authors answer No, they should explain the special circumstances that require a deviation from the Code of Ethics.
        \item The authors should make sure to preserve anonymity (e.g., if there is a special consideration due to laws or regulations in their jurisdiction).
    \end{itemize}

\item {\bf Broader Impacts}
    \item[] Question: Does the paper discuss both potential positive societal impacts and negative societal impacts of the work performed?
    \item[] Answer: \answerNA{} 
    \item[] Justification: There is no societal impact of the work performed.
    \item[] Guidelines:
    \begin{itemize}
        \item The answer NA means that there is no societal impact of the work performed.
        \item If the authors answer NA or No, they should explain why their work has no societal impact or why the paper does not address societal impact.
        \item Examples of negative societal impacts include potential malicious or unintended uses (e.g., disinformation, generating fake profiles, surveillance), fairness considerations (e.g., deployment of technologies that could make decisions that unfairly impact specific groups), privacy considerations, and security considerations.
        \item The conference expects that many papers will be foundational research and not tied to particular applications, let alone deployments. However, if there is a direct path to any negative applications, the authors should point it out. For example, it is legitimate to point out that an improvement in the quality of generative models could be used to generate deepfakes for disinformation. On the other hand, it is not needed to point out that a generic algorithm for optimizing neural networks could enable people to train models that generate Deepfakes faster.
        \item The authors should consider possible harms that could arise when the technology is being used as intended and functioning correctly, harms that could arise when the technology is being used as intended but gives incorrect results, and harms following from (intentional or unintentional) misuse of the technology.
        \item If there are negative societal impacts, the authors could also discuss possible mitigation strategies (e.g., gated release of models, providing defenses in addition to attacks, mechanisms for monitoring misuse, mechanisms to monitor how a system learns from feedback over time, improving the efficiency and accessibility of ML).
    \end{itemize}
    
\item {\bf Safeguards}
    \item[] Question: Does the paper describe safeguards that have been put in place for responsible release of data or models that have a high risk for misuse (e.g., pretrained language models, image generators, or scraped datasets)?
    \item[] Answer: \answerNA{} 
    \item[] Justification: The paper poses no such risks.
    \item[] Guidelines:
    \begin{itemize}
        \item The answer NA means that the paper poses no such risks.
        \item Released models that have a high risk for misuse or dual-use should be released with necessary safeguards to allow for controlled use of the model, for example by requiring that users adhere to usage guidelines or restrictions to access the model or implementing safety filters. 
        \item Datasets that have been scraped from the Internet could pose safety risks. The authors should describe how they avoided releasing unsafe images.
        \item We recognize that providing effective safeguards is challenging, and many papers do not require this, but we encourage authors to take this into account and make a best faith effort.
    \end{itemize}

\item {\bf Licenses for existing assets}
    \item[] Question: Are the creators or original owners of assets (e.g., code, data, models), used in the paper, properly credited and are the license and terms of use explicitly mentioned and properly respected?
    \item[] Answer: \answerNA{} 
    \item[] Justification: The paper does not use existing assets.
    \item[] Guidelines:
    \begin{itemize}
        \item The answer NA means that the paper does not use existing assets.
        \item The authors should cite the original paper that produced the code package or dataset.
        \item The authors should state which version of the asset is used and, if possible, include a URL.
        \item The name of the license (e.g., CC-BY 4.0) should be included for each asset.
        \item For scraped data from a particular source (e.g., website), the copyright and terms of service of that source should be provided.
        \item If assets are released, the license, copyright information, and terms of use in the package should be provided. For popular datasets, \url{paperswithcode.com/datasets} has curated licenses for some datasets. Their licensing guide can help determine the license of a dataset.
        \item For existing datasets that are re-packaged, both the original license and the license of the derived asset (if it has changed) should be provided.
        \item If this information is not available online, the authors are encouraged to reach out to the asset's creators.
    \end{itemize}

\item {\bf New Assets}
    \item[] Question: Are new assets introduced in the paper well documented and is the documentation provided alongside the assets?
    \item[] Answer: \answerNA{} 
    \item[] Justification: The paper does not release new assets.
    \item[] Guidelines:
    \begin{itemize}
        \item The answer NA means that the paper does not release new assets.
        \item Researchers should communicate the details of the dataset/code/model as part of their submissions via structured templates. This includes details about training, license, limitations, etc. 
        \item The paper should discuss whether and how consent was obtained from people whose asset is used.
        \item At submission time, remember to anonymize your assets (if applicable). You can either create an anonymized URL or include an anonymized zip file.
    \end{itemize}

\item {\bf Crowdsourcing and Research with Human Subjects}
    \item[] Question: For crowdsourcing experiments and research with human subjects, does the paper include the full text of instructions given to participants and screenshots, if applicable, as well as details about compensation (if any)? 
    \item[] Answer: \answerNA{} 
    \item[] Justification: The paper does not involve crowdsourcing nor research with human subjects.
    \item[] Guidelines:
    \begin{itemize}
        \item The answer NA means that the paper does not involve crowdsourcing nor research with human subjects.
        \item Including this information in the supplemental material is fine, but if the main contribution of the paper involves human subjects, then as much detail as possible should be included in the main paper. 
        \item According to the NeurIPS Code of Ethics, workers involved in data collection, curation, or other labor should be paid at least the minimum wage in the country of the data collector. 
    \end{itemize}

\item {\bf Institutional Review Board (IRB) Approvals or Equivalent for Research with Human Subjects}
    \item[] Question: Does the paper describe potential risks incurred by study participants, whether such risks were disclosed to the subjects, and whether Institutional Review Board (IRB) approvals (or an equivalent approval/review based on the requirements of your country or institution) were obtained?
    \item[] Answer: \answerNA{} 
    \item[] Justification: The paper does not involve crowdsourcing nor research with human subjects.
    \item[] Guidelines:
    \begin{itemize}
        \item The answer NA means that the paper does not involve crowdsourcing nor research with human subjects.
        \item Depending on the country in which research is conducted, IRB approval (or equivalent) may be required for any human subjects research. If you obtained IRB approval, you should clearly state this in the paper. 
        \item We recognize that the procedures for this may vary significantly between institutions and locations, and we expect authors to adhere to the NeurIPS Code of Ethics and the guidelines for their institution. 
        \item For initial submissions, do not include any information that would break anonymity (if applicable), such as the institution conducting the review.
    \end{itemize}

\end{enumerate}



\end{document}